\definecolor{asparagus}{rgb}{0.53, 0.66, 0.42}
\definecolor{applegreen}{rgb}{0.55, 0.71, 0.0}
\newcommand{\RowColorb}{\rowcolor{blue!15}}
\newcommand{\RowColorc}{\rowcolor{orange!15}}
\newcommand{\lratio}[1]{\setlength{\hsize}{#1\hsize}} 
\newcolumntype{Y}{>{\lratio{0.65}}>{\centering\arraybackslash}X}
\DeclareMathOperator*{\argmin}{\arg\!\min}
\newcommand*{\shifttext}[2]{%
  \settowidth{\@tempdima}{#2}%
  \makebox[\@tempdima]{\hspace*{#1}#2}%
}
\newtheorem{assumption}[theorem]{Assumption}
\newcommand\independent{\protect\mathpalette{\protect\independenT}{\perp}}
\def\independenT#1#2{\mathrel{\rlap{$#1#2$}\mkern2mu{#1#2}}}
\newcommand{\source}[1]{\centering #1}
\newenvironment{code}[1]{
    \, \textbf{Code}: #1
}{}
\begin{document}

\title{Deep Out-of-Distribution Uncertainty Quantification via \\ Weight Entropy Maximization}

\author{\name Antoine de Mathelin$^{1, 2}$ \email{antoine.de\_mathelin@ens-paris-saclay.fr} \\
       \name François Deheeger$^{1}$ \email francois.deheeger@michelin.com \\
       \name Mathilde Mougeot$^{2}$ \email mathilde.mougeot@ens-paris-saclay.fr \\
       \name Nicolas Vayatis$^{2}$ \email nicolas.vayatis@ens-paris-saclay.fr \\
       \addr $^{1}$Manufacture Française des pneumatiques Michelin,
       Clermont-Ferrand, 63000, France \\
       \addr $^{2}$Centre Borelli, Université Paris-Saclay, CNRS, ENS Paris-Saclay, Gif-sur-Yvette, 91190, France
       }

\editor{Maya Gupta}

\maketitle

\begin{abstract}%
This paper deals with uncertainty quantification and out-of-distribution detection in deep learning using Bayesian and ensemble methods. It proposes a practical solution to the lack of prediction diversity observed recently for standard approaches when used out-of-distribution \citep{ovadia2019CanYouTrustYourModel, Liu2021PerilDeepOOD}. Considering that this issue is mainly related to a lack of weight diversity, we claim that standard methods sample in ``over-restricted'' regions of the weight space due to the use of ``over-regularization'' processes, such as weight decay and zero-mean centered Gaussian priors. We propose to solve the problem by adopting the maximum entropy principle for the weight distribution, with the underlying idea to maximize the weight diversity. Under this paradigm, the epistemic uncertainty is described by the weight distribution of maximal entropy that produces neural networks ``consistent'' with the training observations. Considering stochastic neural networks, a practical optimization is derived to build such a distribution, defined as a trade-off between the average empirical risk and the weight distribution entropy. We provide both theoretical and numerical results to assess the efficiency of the approach. In particular, the proposed algorithm appears in the top three best methods in all configurations of an extensive out-of-distribution detection benchmark including more than thirty competitors.
\end{abstract}

\begin{keywords}
Epistemic uncertainty, out-of-distribution detection, deep ensemble, Bayesian neural networks, maximum entropy
\end{keywords}
\begin{code}
    \url{https://github.com/antoinedemathelin/maxwent-expe}
\end{code}

\section{Introduction}
\label{intro}

In many practical deep learning scenarios, neural network models are deployed on unknown data distributions that can significantly differ from the training distribution. For instance, when building deep learning models of object detection for autonomous cars, the training dataset cannot cover any potential situation that the model can encounter, in terms of weather conditions, geography, or camera obstructions for examples. In this context, the learner aims at providing confidence guarantees on the model prediction for any data belonging to the whole input space, including data outside the support of the training distribution. This task is related to \textit{epistemic uncertainty} quantification and \textit{out-of-distribution} (OOD) detection for deep learning \citep{kendall2017EpistemicUncertainties, Shen2021OODSurvey}. In the epistemic uncertainty quantification framework, the learner aims at estimating the potential discrepancy between the estimated hypothesis and the optimal predictor. When dealing with neural networks, the set of hypotheses is typically very large and many of them provide a low empirical risk on the training observations. Informally, this collection of hypotheses that are \textit{consistent} with the training data form a subset of relevant candidates for the optimal predictor. The prediction uncertainty for a novel input observation is then described by the prediction diversity of the consistent hypotheses \citep{hullermeier2021aleatoricEpistemic}.

In the case of universal approximators such as neural networks, a proxy of the epistemic uncertainty can be estimated by computing the distance to the support of the training set. For example, if the considered set of hypotheses is the set of $k$-Lipschitz functions, the pointwise prediction discrepancy between two consistent hypotheses is bounded by a value proportional to the distance to the training inputs  \citep{sullivan2013optimalUncertaintyLipschitz, Malherbe2017GlobalOptim, deMathelin2021DBAL}. Methods developed under this paradigm are referred to as \textit{distance-based} uncertainty quantifiers, which include, for instance, derivative of Gaussian processes \citep{rasmussen2003gaussianprocess}, Deterministic Uncertainty Quantification (DUQ) \citep{vanAmersfoort2020DUQ}, Mahalanobis distance \citep{lee2018MahalanobisOODdetect} or Deep Nearest Neighbors \citep{Sun2022KNNOOD}. The main challenge faced by distance-based uncertainty approaches is to find a relevant notion of distance to use \citep{Liu2022DistanceAwarness}. For high-dimensional machine learning problems, using the Euclidean distance in the input space is generally irrelevant and one looks for geometric distances computed in encoded spaces. For instance, \citet{Liu2022DistanceAwarness} and \citet{Cao2022deepHybridModelOOD} develop distance preserving networks using spectral normalization. Finally, computing the distance to the training distribution support can also be performed by density estimation techniques, such as auto-encoders or GANs, which have been used for OOD detection \citep{zhou2022AutoEncoderOOD, ryu2018GANOOD}. The distance to the training set is then computed through the reconstruction error of the decoder or by the predicted likelihood of the discriminator.

The main alternative to the distance-based approach consists in directly looking for a set of hypotheses that are consistent with the training observations and to use the diversity of their predictions as uncertainties. It essentially includes ensemble and Bayesian methods \citep{Lakshminarayanan2017DeepEnsemble, mackay1992bayesianNetwork}. The ongoing challenge of this approach is to produce diversity in the ensemble of networks, i.e., to avoid sampling similar hypotheses. It has been observed, indeed, that most of the main baselines lead to a lack of prediction diversity, in particular outside the training support, i.e., for out-of-distribution data \citep{ovadia2019CanYouTrustYourModel, Liu2021PerilDeepOOD, Angelo2021BayesianNotSuited4OOD}. Facing this issue, several attempts propose to increase the prediction diversity by adding a penalizing term to the loss. For instance, negative correlation methods penalize the correlation between the outputs of the ensemble members on the training data \citep{liu1999negativecorrelation, shui2018negativecorrelation, zhang2020NegCorr}. Related methods, referred to as \textit{contrastive} approaches, penalize small output variances on synthetic OOD data produced by sampling uniformly in the input space \citep{Jain2020MOD, Mehrtens2022MODplus} or in the neighborhood of the training instances \citep{Lakshminarayanan2017DeepEnsemble, Segonne2022OODpseudoInputs}. The drawback of these methods is the lack of generalization to any OOD data that the model can encounter \citep{Cao2022deepHybridModelOOD}. Alternative approaches consist in penalizing the similarity between the ensemble members in the parameter space \citep{pearce2018AnchorNetwork, Angelo2021RepulsiveDeepEnsemble}, with the underlying assumption that an ensemble of neural networks with weights distant from each other produces diversified outputs. In this work, we advocate that the key feature for producing accurate uncertainty quantification for any input data point is to sample in the \textit{whole} space of consistent hypotheses. Indeed, we argue that standard Bayesian and ensemble methods often provide over-confident predictions for OOD data because the hypotheses they produce are sampled in restricted regions of the consistent hypothesis space due to weight decay regularization and hyper-parameters selection based on hold-out validation.

% The authors advocate that this technique provides a sample of hypotheses at the edge of the set of consistent hypotheses,

% In this work, we advocate that the key feature 

% Under this paradigm, a recent method, called Deep Anti-Regularized Ensemble (DARE), proposes an anti-regularizing process which penalizes small weights in the network while maintaining the training loss under an acceptable threshold \citep{deMathelin2023DARE}. The authors advocate that this technique provides a sample of hypotheses at the edge of the set of consistent hypotheses, resulting in increased prediction diversity, especially for OOD data.

Considering stochastic neural networks with parameterized weight distribution \citep{jospin2022BNNTutorial}, we cast the problem as a trade-off between sampling in low empirical risk regions and increasing the weight diversity. We consider the entropy as a measure of weight diversity, and show that the optimization boils down to solving a maximum entropy problem \citep{jaynes1968priorprobability}, where we aim at selecting the weight distribution of maximal entropy under the constraint that the training loss is acceptable. We derive a practical optimization formulation to solve this problem, called Maximum Weight Entropy (\textbf{MaxWEnt}), and show that it can be tackled with stochastic variational inference \citep{hoffman2013stochasticVI} using the reparameterization trick \citep{Kingma2013VAE}. The proposed optimization consists in penalizing the training loss with a term imposing the \textit{increase} of the weight distribution entropy. We provide a theoretical framework to understand the dynamic of this approach and show that the spread of the weight distribution is inversely proportional to the neuron activation amplitude for the training data. Numerical experiments conducted on several regression and classification datasets demonstrate the strong benefit of this approach in OOD detection compared to state-of-the-art methods dedicated to this task (e.g., Figure \ref{toy-intro}).

\begin{figure}[h]
    \centering
    \begin{minipage}{0.24\linewidth}
        \centering
        \includegraphics[width=\linewidth]{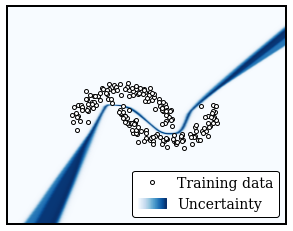} % first figure itself
    \end{minipage}
    \begin{minipage}{0.24\linewidth}
        \centering
        \includegraphics[width=\linewidth]{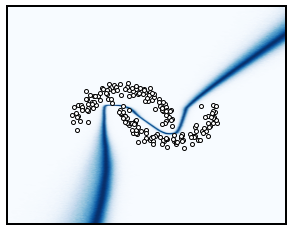} % first figure itself
    \end{minipage}
    \begin{minipage}{0.24\linewidth}
        \centering
        \includegraphics[width=\linewidth]{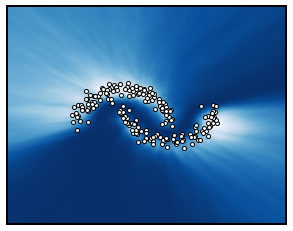} % second figure itself
    \end{minipage}
        \begin{minipage}{0.24\linewidth}
        \centering
        \includegraphics[width=\linewidth]{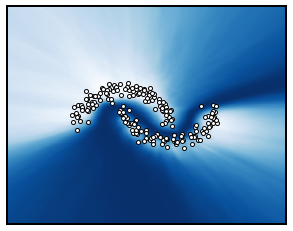} % second figure itself
    \end{minipage} \\
    \begin{minipage}{0.24\linewidth}
        \centering
        \includegraphics[width=\linewidth]{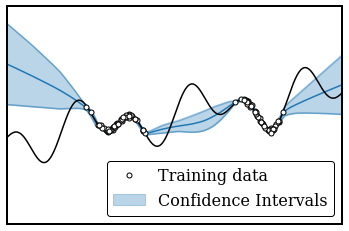} % first figure itself
        \source{(a) Deep Ensemble}
    \end{minipage}
    \begin{minipage}{0.24\linewidth}
        \centering
        \includegraphics[width=\linewidth]{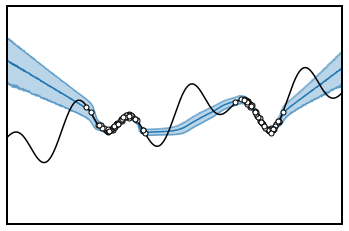} % first figure itself
        \source{(b) MC-Dropout}
    \end{minipage}
    \begin{minipage}{0.24\linewidth}
        \centering
        \includegraphics[width=\linewidth]{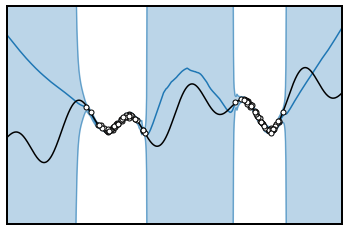} % second figure itself
        \source{(c) MaxWEnt}
    \end{minipage}
    \begin{minipage}{0.24\linewidth}
        \centering
        \includegraphics[width=\linewidth]{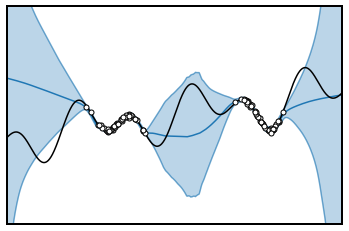} % second figure itself
        \source{(d) MaxWEnt + Clip}
    \end{minipage}
    \caption{Uncertainty Estimation Comparison. Above: ``two-moons'' 2D classification dataset. Below: 1D-regression \citep{Jain2020MOD}. Deep Ensemble (a) and MC-Dropout (b) produce overconfident estimations outside the training support due to a lack of hypothesis diversity. In the classification experiment, the hypotheses produced by both methods are restricted to half-space separators. There is no prediction uncertainty in the upper left and lower right areas of the input space, despite the lack of training data in these regions. In contrast, MaxWEnt (c, d) provides a clear discrimination between the in-distribution and out-of-distribution domains in terms of prediction uncertainty. Figure (c) presents the result obtained with MaxWEnt when no regularity assumption is made on the labeling function. In this case, the uncertainty quickly increases when leaving the training support, which truly represents the epistemic uncertainty in the absence of prior knowledge about the labeling function. Figure (d) reports the MaxWEnt uncertainty estimation when considering Lipschitz constraints. This result can be obtained with a small modification of the previous MaxWEnt model in the form of weight clipping. The full description of these synthetic experiments is reported in Section \ref{synth-expe}.}
    \label{toy-intro}
\end{figure}

 % For classification, uncertainty estimates, in shades of blue, are computed with the average of prediction entropy over multiple predictions (darker areas correspond to higher uncertainty). For regression, the ground-truth is represented in black and the predicted confidence intervals of length $4 \sigma_w(x)$  in light blue, with $\sigma_w(x)$ computed as the standard deviation over multiple predictions. We observe that

    % Figure (c) presents the result obtained with MaxWEnt when no regularity assumption is made on the labeling function. Figure (d) presents the result obtained when adding Lipschitz constraint. The full description of the synthetic experiments is presented in Section \ref{synth-expe}.

\section{Assessing Epistemic Uncertainty for OOD Prediction: Setup and Challenges}

\subsection{Notations}
\label{main-notation}

We consider the supervised learning framework provided with the input space $\mathcal{X}$ of finite dimension $b \in \mathbb{N}$, and the output space $\mathcal{Y}$. We denote by $p^{*}\!(x, y)$ the ``ground truth'' law defined over $\mathcal{X} \times \mathcal{Y}$. Furthermore, we distinguish the \textit{in-distribution} and \textit{out-of-distribution} domains by considering that only a subset $\mathcal{D}_{\mathcal{X}} \subset \mathcal{X}$ can be sampled. The subset $\mathcal{D}_{\mathcal{X}}$ is called ``training domain'' and any data from the complementary $\mathcal{X} \setminus \mathcal{D}_{\mathcal{X}}$ is considered to be ``out-of-distribution''. We assume that the learner has access to the training set $\mathcal{S} = \{(x_1, y_1), ..., (x_n, y_n) \} \in \mathcal{D}_{\mathcal{X}} \times \mathcal{Y}$ of size $n \in \mathbb{N}$ where the training instances $(x_i, y_i)$ are supposed independently identically distributed (iid) according to the joint distribution $p(x, y)$ defined over $\mathcal{D}_{\mathcal{X}} \times \mathcal{Y}$ and verifying $p(y|x) = p^{*}\!(y|x) \; \forall \, x \in \mathcal{D}_{\mathcal{X}}$. We consider a continuous loss function $\ell: \mathcal{Y} \times \mathcal{Y} \to \mathbb{R}_+$ and define the \textit{optimal predictor} $f^*: \mathcal{X} \to \mathcal{Y}$ as follows:
\begin{equation*}
\label{opt-predictor-eq}
    f^*(x) = \argmin_{y' \in \mathcal{Y}} \int_{y \in \mathcal{Y}} \ell(y', y) \, d p^*\!(y|x).
\end{equation*}
We denote $\mathcal{H}$ a set of neural networks of a specified architecture, mapping $\mathcal{X}$ to $\mathcal{Y}$. The set $\mathcal{H}$ is assumed to be ``large''. We denote $\mathcal{W} \subset \mathbb{R}^d$ ($d \in \mathbb{N}$) the set of weights corresponding to the hypotheses in $\mathcal{H}$. The best hypothesis in the class, $h^*$, referred to as the \textit{optimal hypothesis}, is defined according to the following expression:
\begin{equation*}
    h^* = \argmin_{h \in \mathcal{H}} \int_{x \in \mathcal{X}} \int_{y \in \mathcal{Y}} \ell(h(x), y) \, d p^{*}\!(x, y) .
\end{equation*}
Finally, for any $h \in \mathcal{H}$, we define the empirical risk as follows:
\begin{equation*}
\mathcal{L}_{\mathcal{S}}(h) = \frac{1}{n} \sum_{(x, y) \in \mathcal{S}} \ell(h(x), y),
\end{equation*}
denoted indifferently $\mathcal{L}_{\mathcal{S}}(w)$, when considering the weights $w \in \mathcal{W}$ associated to the hypothesis $h \in \mathcal{H}$, also referred as $h_w$.

\subsection{The epistemic uncertainty is described by the set of consistent hypotheses}
\label{sec-consistent-hypo}

The general purpose of uncertainty quantification is to provide, for any $x \in \mathcal{X}$, a distribution in the label space $\mathcal{Y}$, describing the potential outputs that can be associated with $x$, knowing the training observations $\mathcal{S}$ and prior information about $p^*(x, y)$. In this work, we distinguish the following four sources of uncertainty:

\begin{enumerate}
    \item \textbf{Aleatoric uncertainty}: the intrinsic random noise of the data, i.e., $p^*(y|x)$. This uncertainty cannot be reduced, even with an infinite number of observations (e.g., the outcome of a coin flip).
    \item \textbf{Model uncertainty}: the discrepancy between $f^*$ and $h^*$. The model uncertainty is related to the choice of the hypothesis set $\mathcal{H}$. It can be reduced by increasing the size of $\mathcal{H}$ or by acquiring prior knowledge about $f^*$ (e.g., Lipschitz constraints).
    \item \textbf{Statistical uncertainty}: the partial knowledge about $p(x, y)$ given by the finite number of data $\mathcal{S}$. This uncertainty, also referred to as \textit{approximation uncertainty} \citep{hullermeier2021aleatoricEpistemic} or \textit{data variability} \citep{huang2021quantifyingEpistemic}, is linked to the discrepancy between $h^*$ and its estimation. It can be reduced by the acquisition of novel observations drawn according to $p(x, y)$ or by prior knowledge about the intrinsic random noise (e.g., a Gaussian homoscedastic noise of known variance).
    \item \textbf{Out-of-distribution uncertainty}: the absence of observation over the out-of-distribution domain $\mathcal{X} \setminus \mathcal{D}_{\mathcal{X}}$. This uncertainty can remain large even with an infinite number of training observations. Indeed, for complex hypotheses as neural networks, different hypotheses can match $h^*(x)$ on $\mathcal{D}_{\mathcal{X}}$ but produce different outputs on $\mathcal{X} \setminus \mathcal{D}_{\mathcal{X}}$.
\end{enumerate}

% \com{ADD INSIGHT WHAT IS THE PURPOSE OF DISTINGUISHING UNCERTAINTY SOURCES IN THE CONTEXT OF MAXWENT (cf. Q1 R1): implications of each source of uncertainty: model: need good prior, stat: need data, OOD: need large support. MaxWENt focus on stat and OOD, if model high maxwent not able to capture the epistemic adequatly.}

The first three sources of uncertainty are described in detail in \citet{hullermeier2021aleatoricEpistemic}, while source (4) is an additional distinction of the epistemic uncertainty similar to the framework introduced in \citet{Liu2022DistanceAwarness}. The distinction between these four sources of uncertainty is useful for positioning our contribution. In this work, we focus essentially on the estimation of the statistical and OOD uncertainty. We assume, indeed, that $f^*$ is close to $\mathcal{H}$, i.e., $h^* \simeq f^*$, and then neglect the model uncertainty, which is a common assumption when considering a large set of hypotheses such as a set of deep neural networks \citep{hullermeier2021aleatoricEpistemic}. In the following, the term \textit{epistemic uncertainty} will refer to the combination of the statistical and OOD uncertainty.

It should be noted that, on the one hand, as we consider model uncertainty to be negligible, our approach may not effectively capture the epistemic uncertainty if this assumption does not hold. On the other hand, since our method is designed to address OOD uncertainty, it remains useful even when the statistical uncertainty is low (e.g., when a very large amount of training data is available). Indeed, multiple hypotheses can perform optimally on the training distribution while providing different predictions on OOD data. Our method specifically aims to capture this variability.

% The first three sources of uncertainty are described in details in \citep{hullermeier2021aleatoricEpistemic}, sources (2) and (3) are referred to as \textit{epistemic uncertainty}, and are related to the lack of knowledge about $f^*$. Source (4) is an additional distinction of the epistemic uncertainty, similar to the setup introduced in \citep{Liu2022DistanceAwarness}. This distinction is useful to understand the out-of-distribution detection task. In the following, we focus our uncertainty estimation on the epistemic uncertainty (sources (2-4)), moreover, considering the denseness property of neural networks, we assume that $f^*$ is close to $\mathcal{H}$, i.e., $h^* \simeq f^*$, and then neglect the model uncertainty. Our work then focus on the two last sources, which are related to the indetermination of the optimal hypothesis $h^*$.

Because of lack of complete knowledge, the learner cannot perfectly determine the optimal hypothesis $h^*$ and then the optimal predictions $h^*(x)$. If no data is available, the prediction uncertainty for $x \in \mathcal{X}$ is given by the distribution of the predicted values $h(x)$ for all hypotheses $h \in \mathcal{H}$. When acquiring more observations, the learner can discriminate between relevant and irrelevant candidates for $h^*$, i.e., between ``consistent'' and ``inconsistent'' hypotheses with respect to the observations $\mathcal{S}$ (assuming that a notion of ``consistency'' can be formally defined). By denoting $\mathcal{H}_\mathcal{S}$ the set of \textit{consistent hypotheses}, the epistemic uncertainty for the prediction of the model for $x$ is then given by the distribution of predictions $h(x)$ with $h \sim \mathcal{H}_\mathcal{S}$.

The notion of consistency depends on the underlying assumptions that the user considers about the data sample $\mathcal{S}$. A strong assumption is the ``no noise'' framework, where the user assumes that the optimal hypothesis necessarily verifies $h^*(x) = y$ for any $(x, y) \in \mathcal{S}$. In this case, the set of consistent hypotheses is the set: $\mathcal{H}_\mathcal{S} = \{ h \in \mathcal{H}; \; h(x) = y \}$ \citep{mitchell1977versionSpace}. In general, the user assumes a moderated noise level. Then, the notion of consistency is related to the empirical risk $\mathcal{L}_{\mathcal{S}}(h)$, such that consistent hypotheses provide ``low'' empirical risk on $\mathcal{S}$. For instance, if the user is only interested in deploying models with greater accuracy than $\tau = 0.99$, then the set of consistent hypotheses is defined as $\mathcal{H}_\mathcal{S} = \{ h \in \mathcal{H}; \; \mathcal{L}_{\mathcal{S}}(h) \leq 1-\tau \}$ (assuming that $\ell$ is the 0-1 loss). In the Bayesian setting, a noise model, $p(y | x, h)$ is generally assumed (e.g., a Gaussian noise of unknown mean and variance), then a gradual notion of consistency is obtained through the likelihood of the hypothesis $h \in \mathcal{H}$ given the sample $\mathcal{S}$, i.e., $p(h | \mathcal{S})$ \citep{Angelo2021RepulsiveDeepEnsemble}.

\subsection{The main limitation of epistemic uncertainty estimation for deep learning}

\label{main-limit}

Based on the previous considerations, the epistemic uncertainty estimation is then considered accurate when the learner is able to determine the whole set of consistent hypothesis $\mathcal{H}_\mathcal{S}$ (or to determine the likelihood of any hypotheses in the Bayesian framework). However, as $\mathcal{H}$ is an infinite set, computing the empirical risk for any hypothesis from $\mathcal{H}$ to determine which hypothesis belong to $\mathcal{H}_\mathcal{S}$ is impossible. Moreover, with deep neural network hypotheses, determining the subspace $\mathcal{H}_\mathcal{S}$ is generally intractable, because of the non-linear relationship between the neural network parameters and the empirical error.

To overcome this issue, common practice consists in using empirical risk minimization algorithms to produce a sample or a distribution of consistent hypotheses. To avoid sampling always the same empirical risk minimizer, deep ensemble methods use random initialization and random batch order with early stopping \citep{Lakshminarayanan2017DeepEnsemble}, while Bayesian neural networks algorithms learn a weight distribution \citep{kendall2017EpistemicUncertainties}. Although such approaches foster hypothesis diversity, they cannot guarantee to produce a representative sample of the \textit{whole} set of consistent hypotheses. Moreover, common practices in deep learning training induce important biases which narrow the sampling in a restricted region of the consistent hypotheses' subspace. For instance, the use of weight decay ($\ell_2$ penalization) and random weights initialization of relatively small variance (e.g., equal to the inverse of the number of neurons in the layer, \citealt{Glorot10GlorotUniform}) drive the sample in low weight regions. Consistent hypotheses with high weights are then excluded, even though they can explain the observations as well, but in a different way, which would contribute to increase the potential prediction diversity. Similarly, in the Bayesian framework, it has been recently observed that the most commonly used prior, i.e., the Gaussian centered prior, is ``unintentionally informative'' \citep{wenzel2020howgoodBayesPosterior}. Finally, the evaluation of uncertainty quantification methods and their hyper-parameters selection is traditionally driven by the negative-log-likelihood metric (NLL) computed over a validation dataset belonging to the training domain \citep{liu1999negativecorrelation, pearce2018AnchorNetwork, Jain2020MOD}. However, such practice does not account for the epistemic uncertainty out-of-distribution and then does not foster methods which accurately estimate it. This issue is illustrated by the four bottom graphics of Figure \ref{toy-intro}, the four methods provide almost the same prediction uncertainty on the training domain, their validation NLL is then similar, but their OOD epistemic uncertainty estimation is very different.

Therefore, we identify the inability of standard approaches to produce a representative sample of consistent hypotheses as their main limitation. We argue that this limitation is the principal cause of their lack of prediction diversity for OOD data, observed recently \citep{ovadia2019CanYouTrustYourModel, Liu2021PerilDeepOOD, Angelo2021BayesianNotSuited4OOD} (cf. Section \ref{discuss-overfitting}).

\section{Weight Entropy Maximization}

The main contribution of this work is the development of a practical algorithm to produce a sample of hypotheses that tends to be representative of the whole space of consistent hypotheses. Considering stochastic neural networks, we propose to learn the scale parameters of a distribution over the network weights, centered on a hypothesis of low empirical risk, with the double objective of minimizing the average empirical risk and maximizing the distribution diversity, measured through the weight entropy.

\subsection{A principle to sample the whole space of consistent hypotheses}
\label{optim-form}

We consider the stochastic neural network approach, where samples of hypotheses are produced through a parameterized weight distribution $q_{\phi}$ in the set $\Phi = \{ q_{\phi} \}_{\phi \in \mathbb{R}^D}$ composed of several distributions over $\mathcal{W}$ parameterized by $\phi \in \mathbb{R}^D$, with $D \in \mathbb{N}$ the parameter dimension. We propose to penalize the average training risk over $q_{\phi}$ with the entropy of the weight distribution, leading to the following optimization formulation:

\begin{equation}
\label{tradeoff-optim}
    \min_{\phi \in \mathbb{R}^D} \; {\mathbb{E}}_{q_{\phi}}\left[ \mathcal{L}_{\mathcal{S}}(w) \right] - \lambda {\mathbb{E}}_{q_{\phi}}\left[ -\log(q_{\phi}(w)) \right],
\end{equation}
with  $\lambda \in \mathbb{R}_+$ the trade-off parameter.

\begin{itemize}
    \item The first term: ${\mathbb{E}}_{q_{\phi}}[\mathcal{L}_{\mathcal{S}}(w)]$ of the optimization objective in Equation (\ref{tradeoff-optim}) is the average empirical risk over the weight distribution. This term induces the increase of the probability mass $q_{\phi}(w)$ in regions where the weights $w \in \mathcal{W}$ produce accurate hypotheses on the training dataset, i.e., where $\mathcal{L}_{\mathcal{S}}(w)$ is small.
    \item The second term: $-\lambda {\mathbb{E}}_{q_{\phi}}\left[ -\log(q_{\phi}(w)) \right]$ in Equation (\ref{tradeoff-optim}) is a penalty that induces the increase of the weight entropy, which is generally related to expand the support of the weight distribution $q_{\phi}$ as broad as possible.
\end{itemize}

% \com{CHANGE SENTENCE ON PEAKED WEIGHT DISTRIB}

It should be underlined that both terms in Equation (\ref{tradeoff-optim}) evolve in opposite direction with respect to the weight distribution: the first term induces a weight distribution concentrated around any set of weights with minimal error on $\mathcal{S}$, while the second term induces a uniform distribution over the whole weight space. To solve this trade-off, the weight distribution tends to flatten in regions of little impact on the empirical risk, while remaining concentrated in directions where a small weight perturbation causes an important risk increase. The theoretical analysis in Section \ref{analysis} shows, indeed, that the distribution spread of the weights is inversely proportional to the neuron activation amplitude. The weight variance is then larger for weights in front of neurons weakly activated by the training data. This theoretical result is supported by numerical results observed on synthetic datasets in Section \ref{synth-expe} which provide a direct illustration of this link between the neuron activation and the weight variance (cf. Figure \ref{layer-amplitude}). 

Objective (\ref{tradeoff-optim}) can be understood as a maximum entropy problem \citep{Jaynes1957InfoTheory}, where, in the presence of partial information about the optimal weight, the uncertainty is best described by the distribution of low-risk hypotheses with maximal entropy. In the Bayesian neural network setting, a similar objective can be derived through the ELBO formulation by using the prior of maximum entropy \citep{jaynes1968priorprobability}, which, in this case, is the uniform distribution over $\mathcal{W}$ (see Section \ref{bayesian-discussion}). To highlight the link between our proposed approach and the maximum entropy principle, we call the method: Maximum Weight Entropy (\textbf{MaxWEnt}) in reference to the general maximum entropy modeling framework, commonly named MaxEnt \citep{berger1996maximumEntropyNLP}. Formulating the epistemic uncertainty quantification as a maximum entropy problem offers a natural classification among the weight distributions $q_{\phi} \in \Phi$. Between two weight distributions that provide the same level of empirical risk on the training data, the user should select the one with the largest entropy. The maximum entropy paradigm also offers an interesting guideline to drive the selection of the weight distribution family $\Phi$: the user should foster weight parameterization that enables larger increases of the entropy, as the SVD-parameterization described in Section \ref{svd-param-section}.

\subsection{The maximum weight entropy algorithm under general weight parameterization}
\label{impl}

Equation (\ref{tradeoff-optim}) is solved through stochastic gradient descent with mini-batches. To compute the expectation over $q_{\phi}$, we use the reparameterization trick \citep{Kingma2013VAE, rezende2014stochasticbackpropBayes}. We introduce a sampling variable $z \sim \mathcal{Z}$ with $\mathcal{Z}$ a distribution over $\mathbb{R}^d$ and a parameterization function $\omega: \mathbb{R}^d \times \mathbb{R}^d \to \mathbb{R}^d$ such that: $w = \omega(z, \phi)$. Typically, $z$ follows a distribution that can be numerically sampled as the normal or uniform distribution. In case of simple parameterization, the weight entropy can be directly derived from the weight parameters $\phi$, such that there exists a function $H: \mathbb{R}^d \to \mathbb{R}$ verifying $H(\phi) = {\mathbb{E}}_{\mathcal{Z}}\left[ -\log(q_{\phi}(\omega(z, \phi))) \right]$. This leads to the following objective function, computed on a mini-batch of data $\mathcal{S}_b \subset \mathcal{S}$ of size $B>0$:

\begin{equation}
\label{objective}
    G(\phi, \mathcal{S}_{b}) = {\mathbb{E}}_{\mathcal{Z}}\left[\mathcal{L}_{\mathcal{S}_{b}}(\omega(z, \phi)) \right] - \lambda \, H(\phi).
\end{equation}

By sampling $z^{(1)}, ..., z^{(N)}$ iid according to $\mathcal{Z}$, we can compute an estimation of the objective function gradient for each mini-batch as follows:

\begin{equation}
\label{gradient}
    \nabla_{\phi} G(\phi, \mathcal{S}_{b}) \simeq \nabla_{\phi} \left[ \frac{1}{N} \sum_{j=1}^N \mathcal{L}_{\mathcal{S}_{b}}(\omega(z^{(j)}, \phi))  - \lambda \, H(\phi) \right].
\end{equation}

Note that choosing $N=1$ appears to be sufficient, in practice, to obtain efficient results \citep{Kingma2013VAE}. Several gradient updates are performed until convergence to obtain the estimated parameters $\hat{\phi}$. The training part of the algorithm is summarized in Algorithm \ref{alg-training}. For inference on $x \in \mathcal{X}$, a set of $P$ predictions ($P \in \mathbb{N}^*$) is obtained by sampling multiple $z^{(j)} \sim \mathcal{Z}$ with $j \in [|1, P|]$, and computing the corresponding outputs $\{ h_{w_j}(x); \; w_j = \omega(z^{(j)}, \hat{\phi}) \}_{j \in [|1, P|]}$ (cf. Algorithm \ref{alg-infer})

\noindent
\begin{minipage}[t]{0.51\textwidth}
\begin{algorithm}[H]
    %\algsetup{linenosize=\tiny}
	\caption{MaxWEnt Training}
	\label{alg-training}
    	\begin{algorithmic}[1]
    	    \STATE \textbf{Inputs}: Training set $\mathcal{S}$, learning rate $\nu$, trade-off $\lambda$, batch size $B$, parameterization $\omega$ 
            \STATE \textbf{Outputs}: Scaling vector $\phi$
            \STATE \textbf{Init}: $\phi \in \mathbb{R}^d$
            \smallskip
    	\WHILE {stopping criterion is not reached}
    	\STATE $z \sim \mathcal{Z}$,  $\mathcal{S}_b \sim \mathcal{U}(\mathcal{S}^B)$
    	\STATE $\phi \! \leftarrow \! \phi - \nu \nabla_{\phi} \left[ \mathcal{L}_{\mathcal{S}_b}(\omega(z, \phi)) - \lambda H(\phi) \right]$
    		\smallskip
    		\ENDWHILE
	    \end{algorithmic}
\end{algorithm}
\end{minipage}
\hfill
\begin{minipage}[t]{0.46\textwidth}
\begin{algorithm}[H]
    %\algsetup{linenosize=\tiny}
	\caption{MaxWEnt Inference}
	\label{alg-infer}
    	\begin{algorithmic}[1]
    	    \STATE \textbf{Inputs}: Input data $x$, parameterization $\omega$, scaling vector $\phi$, sample size $P$ 
            \STATE \textbf{Outputs}: Prediction sample $(\hat{y}_1, ..., \hat{y}_P)$
    	\FOR {$1 \leq i \leq P$}
    	\STATE $z \sim \mathcal{Z}$;
            \STATE $w \leftarrow \omega(z, \phi)$
    	\STATE $\hat{y}_{i}  \leftarrow  h_{w}(x)$
    		\smallskip
    	\ENDFOR
	    \end{algorithmic}
\end{algorithm}
\end{minipage}

\subsection{Examples of weight parameterization}
\label{weight-param}

Obviously, the choice of the weight parameterization $\omega$ has an important impact on the resulting weight distribution. In line with the purpose of the MaxWEnt approach, the guidelines for choosing $\omega$ should follow these three principles: enable the sampling in regions of accurate hypotheses, foster the increase of the weight entropy and be practical to use. Moreover, one should consider weight parameterizations that provide a tractable formulation of the weight entropy $H(\phi)$.

\subsubsection{Scaling Parameterization}

Following the aformentionned guidelines, we consider the sampling variable $z \sim \mathcal{Z}$ such that ${\mathbb{E}}[z] = 0, {\mathbb{V}}[z] = \text{Id}_d$ and propose the ``scaling'' parameterization defined as follows:
\begin{equation}
\label{weight-reparam}
    \omega(z, \phi) = \overline{w} + \phi \odot z.
\end{equation}
Where $\odot$ is the element-wise product between two vectors, such that $\phi \odot z = (\phi_1 z_1, ..., \phi_d z_d)$ with $\phi = (\phi_1, ..., \phi_d) \in \mathbb{R}^d$ and $z = (z_1, ..., z_d) \in \mathbb{R}^d$. The weight vector $\overline{w} \in \mathbb{R}^d$ is the weight mean $\mathbb{E}_{q_{\phi}}[w] = \overline{w}$. It is typically defined as the weights of a pretrained network $h_{\overline{w}}$ fitted on the training data. For $\mathcal{Z}$ defined as a normal $\mathcal{N}(0, \text{Id}_d)$ or uniform distribution $\mathcal{U}([-\sqrt{3}, \sqrt{3}]^d)$, the parameters $\phi = (\phi_1, ..., \phi_d)$ act as scaling factors: the higher $\phi_k$, the wider the distribution $w_k \sim \overline{w}_k + \phi_k z_k$.

The scaling parameterization (\ref{weight-reparam}) meets the three previous requirements for a relevant choice of stochastic model. The mean of the weight distribution verifies $\mathbb{E}_{q_{\phi}}[w] = \overline{w}$ with $\overline{w}$ the weights of a pretrained network fitted on $\mathcal{S}$, the weight distribution is then centered in a region of the weight space of low empirical risk. If $\phi \simeq 0$, the resulting weight distribution is equivalent to a peaked distribution around $\overline{w}$, which meets the first objective to provide samples of accurate hypotheses. Moreover, the weight entropy is directly controlled by the parameters $\phi$ : when $\phi$ increases, the weight distribution becomes wider and the entropy increases. We show, indeed, in the next section, that the weight entropy $H(\phi)$ can be expressed directly as a function of $\phi$. Finally, it can be noticed that the scaling parameterization only involves element-wise multiplications, which makes it practical to compute.

\subsubsection{SVD Parameterization}
\label{svd-param-section}

We show, through the theoretical analysis developed in Section \ref{analysis}, that the increase of the $\phi$ parameters is inversely proportional to the neuron activation amplitude. Indeed, if a neuron is weakly activated by the training data, all the weights $w_k$ in front of this neuron have little impact on the network predictions in the training domain. Therefore, the parameters $\phi_k$ can be enlarged without degrading the average empirical risk ${\mathbb{E}}_{q_{\phi}}\left[ \mathcal{L}_{\mathcal{S}}(w) \right]$. In the extreme case, if the neuron is never activated by the training data (it always returns $0$), then the parameters $\phi_k$ can go to infinity without impacting the network outputs on the training domain. Based on this theoretical observation, we argue that the weight entropy can be further increased without impacting the training risk by taking into account the correlation between neurons. Indeed, let's consider, for instance, two neurons of the same hidden layer, totally correlated, both with activation amplitude $a > 0$ on average on the training data. The scales of the weights $w_k$ in front of these neurons will verify $\phi_k \propto 1/a$. However, by expressing the outputs of these neurons in their singular value decomposition basis, the novel representation is now composed of one component of average amplitude $a$ and the other of null amplitude. In that case, some parameters $\phi_k$ can be further increased without impacting the training risk. Motivated by these arguments, we propose the ``SVD'' parameterization described in the following subsection.

Let's consider a pretrained neural network $h_{\overline{w}}$ of $L$ hidden layers. We denote $\psi_{(l)}(X) \in \mathbb{R}^{n \times b_l}$ the hidden representation of the input data $X \in \mathbb{R}^{n \times b}$ in the $l^{th}$ layer of $h_{\overline{w}}$, with $b_l$ the hidden layer dimension (i.e., the number of neurons). The singular values decomposition of $\psi_{(l)}(X)$ is written: $\psi_{(l)}(X) = U_{(l)} S_{(l)} V_{(l)}$ with $U_{(l)} \in \mathbb{R}^{n \times n}, S_{(l)} \in \mathbb{R}^{n \times b_l}$ and $V_{(l)} \in \mathbb{R}^{b_l \times b_l}$. We propose the SVD parameterization, which consists in ``aligning'' the weight distribution with the principal components of $\psi_{(l)}(X)$ such that:
\begin{equation}
\label{SVD-param-nocompact}
 w_{(l)} = \overline{w}_{(l)} + V_{(l)}^T (\phi_{(l)} \odot z_{(l)}),
\end{equation}
for any $l \in [|0, L|]]$, where $w_{(l)}, \overline{w}_{(l)}, \phi_{(l)}, z_{(l)} \in \mathbb{R}^{b_l \times b_{l+1}}$ are respectively the matrix of weights, average weights, scaling parameters and sampling variables between the $l^{th}$ layer and the next layer. A compact formulation of the parameterization can be written as follows:

\begin{equation}
\label{SVD-param}
    \omega(z, \phi) = \overline{w} + V(\phi \odot z).
\end{equation}
Where $V$ denotes the block matrix: $V = \big[V_{(1)}^T, ..., V_{(1)}^T, V_{(2)}^T, ..., V_{(L)}^T \big]$ of dimension $\sum b_l \times b_{l+1}$.

Similar to the previous one, the SVD parameterization fulfills the guidelines. Indeed, the weight distribution is still centered on $\overline{w}$, which ensures to sample in a weight space region of low empirical risk. Moreover, the weight entropy can be increased by enlarging the $\phi$ parameters. This can be done more efficiently compared to the previous approach due to the integration of the neurons' correlations (cf. Section \ref{analysis-weight-param}). The SVD parameterization requires additional computational time compared to the scaling one, due to the SVD decomposition and the matrix multiplication. It should be noticed that the SVD decomposition for each layer is computed only once. Before the stochastic gradient descent, a forward pass of the training data in $h_{\overline{w}}$ is required to compute each hidden representation $\psi_{(l)}(X)$, then the SVD decomposition of $\psi_{(l)}(X)$ is performed to compute the matrix $V_{(l)}$. However, the matrix multiplications between $V_{(l)}$ and $\phi_{(l)} \odot z_{(l)}$ are performed at each gradient update, which requires an additional computational burden during the gradient descent compared to the scaling parameterization (cf. Section \ref{discuss-svd} for the complexity calculation). Finally, we show in the next section, that a similar expression of the weight entropy $H(\phi)$ can be written in function of $\phi$ for both parameterizations.

\subsubsection{Weight Entropy Formulation Under Scaling and SVD Parameterizations}

The following proposition states that the previous weight parameterizations provide a closed-form expression of the weight entropy $H(\phi)$:

\begin{proposition}[Closed-form expression of the weight entropy]
\label{entropy-closeform}
Let $q_{\phi}$ be a weight distribution described by Equation (\ref{weight-reparam}) or (\ref{SVD-param}) with $z \sim \mathcal{Z}$ and $\phi > 0$. If $\mathcal{Z}$ is defined as the normal $\mathcal{N}(0, \textnormal{Id}_d)$ or the uniform distribution $\mathcal{U}([-\sqrt{3}, \sqrt{3}]^d)$, there exists two constants $C_1, C_2$ such that the weight entropy $H(\phi)$ is expressed as follows:
    \begin{equation}
    \label{entropy}
    H(\phi) = C_1 \sum_{k=1}^d \log(\phi^2_k) + C_2,
    \end{equation}
with $\phi = (\phi_1, ..., \phi_d) \in \mathbb{R}^d$, the scaling parameters of the weight distribution $q_{\phi}$.
\end{proposition}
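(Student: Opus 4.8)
The plan is to recognize $H(\phi)$ as the differential entropy of the random vector $W = \omega(z,\phi)$ and to exploit the fact that both parameterizations write $W$ as an invertible \emph{affine} map of the base variable $z$. The central tool is the change-of-variables identity for differential entropy: if $W = Az + b$ for an invertible matrix $A$ and a fixed vector $b$, then $H(W) = H(z) + \log|\det A|$, since translation by $b$ leaves entropy unchanged and the linear part rescales the density by $|\det A|^{-1}$. First I would note that $H(z)$ is a finite constant depending only on $d$ and the choice of base law: for $z \sim \mathcal{N}(0,\textnormal{Id}_d)$ one has $H(z) = \tfrac{d}{2}\log(2\pi e)$, and for $z \sim \mathcal{U}([-\sqrt{3},\sqrt{3}]^d)$ the factorized uniform gives $H(z) = d\log(2\sqrt{3})$; in neither case does $H(z)$ depend on $\phi$, so it will be absorbed into $C_2$.

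For the scaling parameterization (\ref{weight-reparam}), $\omega(z,\phi) = \overline{w} + \textnormal{diag}(\phi)\,z$, so $A = \textnormal{diag}(\phi)$ and $\log|\det A| = \sum_{k=1}^d \log|\phi_k| = \tfrac{1}{2}\sum_{k=1}^d \log(\phi_k^2)$. An even more elementary route here is that the coordinates $w_k = \overline{w}_k + \phi_k z_k$ are independent, so $H(\phi)$ is the sum of the one-dimensional entropies of $\overline{w}_k + \phi_k z_k$, each equal to the base entropy plus $\log|\phi_k|$. Either way we obtain $C_1 = \tfrac{1}{2}$ and $C_2 = H(z)$.

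For the SVD parameterization (\ref{SVD-param}), $\omega(z,\phi) = \overline{w} + V(\phi\odot z) = \overline{w} + V\,\textnormal{diag}(\phi)\,z$, hence $A = V\,\textnormal{diag}(\phi)$ and $\log|\det A| = \log|\det V| + \tfrac{1}{2}\sum_{k=1}^d \log(\phi_k^2)$. The remaining point is to show $|\det V| = 1$: the map $V$ is block-diagonal across layers, each block applying the right-singular-vector matrix $V_{(l)}$ of $\psi_{(l)}(X)$ column-wise to the weight slices; since each $V_{(l)}$ is orthogonal (being a factor of an SVD), every block is an orthogonal map (a Kronecker product of $V_{(l)}$ with an identity), so $V$ is orthogonal and $|\det V| = 1$. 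The orthogonal factor therefore contributes nothing, and we recover the same $C_1 = \tfrac{1}{2}$ and $C_2 = H(z)$ as in the scaling case.

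The only genuinely delicate step is the SVD determinant: one must check that assembling the orthogonal blocks $V_{(l)}$ into the full vectorized weight map yields a volume-preserving transformation, so that its Jacobian cancels out of the entropy. A secondary caveat is that every argument requires $A$ to be invertible, i.e. $\phi_k \neq 0$ for all $k$; when some $\phi_k = 0$ the distribution degenerates onto a lower-dimensional affine subspace and both sides of (\ref{entropy}) diverge to $-\infty$, so the identity should be read in the natural limiting sense. Beyond these points, the argument is routine once the change-of-variables identity is invoked.
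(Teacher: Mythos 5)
Your proof is correct, and it reaches the paper's result by a more unified route. The paper handles the two base distributions separately: for the Gaussian case it first shows (Lemma form) that $A(\phi \odot z)$ is distributed as $\mathcal{N}(0, A\,\textnormal{diag}(\phi^2)A^T)$ and then re-derives the Gaussian entropy formula $C + \tfrac{1}{2}\log|\textnormal{det}(\Sigma)|$ from scratch; for the uniform case it identifies the transformed variable as uniform over a parallelotope and computes the entropy as the log-volume. Both computations are really instances of the single identity you invoke, $H(Az+b) = H(z) + \log|\textnormal{det}(A)|$, so your argument collapses the two cases into one and makes transparent that the base law only enters through the constant $C_2 = H(z)$. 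What the paper's longer route buys is self-containedness (no appeal to the general change-of-variables property) and a common template: it writes both parameterizations as $w = \overline{w} + A(\phi \odot z)$ with $A$ orthogonal ($A = \textnormal{Id}_d$ or $A = V^T$) and exploits $|\textnormal{det}(A)| = 1$ exactly as you do. Two of your additions are genuine improvements over the paper's exposition: you actually justify why the assembled SVD matrix is orthogonal (block-diagonal structure with each block a Kronecker product $\textnormal{Id}\otimes V_{(l)}^T$ of an orthogonal factor with an identity), whereas the paper merely asserts it; and you flag the degeneracy at $\phi_k = 0$, where both sides of the identity diverge, which the paper leaves implicit. Your constants are also the correct ones ($C_1 = \tfrac{1}{2}$ in both cases); note that the paper's final display in the uniform case contains an algebraic slip at exactly this point, which your derivation avoids.
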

\begin{proof}
    The full proof is reported in Appendix \ref{proof-entropy-closeform}. The proof consists in considering that, for a normal distribution $\mathcal{N}(0, \Sigma)$ or for a uniform distribution defined over a parallelotope described by $\Sigma$, the entropy verifies $H(\phi) \propto \log(|\textnormal{det}(\Sigma)|)$. Then, by showing that for both parameterizations $\textnormal{det}(\Sigma) \propto \textnormal{det}(\textnormal{diag}(\phi))$, the above result can be derived.
\end{proof}

Note that, the $C_2$ constant can be removed in the objective function of Equation (\ref{objective}) as it does not impact the optimization and the $C_1$ constant can be integrated in the trade-off parameter $\lambda$. This expression of the entropy function is easy to implement. It highlights the direct link between the scale parameter $\phi_k$ and the weight entropy. When $\phi_k$ grows, the weight distribution becomes wider and the entropy increases.

\section{Theoretical Analysis}
\label{analysis}

In this section, we develop a theoretical framework to understand the MaxWEnt approach in the specific case where the loss function is defined by the mean squared error. We first develop theoretical results in the linear regression case, and further extend these results to deep fully-connected neural networks.

\subsection{Linear regression}
\label{linear-theory}

Linear regression can be seen as a particular case of deep fully-connected neural networks where the networks are composed of exactly two layers: the input layer of $b$ neurons and the output layer of $1$ neuron with linear activation function. The linear regression case is not representative of the framework considered in this work, as the hypotheses $h \in \mathcal{H}$ can no longer be considered as universal approximators. However, the following study provides valuable insights on what happen between the neurons of one hidden layer and one neuron of the next layer. In particular, we highlight the link between the scale parameters $\phi$ and the amplitude of the input features.

\subsubsection{Notations}

We consider the linear regression framework, where the learner has access to an input dataset $X \in \mathbb{R}^{n \times b}$ composed of $n$ row data $x_i \in \mathbb{R}^b$ drawn iid according to the distribution $p(x)$ and an output vector $y \in \mathbb{R}^n$ such that $y = (y_1, ..., y_n)$. Each input $x_i$ is associated to the scalar output $y_i \in \mathbb{R}$ drawn according to $p(y|x_i)$. We denote $\mathcal{S} = \{(x_1, y_1), ..., (x_n, y_n)\}$ the set of training observations. We consider the set $\mathcal{H} = \{x \to \sum_{k=1}^b x_k w_k; \; w \in \mathbb{R}^b\}$ of linear hypotheses. The loss function is the mean squared error, and we define the empirical risk for any weight $w \in \mathbb{R}^b$ as $\mathcal{L}_{\mathcal{S}}(w) = \frac{1}{n} ||X w - y||^2_2$. We denote by $a = (a_1, ..., a_b) \in \mathbb{R}_+^b$ the \textit{amplitude} of the input features of the training set, such that $a_j^2 = \frac{1}{n} ||X_j||^2_2$ for any $j \in [|1, b|]$, with $X_j$ the $j^{\text{th}}$ column of $X$. We assume that $a_j > 0$ for any $j \in [|1, b|]$.

\subsubsection{Scaling Weight Parameterization}

We first consider the weight parameterization defined in Equation (\ref{weight-reparam}) such that $q_{\phi} \sim \overline{w} + \phi \odot z$ with $z \sim \mathcal{Z}$ such that $\mathcal{Z} \sim \mathcal{N}(0, \textnormal{Id}_b)$ or $\mathcal{Z} \sim \mathcal{U}([-\sqrt{3}, \sqrt{3}]^b)$. The weight vector $\overline{w} \in \mathbb{R}^b$ is the weight mean: $\mathbb{E}_{q_{\phi}}[w] = \overline{w}$. Finally, we consider the entropy penalty $H(\phi)$ defined by $H(\phi) = \sum_{k=1}^b \log(\phi_k^2)$. The optimization problem (\ref{tradeoff-optim}) can then be written:

\begin{equation}
\label{linear-optim}
    \min_{\phi \in \mathbb{R}^b} {\mathbb{E}}_{\mathcal{Z}}\left[\frac{1}{n} \left|\left| X (\overline{w} + \phi \odot z) - y \right|\right|_2^2\right] - \lambda \sum_{k=1}^b \log(\phi_k^2).
\end{equation}

We show that the MaxWEnt optimization problem of Equation (\ref{linear-optim}) has a unique solution, which can be expressed with the following closed-form expression:

\begin{proposition}[Closed-form solution for the scaling parameterization]
\label{thm-lin-indep}
Equation (\ref{linear-optim}) has a \linebreak unique solution $\phi^* \in \mathbb{R}^b$ verifying for any $k \in [|1, b|]$:
\begin{equation*}
\label{thm-lin-indep-eq}
    {\phi_k^*}^2 = \frac{\lambda}{a_k^2}.
\end{equation*}
\end{proposition}
\begin{proof}
The proof consists in first developing the average risk as follows:
\begin{equation*}
    {\mathbb{E}}_{\mathcal{Z}}\left[ \frac{1}{n} || X (\overline{w} + \phi \odot z) - y ||_2^2 \right] = \sum_{k=1}^b \, a_k^2 \phi_k^2 + \frac{1}{n} || X \overline{w} - y ||_2^2.
\end{equation*}
Optimization (\ref{linear-optim}) can then be written:
\begin{equation}
\label{log-determinant}
    \min_{\phi \in \mathbb{R}^b} \sum_{k=1}^b \, a_k^2 \phi_k^2 - \lambda \sum_{k=1}^b \, \log(\phi_k^2).
\end{equation}
This is a convex problem, for which the derivative of the objective function with respect to $\phi^2$ is null for:
\begin{equation*}
    a_k^2 - \lambda / \phi_k^2 = 0.
\end{equation*}
A detailed proof is provided in Appendix \ref{appendix-thm-lin-indep}.
\end{proof}

This closed-form solution of $\phi^*$ is particularly insightful: ${\phi_k^*}$ is inversely proportional to $a_k^2$, which means that the optimal scale parameters ${\phi_k^*}$ are larger for weights in front of low amplitude features $a_k^2$. Applied to the hidden layers of a neural network, Proposition (\ref{thm-lin-indep}) states that the weight distribution is wider in front of neurons weakly activated by the training data. As a consequence, if an OOD data activates these neurons, large values are propagated through the network, which produces an important output variance. These statements are formalized in Section \ref{theory-multi} when considering deep fully connected neural networks. 

% Notice that a related result is derived in \citep{deMathelin2023DARE} for deep anti-regularized ensembles. The MaxWEnt entropy penalization can then be interpreted as an ``anti-regularization'' process, which fosters the sampling of large weight networks, and then provide larger prediction diversity (cf. Section \ref{discuss-overfitting}).

It can be further noticed that Equation (\ref{log-determinant}) is equivalent to a log determinant optimization problem \citep{boyd2006convex}. The maximum entropy optimization can then be interpreted as a maximum ellipsoid volume problem, where the volume $\prod \phi_k^2$ is maximized under the linear constraint $\sum_k a_k^2 \phi_k^2 \leq \lambda b$. If $\mathcal{Z}$ is a uniform distribution, this boils down to maximizing the support of the weight distribution while maintaining the average empirical risk on the training data under an acceptable threshold. This is in line with the purpose of the approach to find the weight distribution that covers as many consistent weights as possible.

\subsubsection{SVD Weight Parameterization}
\label{analysis-weight-param}

According to Proposition (\ref{thm-lin-indep}), the optimal scale parameters verify ${\phi^*}^2 = \lambda / a^2$. When injecting this solution in the entropy formulation, we obtain: $H(\phi) = - \sum \log(a_k^2) + \text{cste}$. Considering this formula, it appears clearly that the weight entropy is particularly important if some $a_k^2$ are small, i.e., if some input features have a low amplitude. However, in the presence of correlated features, all amplitudes $a_k^2$ may be high while the input training data may present small variation in some directions of the input space. The SVD parameterization (\ref{SVD-param}) proposes to exploit these directions of small variation by aligning the weight distribution with the singular value components of the input data. For this purpose, we now consider $V \in \mathbb{R}^{b \times b}$, the matrix of eigenvectors of $\frac{1}{n} X^T X$ and $s^2 = (s_1^2, ..., s_b^2) \in \mathbb{R}_+^b$ the vector of eigenvalues, and assume that $s_j>0$ for any $j \in [|1, b|]$. The SVD weight parameterization is written $w = \overline{w} + V (\phi \odot z)$ with $z \sim \mathcal{Z}$ and the MaxWEnt optimization problem (\ref{tradeoff-optim}) becomes:
\begin{equation}
\label{svd-optim}
\min_{\phi \in \mathbb{R}^b} {\mathbb{E}}_{\mathcal{Z}}\left[\frac{1}{n} \left|\left| X (\overline{w} + V (\phi \odot z) - y \right|\right|_2^2\right] - \lambda \sum_{k=1}^b \log(\phi_k^2).
\end{equation}

In comparison to the previous optimization problem in Equation (\ref{linear-optim}), there is now the presence of the matrix $V$ between $X$ and $\phi \odot z$. By definition of $V$, the matrix $X V$ is the expression of $X$ in its singular values basis. Thus, the vector $\phi \odot z$ is now aligned with the singular value components. As for the previous parameterization, the optimal parameter vector $\phi^*$ admits a closed-form expression as follows:
\begin{proposition}[Closed-form solution for the SVD parameterization]
\label{thm-lin-notindep}
Equation (\ref{svd-optim}) has a \linebreak unique solution $\phi^* \in \mathbb{R}^b$ verifying for any $k \in [|1, b|]$:
\begin{equation*}
\label{thm-lin-notindep-eq}
    {\phi_k^*}^2 = \frac{\lambda}{s_k^2}.
\end{equation*}
\end{proposition}
\begin{proof}
The proof consists in developing the average risk, such that:
\begin{equation*}
    {\mathbb{E}}_{\mathcal{Z}}\left[ \frac{1}{n} || X (\overline{w} + V (\phi \odot z)) - y ||_2^2 \right] = \sum_{k=1}^b \, s_k^2 \phi_k^2 + \frac{1}{n} || X \overline{w} - y ||_2^2.
\end{equation*}
Optimization (\ref{svd-optim}) is then written:
\begin{equation*}
    \min_{\phi \in \mathbb{R}^b} \sum_{k=1}^b \, s_k^2 \phi_k^2 - \lambda \sum_{k=1}^b \, \log(\phi_k^2),
\end{equation*}
which is similar to Equation (\ref{log-determinant}) with $s_k^2$ instead of $a_k^2$ (see Appendix \ref{appendix-thm-lin-notindep} for a detailed proof).
\end{proof}

Proposition (\ref{thm-lin-notindep}) states that the optimal parameters $\phi^*$ are now inversely proportional to the singular values of the training data instead of the feature amplitudes. We show, with the next Proposition, that this difference implies a larger weight entropy for the same level of average empirical risk.

\begin{proposition}[Comparison between scaling and SVD parameterization]
\label{thm-wf}
Let $q^{(1)}_{\phi^*}$, $q^{(2)}_{\phi^*}$ be the respective optimal weight distributions for the scaling and the SVD parameterization. The following propositions hold:
\begin{gather*}
    {\mathbb{E}}_{q^{(1)}_{\phi^*}}\left[ \mathcal{L}_{\mathcal{S}}(w) \right] = {\mathbb{E}}_{q^{(2)}_{\phi^*}}\left[ \mathcal{L}_{\mathcal{S}}(w) \right] \\
    {\mathbb{E}}_{q^{(1)}_{\phi^*}}\left[ -\log(q^{(1)}_{\phi^*}(w)) \right] \leq {\mathbb{E}}_{q^{(2)}_{\phi^*}}\left[ -\log(q^{(2)}_{\phi^*}(w)) \right] .
\end{gather*}
\end{proposition}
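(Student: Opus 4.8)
The plan is to reuse the closed-form solutions from Propositions~\ref{thm-lin-indep} and~\ref{thm-lin-notindep} together with the entropy formula of Proposition~\ref{entropy-closeform}, so that the entire statement reduces to a determinant inequality for the Gram matrix of the training data.

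First I would establish the equality of average risks. The proofs of Propositions~\ref{thm-lin-indep} and~\ref{thm-lin-notindep} already decompose the average empirical risk as $\sum_{k=1}^b a_k^2 \phi_k^2 + \tfrac{1}{n}\|X\overline{w}-y\|_2^2$ and $\sum_{k=1}^b s_k^2 \phi_k^2 + \tfrac{1}{n}\|X\overline{w}-y\|_2^2$ respectively. Substituting the optimal values $\phi_k^{*2}=\lambda/a_k^2$ and $\phi_k^{*2}=\lambda/s_k^2$ makes each product $a_k^2\phi_k^{*2}$ (resp.\ $s_k^2\phi_k^{*2}$) equal to $\lambda$, so both averages collapse to the common value $\lambda b + \tfrac{1}{n}\|X\overline{w}-y\|_2^2$. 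This proves the first line.

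For the entropy inequality, I would invoke Proposition~\ref{entropy-closeform} to write $H(\phi)=C_1\sum_{k=1}^b\log(\phi_k^2)+C_2$, noting that $C_1>0$ for both the normal and the uniform choice of $\mathcal{Z}$ (one has $C_1=\tfrac12$). Plugging in the two optimal solutions, the additive constant $C_2$ and the $b\log\lambda$ terms cancel, leaving $H^{(2)}(\phi^*)-H^{(1)}(\phi^*)=C_1\big(\sum_k\log a_k^2-\sum_k\log s_k^2\big)$. Since $C_1>0$, the claim is equivalent to $\prod_{k=1}^b a_k^2 \ge \prod_{k=1}^b s_k^2$.

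The key step is to recognise this as Hadamard's inequality. Writing $M=\tfrac1n X^TX$, the feature amplitudes are precisely the diagonal entries $a_k^2=M_{kk}$, whereas the $s_k^2$ are the eigenvalues of $M$, so $\prod_k s_k^2=\det(M)$. Because $M$ is symmetric positive definite (under the assumption $s_k^2>0$), Hadamard's inequality yields $\det(M)\le\prod_k M_{kk}$, that is $\prod_k s_k^2\le\prod_k a_k^2$, and multiplying the logarithm of this by $C_1>0$ delivers the result. The main obstacle is conceptual rather than computational: one must notice that the entropy gap is controlled exactly by the gap between the determinant and the product of diagonal entries of the Gram matrix, and then recall the precise form and sign convention of Hadamard's inequality; checking $C_1>0$ is also needed to fix the direction of the inequality.
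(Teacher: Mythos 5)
Your proof is correct and follows essentially the same route as the paper: the risk equality comes from substituting the closed-form solutions of Propositions~\ref{thm-lin-indep} and~\ref{thm-lin-notindep} into the decomposed average risk, and the entropy inequality reduces to $\prod_{k} s_k^2 \le \prod_{k} a_k^2$ via Hadamard's inequality for the Gram matrix $M = \tfrac{1}{n}X^TX$. The only cosmetic difference is that you invoke the positive-definite form of Hadamard's inequality, $\det(M) \le \prod_k M_{kk}$, directly, whereas the paper derives the same bound by applying the row-norm form of Hadamard to the symmetric square root $U = V^T\,\textnormal{diag}(s)\,V$ of $M$; your explicit check that $C_1 > 0$ (which fixes the direction of the inequality) is a point the paper leaves implicit.
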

\begin{proof}
The average empirical risk equality can be derived as follows:
\begin{equation*}
    {\mathbb{E}}_{q^{(1)}_{\phi^*}}\left[ \mathcal{L}_{\mathcal{S}}(w) \right] = \lambda \sum_{k=1}^b \frac{a_k^2}{a_k^2} + \epsilon = \lambda \, b + \epsilon = \lambda \sum_{k=1}^b \frac{s_k^2}{s_k^2} + \epsilon = {\mathbb{E}}_{q^{(2)}_{\phi^*}}\left[ \mathcal{L}_{\mathcal{S}}(w) \right],
\end{equation*}
with $\epsilon = \frac{1}{n} ||X \overline{w} - y ||_2^2$. The weight entropy inequality is derived from Hadamard's inequality. The detailed proof is reported in Appendix \ref{appendix-thm-compar}.
\end{proof}
In light of Proposition (\ref{thm-wf}), it appears that the SVD parameterization leads to a more efficient weight distribution according to the maximum entropy principle. Indeed, for the same level of explanation of the observations (same average empirical risk), the SVD parameterization provides more entropy. Experiments conducted on both synthetic and real datasets show that this last weight parameterization provides, indeed, a better evaluation of the epistemic uncertainty (cf. Section \ref{experiments}) which advocates in favor of the use of the entropy as a measure of weight distribution quality.

\subsection{Deep fully connected neural network}
\label{theory-multi}

In this subsection, we extend the previous result to deep fully connected networks under the mean squared error loss. In particular, we formally derive the connection between the neuron activation amplitude and the optimal scaling parameters suggested by Proposition (\ref{thm-lin-indep}).

\subsubsection{Notations}

We consider fully-connected neural networks $h_w \in \mathcal{H}$ of $L$ hidden layers with $w \in \mathcal{W}$. For the sake of simplicity, we assume that every hidden layer is composed of $b$ neurons with $b$ the dimension of the input data, the last layer  is composed of $1$ neuron such that the neural networks produce scalar outputs. For any $x \in \mathcal{X}$ and for any $l \in [|1, L|]$, $\psi_{(l)}(x) \in \mathbb{R}^b$ denotes the hidden representation of the input data $x$ in the $l^{th}$ layer; $\psi_{(0)}(x) \in \mathbb{R}^b$ and $\psi_{(L+1)} \in \mathbb{R}$ are respectively the input and output layer representation, such that $\psi_{(0)}(x) = x$ and $\psi_{(L+1)}(x) = h_w(x)$. Notice that the hidden representations depend on $w$; the notation $\psi_{(l)}(x)$ is a contraction of $\psi_{(l)}(x, w)$ or ${\psi_{(l)}}_w(x)$. The set of network weights verifies $\mathcal{W} \subset \mathbb{R}^d$, with $d = L b^2 + b$ the number of weights in the network (bias parameters are not considered here). For any weights $w \in \mathcal{W}$, $w_{(l, j)} \in \mathbb{R}^b$ denotes the weights between the layer $l$ and the $j^{th}$ components of the layer $l+1$ for $l \in [|0, L|]$ and $j \in [|1, b_l|]$, with $b_l = 1$ if $l = L$ and $b_l = b$ otherwise. We consider the activation function $\zeta: \mathbb{R} \to \mathbb{R}$ such that, for any $x \in \mathcal{X}$, any $l \in [|0, L-1|]$ and any $j \in [|1, b|]$, $\psi_{(l+1, j)}(x) = \zeta\left(\psi_{(l)}(x)^T w_{(l, j)} \right)$ with $\psi_{(l+1, j)}(x)$ the $j^{th}$ component of the hidden representation $\psi_{(l+1)}(x)$. The weight distributions are denoted $q_{\phi}$ with $\phi \in \mathbb{R}^d$. The loss function $\ell$ is the mean squared error and the problem to be solved is written:

\begin{equation}
\label{multi-layer}
    \min_{\phi \in \mathbb{R}^d} \; {\mathbb{E}}_{q_{\phi}}\left[\mathcal{L}_{\mathcal{S}}(w) \right] - \lambda \sum_{k=1}^d \log(\phi_k^2).
\end{equation}
We assume that Problem (\ref{multi-layer}) has a unique solution, denoted $\phi^* \in \mathbb{R}^d$.

\subsubsection{Scaling Weight Parameterization}
\label{multi-sec-scaling}

We focus our deep neural networks analysis on the scaling parameterization (\ref{weight-reparam}) such that $q_{\phi} \sim \overline{w} + \phi \odot z$ with $z \sim \mathcal{Z}$ where $\mathcal{Z} \sim \mathcal{N}(0, \textnormal{Id}_d)$ or $\mathcal{Z} \sim \mathcal{U}([-\sqrt{3}, \sqrt{3}]^d)$ and $\overline{w}$ the weight of a pretrained network $h_{\overline{w}}$. In the following, we aim at extending the results of Proposition (\ref{thm-lin-indep}) to the hidden layers of deep neural networks and show that the MaxWEnt optimization leads to scaling parameters inversely proportional to the neuron activation amplitude. For this purpose, we consider the following assumption on the activation function $\zeta$. Assumption (\ref{activ-func}) states that the order of the first and second moment of the neuron activation are preserved by $\zeta$. This assumption is verified, for instance, for most of the common activation functions, as ReLU or Leaky-ReLU, if the neuron activation follows a centered independent Gaussian distribution.

\begin{assumption}[Moments preserving property of the activation function]
\label{activ-func}
For any $\phi_1, \phi_2 \in \Phi$, $l \in [|0, L-1|]$ and any $j \in [|1, b|]$, the activation function $\zeta$ verifies:
\begin{gather*}
\sum_{i=1}^n {\mathbb{E}}_{q_{\phi_1}}\left[ U_{ij} \right] \leq \sum_{i=1}^n {\mathbb{E}}_{q_{\phi_2}}\left[ U_{ij} \right] \implies \sum_{i=1}^n {\mathbb{E}}_{q_{\phi_1}}\left[ \zeta \left( U_{ij} \right) \right] \leq \sum_{i=1}^n {\mathbb{E}}_{q_{\phi_2}}\left[ \zeta \left( U_{ij} \right)\right] \\
\sum_{i=1}^n {\mathbb{E}}_{q_{\phi_1}}\left[ U_i U_i^T \right] \preccurlyeq \sum_{i=1}^n  {\mathbb{E}}_{q_{\phi_2}}\left[ U_i U_i^T \right] \! \implies \! \sum_{i=1}^n  {\mathbb{E}}_{q_{\phi_1}}\left[ \zeta \left( U_i \right) \zeta \left( U_i \right)^T \right] \preccurlyeq \sum_{i=1}^n  {\mathbb{E}}_{q_{\phi_2}}\left[  \zeta \left( U_i \right) \zeta \left( U_i \right)^T \right]
\end{gather*}
Where $U_i = (U_{i1}, ..., U_{ib})$ and $U_{ij} = \psi_{(l)}(x_i)^T w_{(l, j)} \; \forall i \in [|1, n|], \; \forall j \in [|1, b|]$. For two matrices $A, B$, the notation $A \preccurlyeq B$ states that $B-A$ is a positive semi-definite matrix.
\end{assumption}

\begin{proposition}[Optimal scaling parameters]
\label{thm-multi-closeform}
Let $\phi^* \in \mathbb{R}^d$ be the unique solution of Problem (\ref{multi-layer}), then $\phi^*$ verifies:
    \begin{equation}
    \label{thm-multi-cond-phi-eq}
    \begin{gathered}
        \phi^* = \overset{L}{\underset{l=0}{\bigotimes}} \overset{b_l}{\underset{j=1}{\bigotimes}} \, \left( \phi^*_{(l, j, 1)}, ..., \phi^*_{(l, j, b)} \right) \\
        {\phi^*_{(l, j, k)}}^2 = \frac{\sigma_{(l, j)}^2}{b \, a_{(l, k)}^2} \quad \forall \, l \in [|1, L|]; \, j \in [|1, b_l|]; \, k \in [|1, b|].
    \end{gathered}
    \end{equation}
    Where $\bigotimes$ is the concatenation operator and for any $l \in [|0, L|]$, $j \in [|1, b_l|]$ and $k \in [|1, b|]$:
    \begin{gather*}
        a_{(l, k)}^2 = \frac{1}{n} \sum_{i=1}^n {\mathbb{E}}_{q_{\phi^*}} \left[ \psi_{(l, k)}(x_i)^2 \right] \\
        \sigma_{(l, j)}^2 = \frac{1}{n} \sum_{i=1}^n {\mathbb{V}}_{q_{\phi^*}} \left[ \psi_{(l)}(x_i)^T \left( w_{(l, j)} - \overline{w}_{(l, j)} \right) \right].
    \end{gather*}
\end{proposition}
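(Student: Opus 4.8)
The plan is to characterize $\phi^*$ through the first-order stationarity condition of the (assumed unique) minimizer of Problem (\ref{multi-layer}), combined with a variance-propagation argument that collapses the dependence of the risk on each weight block onto a single scalar, the pre-activation variance $\sigma_{(l,j)}^2$. Throughout I would exploit that $z$ is zero-mean with identity covariance and that the components $z_{(l,j,\cdot)}$ attached to the block $(l,j)$ are independent both of $\psi_{(l)}$ (which depends only on the weights of layers $0,\dots,l-1$) and of the variables attached to the other blocks. First I would decompose the average risk: since $\ell$ is the mean squared error and $z$ is centered, $\mathbb{E}_{q_\phi}[\mathcal{L}_{\mathcal{S}}(w)] = \frac1n\sum_i \mathbb{V}_{q_\phi}[h_w(x_i)] + \frac1n\sum_i(\mathbb{E}_{q_\phi}[h_w(x_i)]-y_i)^2$, so that only the variance term depends on $\phi$. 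Expanding the pre-activation $U_{ij}=\psi_{(l)}(x_i)^T\overline{w}_{(l,j)}+\sum_k\psi_{(l,k)}(x_i)\,\phi_{(l,j,k)}\,z_{(l,j,k)}$ and using that cross-terms in the centered, uncorrelated, unit-variance $z$ vanish in expectation, I would establish the self-consistency identity
\[
\sigma_{(l,j)}^2=\sum_{k=1}^b \phi_{(l,j,k)}^2\,a_{(l,k)}^2 .
\]

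The crux is to show that, for fixed $(l,j)$, the risk depends on the block $\phi_{(l,j,\cdot)}$ only through $\sigma_{(l,j)}^2$. Varying $\phi_{(l,j,\cdot)}$ changes the aggregate pre-activation second-moment matrix $\sum_i\mathbb{E}_{q_\phi}[U_iU_i^T]$ in its $(j,j)$ entry only, and by exactly $n\,\sigma_{(l,j)}^2$, because every cross-term involving the independent centered blocks $z_{(l,j,\cdot)}$ and $z_{(l,j',\cdot)}$ ($j'\neq j$) has zero expectation; the remaining diagonal and off-diagonal entries are untouched. Assumption (\ref{activ-func}) then propagates this monotone dependence forward: the aggregate second-moment matrix of each subsequent layer — and hence the aggregate output variance, i.e. the $\phi$-dependent part of the risk — is a monotone function of $\sum_i\mathbb{E}_{q_\phi}[U_iU_i^T]$, whose sole $\phi_{(l,j,\cdot)}$-dependent coordinate is driven by $\sigma_{(l,j)}^2$. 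I would thus write the variance part of the risk as $R_{(l,j)}(\sigma_{(l,j)}^2)$ with all other blocks held fixed, noting that for the linear output layer ($l=L$) this reduction is exact, whereas for the hidden layers it is precisely what Assumption (\ref{activ-func}) licenses.

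Finally I would impose stationarity. Differentiating the objective of Problem (\ref{multi-layer}), whose entropy term contributes $-\lambda\log(\phi_{(l,j,k)}^2)$, and applying the chain rule through the self-consistency identity (with $a_{(l,k)}^2$ constant in $\phi_{(l,j,\cdot)}$, since $\psi_{(l)}$ precedes the block), the optimum satisfies $R_{(l,j)}'(\sigma_{(l,j)}^2)\,a_{(l,k)}^2=\lambda/\phi_{(l,j,k)}^2$, i.e. $\phi_{(l,j,k)}^2 = \lambda/\bigl(R_{(l,j)}'(\sigma_{(l,j)}^2)\,a_{(l,k)}^2\bigr)$. Substituting back into the identity eliminates the unknown multiplier: $\sigma_{(l,j)}^2=\sum_k \lambda/R_{(l,j)}' = b\lambda/R_{(l,j)}'$, whence $R_{(l,j)}'=b\lambda/\sigma_{(l,j)}^2$, and re-inserting this value yields the claimed $\,{\phi^*_{(l,j,k)}}^2=\sigma_{(l,j)}^2/(b\,a_{(l,k)}^2)$, with the concatenated form following by ranging over all blocks.

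I expect the main obstacle to be the second step: justifying that only the aggregate second moment — and not the data-point-wise pre-activation variances $\sum_k\psi_{(l,k)}(x_i)^2\phi_{(l,j,k)}^2$ nor higher moments of $U_{ij}$ — enters the first-order condition. A priori the output variance depends on the whole distribution of the pre-activations, and these per-example variances do depend on the individual $\phi_{(l,j,k)}^2$ even when $\sigma_{(l,j)}^2$ is held fixed; it is exactly the monotone moments-preserving property of Assumption (\ref{activ-func}) that restricts the relevant dependence to the aggregate second-moment matrix. Once this reduction is granted, the remainder is the same stationarity bookkeeping already carried out in Propositions (\ref{thm-lin-indep}) and (\ref{thm-lin-notindep}).
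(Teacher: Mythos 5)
Your stationarity-based route is genuinely different from the paper's, but its central step has a real gap. Everything hinges on the reduction claim: with the other blocks fixed, the average risk depends on $\phi_{(l,j,\cdot)}$ only through the scalar $\sigma_{(l,j)}^2=\sum_k a_{(l,k)}^2\phi_{(l,j,k)}^2$, so that it can be written as a differentiable function $R_{(l,j)}(\sigma_{(l,j)}^2)$ and attacked by the chain rule. This claim is not delivered by Assumption (\ref{activ-func}), for two concrete reasons. First, your supporting premise that in the bias--variance decomposition ``only the variance term depends on $\phi$'' is false for a nonlinear network: $\mathbb{E}_{q_\phi}[h_w(x_i)]$ does move with $\phi$, because the mean of $\zeta(U)$ under a widening pre-activation distribution changes with that distribution's spread (e.g.\ for ReLU, widening a Gaussian pre-activation strictly increases the mean output), and the hidden representations feeding the linear output layer are exactly such nonlinear images. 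Second, Assumption (\ref{activ-func}) only lets you propagate the \emph{unweighted} aggregates $\sum_i\mathbb{E}[\,\cdot\,]$ and $\sum_i\mathbb{E}[\,\cdot\,\cdot^T]$ through the layers, whereas the risk $\frac{1}{n}\sum_i\mathbb{E}_{q_\phi}[(h_w(x_i)-y_i)^2]$ contains the label-weighted term $\sum_i y_i\,\mathbb{E}_{q_\phi}[h_w(x_i)]$ (and, once the first point is conceded, also $\sum_i\mathbb{E}_{q_\phi}[h_w(x_i)]^2$ inside the aggregate output variance); none of these is a function of the aggregates the assumption controls. Hence two block values with equal $\sigma_{(l,j)}^2$ are not shown to give equal risk, $R_{(l,j)}$ is not well defined, and the otherwise clean elimination argument that produces ${\phi^*_{(l,j,k)}}^2=\sigma_{(l,j)}^2/(b\,a_{(l,k)}^2)$ has nothing to stand on. You flagged this as the main obstacle yourself, but the assumption does not resolve it the way you hope.

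The paper's architecture is designed precisely to avoid needing such a functional reduction: it never differentiates the risk in $\phi$. It rewrites Problem (\ref{multi-layer}) as entropy maximization under a risk constraint, adds the per-block equality constraints $\sum_k a_{(l,k)}^2\phi_{(l,j,k)}^2=\sigma_{(l,j)}^2$ (which $\phi^*$ satisfies, so $\phi^*$ remains the unique solution), then \emph{drops} the risk constraint. The relaxed problem decouples into per-block problems of the form $\max_{\phi_{(l,j)}}\sum_k\log\phi_{(l,j,k)}^2$ subject to one explicit linear constraint, whose KKT conditions give the claimed closed form for a candidate $\phi^{\dagger}$. Assumption (\ref{activ-func}) is then invoked only to compare the two specific points $\phi^{\dagger}$ and $\phi^*$: by induction over layers their aggregate activation moments coincide, the paper concludes their risks coincide, so $\phi^{\dagger}$ is feasible with entropy at least that of $\phi^*$, and uniqueness forces $\phi^{\dagger}=\phi^*$. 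Comparing two fixed points is a much weaker requirement than the identity you need on a whole neighborhood of $\phi^*$ (and it also sidesteps any differentiability question about the risk). It is fair to note that the paper's final inference---equal risk from equal aggregate output moments---quietly faces the same label-weighting issue as your step; but your proposal inherits that difficulty \emph{and} adds the false premise about the mean predictions, and it cannot be completed without first establishing the reduction for all block values rather than for two.
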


\begin{proof}
     The full proof is reported in Appendix \ref{proof-thm-multi-closeform}. The main idea of the proof consists in first dividing Problem (\ref{multi-layer}) by layer and output neurons. The parameters $\phi_{(l, j, k)}$ defined in Equation (\ref{thm-multi-cond-phi-eq}) provide the solution for each sub-problem. Then, considering Assumption ($\ref{activ-func}$) on the activation function and the uniqueness of the solution, it can be shown that $\phi = \phi^*$.
\end{proof}

Proposition (\ref{thm-multi-closeform}) states that the solution $\phi^*$ of the MaxWEnt optimization (\ref{multi-layer}) is the inverse of the average neuron activation amplitude over the training data. We emphasize that the aim of Proposition (\ref{thm-multi-closeform}) is not to provide an exact solution (as the quantities $a_{(l, k)}^2$ and $\sigma_{(l, j)}^2$ are intractable) but to offer a theoretical understanding of MaxWEnt in the case of deep fully connected neural networks. Numerical observations described in Section \ref{theory-verif} confirm this ``inverse proportionality'' relationship between the scaling parameters and the neuron activation amplitude. This means that maximizing the weight entropy leads to put more emphasis on the activation of neurons that are weakly activated by the training data. Thus, it can be considered that these neurons act as ``detectors'' for the out-of-distribution data that activate them.

\section{Discussion}
\label{discussion}

\subsection{Overfitting, weight diversity and evaluation}
\label{discuss-overfitting}

In Section \ref{main-limit}, we identify the main limitation of standard ensemble and Bayesian approaches as their inability to produce a representative sample of the whole consistent hypothesis set. We argue that this limitation is related to the use of weight decay regularization and hyper-parameters selection driven by hold-out validation. Indeed, the use of weight decay for deep neural networks is first designed as a tool to avoid overfitting \citep{krogh1991simpleweightdecayL2reg}, with the underlying idea that large weights induce the over-specification of the network on the observations. This technique has proven to improve the model accuracy in most cases. However, when applied in ensemble and Bayesian learning, it induces the counter effect of penalizing the diversity of the resulting sample of neural networks. On the contrary,
our approach aims at jointly regularizing the sample of neural networks by increasing the entropy of their underlying weight distribution. Additionally, the use of a broad weight distribution avoids overfitting thanks to the marginalization process \citep{wilson2020caseOfBayesianDL}.

% anti-regularization fosters weight diversity \citep{deMathelin2023DARE}. The MaxWEnt optimization can be seen as a form of anti-regularization as it induces the sampling of large weights. Moreover, the use of broad weight distribution avoids overfitting thanks to the marginalization process \citep{wilson2020caseOfBayesianDL}. 

Regarding the use of hold-out validation for hyper-parameters selection, we claim that such a technique fosters narrowed weight distributions. As $q_{\phi}$ cannot model complex distribution, the covering of a large portion of consistent hypotheses generally comes with the inclusion of inconsistent weights in the support of the weight distribution. As a consequence, the in-distribution performance for distributions of high entropy is usually degraded (confirmed numerically in our experiments). Moreover, for a large number of training data, the in-distribution epistemic uncertainty becomes negligible in front of the aleatoric uncertainty. Its accurate estimation is then not required to obtain good validation NLL. However, for out-of-distribution data, the main source of uncertainty is epistemic, and its estimation is critical. Then, narrowed weights distributions, although improving the validation NLL, fail to produce relevant uncertainty quantification out-of-distribution \citep{ovadia2019CanYouTrustYourModel, Liu2021PerilDeepOOD, Angelo2021BayesianNotSuited4OOD}.

It should be underlined that, although MaxWEnt tends to enlarge the weight distribution, it cannot fully guarantee to capture the whole set of consistent hypotheses due to the technical limitation of the stochastic model $q_{\phi}$. However, the MaxWEnt approach is an important step in this direction. It already provides significant improvements compared to the baselines, as demonstrated by our numerical experiments.

\subsection{Bayesian neural networks}
\label{bayesian-discussion}

In the Bayesian variational inference framework, the learner aims at approximating the posterior distribution $p(w | \mathcal{S})$ with a parameterized distribution $q_{\phi}$ defined over $\mathcal{W}$. The minimization of the Kullback-Leibler (KL) divergence between $p(w | \mathcal{S})$ and $q_{\phi}$ leads to the maximization of the \textit{evidence lower bound} (ELBO) expressed as follows \citep{wenzel2020howgoodBayesPosterior}:
\begin{equation}
\label{elbo}
    \max_{\phi \in \mathbb{R}^D} \; {\mathbb{E}}_{q_{\phi}}\left[ \sum_{(x, y) \in \mathcal{S}} \log(p(y | h_w(x)) \right] - D_{\textnormal{KL}} \left( q_{\phi}(w), p(w) \right).
\end{equation}
Where $p(y | h_w(x))$ is the log likelihood of $y$ with respect to $h_w(x)$, $D_{\textnormal{KL}}$ is the Kullback-Leibler divergence and $p(w)$ is the prior distribution defined over $\mathcal{W}$.

If we consider a uniform prior over the whole weight space: $p(w) \sim \mathcal{U}(\mathcal{W})$ (assuming $\mathcal{W}$ bounded), the second term of the ELBO maximization: $D_{\textnormal{KL}} \left( q_{\phi}(w), p(w) \right)$, is equal to the negative entropy of $q_{\phi}$ (up to a constant). Therefore, if the empirical risk $\mathcal{L}_{\mathcal{S}}(w)$ can be written as a quantity proportional to the negative log-likelihood, the ELBO maximization (\ref{elbo}) is equivalent to the MaxWEnt optimization problem (\ref{tradeoff-optim}). This is in line with the application of the maximum entropy principle to the Bayesian framework \citep{jaynes1968priorprobability}, which states that the prior should be selected as the distribution of maximal entropy that integrates prior information. In our case, without any regularity assumption about the optimal hypothesis, the maximum entropy principle then leads to consider a uniform prior over the whole weight space $\mathcal{W}$ (bounded), i.e., $p(w) \sim \mathcal{U}(\mathcal{W})$. 

The use of ``uninformative'' parameter priors is considered as the guideline to model epistemic uncertainty in the Bayesian framework \citep{wilson2020caseOfBayesianDL}. In practice, however, the most commonly used priors for Bayesian neural networks are Dropout \citep{gal2016MCdropout, kendall2017EpistemicUncertainties, boluki2020learnableDropout} which has been shown to produce over-confident predictions for out-of-distribution data \citep{Liu2021PerilDeepOOD} and the isotropic Gaussian prior $p(w) \sim \mathcal{N}(0, \sigma_0^2 \, \text{Id}_d)$ \citep{zhang2018NoisyNaturalGradientVI, osawa2019practicalDLwithBayesian, jospin2022BNNTutorial}, which is recently considered to be often ``non-optimal'' or ``unintentionally informative'' \citep{wenzel2020howgoodBayesPosterior, fortuin2021bayesianPriorRevisited}.

When considering a Gaussian isotropic prior $p(w) \sim \mathcal{N}(0, \sigma_0^2 \, \text{Id}_d)$ with $\sigma_0 \in \mathbb{R}$ and an independent multivariate Gaussian stochastic model $q_{\phi} \sim \mathcal{N}(\mu, \text{diag}(\sigma^2))$ with $\mu, \sigma \in \mathbb{R}^d$ the mean and scale parameters such that $\phi = (\mu, \sigma)$, the following expression can be derived for the KL divergence between the approximate posterior and the prior \citep{duchi2007KLtwoGaussian}:
\begin{equation*}
\label{gaussian-kl}
    D_{\textnormal{KL}} \left( q_{\phi}(w), p(w) \right) =\frac{||\mu||_2^2}{2 \sigma_0^2} + \frac{1}{2} \sum_{k=1}^d  \left( \frac{\sigma_k^2}{\sigma_0^2}  - \log \left( \frac{\sigma_k^2}{\sigma_0^2} \right)  \right) - \frac{d}{2}.
\end{equation*}
From this expression, it appears that the KL divergence operates a ``double'' regularization regime on the scale parameters $\sigma$. When $\sigma_k^2$ is below $\sigma_0^2$, the term $-\log(\sigma_k^2 / \sigma_0^2)$ dominates $\sigma_k^2 / \sigma_0^2$, which induces the increase of the $\sigma_k^2$ parameter similar to the MaxWEnt penalization. Whereas, for $\sigma_k^2$ above $\sigma_0^2$, the dominant term becomes $\sigma_k^2  / \sigma_0^2$ which stops the increase of the scaling parameter. Then, for $\sigma_0 \to + \infty$, the regularization over $\sigma^2$ induced by the KL divergence converges to the maximum entropy penalization. However, as a side effect, the term $||\mu||_2^2 / 2 \sigma_0^2$ is reduced to zero and no regularization on the mean is operated, which is generally avoided. In many previous works which consider isotropic Gaussian priors, the commonly considered prior bandwidth $\sigma_0^2$ are relatively small \citep{zhang2018NoisyNaturalGradientVI, osawa2019practicalDLwithBayesian, ashukha2019pitfalls}, or at least, not designed in a maximum entropy perspective. Moreover, a trade-off parameter $\lambda < 1$ is often added between the log likelihood and the KL divergence in optimization (\ref{elbo}) \citep{wenzel2020howgoodBayesPosterior} which further tempers the KL divergence regularization. Our interpretation is that the hyper-parameter selection is often driven by the in-distribution performances (computed on a validation set for instance) which fosters narrowed posterior distributions. Indeed, extending the weight distribution to any consistent weight, generally penalizes the test performances as observed in our experiments (cf. Sections \ref{synthetic-discuss} and \ref{uci-results-sec}). However, we argue that such penalization could be accepted when considering OOD detection. Further insights on the key components that enable a Bayesian Neural Network (BNN) with a Gaussian prior to exhibit behaviors similar to those of MaxWEnt and MaxWEnt-SVD are provided in Appendix \ref{app-bnn-setting}.

\subsection{SVD-parameterization}
\label{discuss-svd}

The SVD-parameterization has been introduced in Section \ref{svd-param-section} (cf. Equation \ref{SVD-param}) with the aim of allowing a larger increase of the weight entropy while limiting the average empirical risk penalty. We argue, indeed, that using independent weight components in the stochastic model sets the directions of weight distribution expansion to the canonical basis of $\mathbb{R}^d$, which seems intuitively sub-optimal. We could include correlations between weight components as additional parameters to optimize in $\phi$. However, this solution would require the optimization of $\mathcal{O}(d^2)$ parameters which may become intractable, especially for large neural networks such as ResNet \citep{He2016ResNet}, for instance, for which $d > 10^6$. Through the SVD-parameterization, we propose to set the correlation between weight components, at each hidden layer, according to the singular value decomposition of the neuron activation on the training data. Our theoretical analysis in Section \ref{analysis-weight-param} shows, in the case of linear regression, that this weight parameterization provides the same level of average empirical risk as independent weight components but with larger weight entropy.

Previous works consider the use of weight correlations in stochastic model in the form of matrix Gaussian distribution \citep{louizos2016matrixGaussianPrior, sun2017MatrixGaussian} or through more sophisticated models such as weight distributions defined over ``well-chosen'' subspace of $\mathbb{R}^d$ \citep{izmailov2020subspaceInferenceBNN}, as well as normalizing flows \citep{louizos2017multiplicativeNormFlow} and implicit weight models \citep{pawlowski2017HyperNetBayes}. A notable use of correlation between weights is the Laplace approximations \citep{mackay1992practicalBayesian, foong2019inbetweenUncertainty, ritter2018scalableLaplace}, where the correlation matrix for a Gaussian model is given by a ``closed-form'' solution which can be computed using one forward and backward step through the network. Similarities can be observed between the Kronecker Laplace approximation \citep{ritter2018scalableLaplace} and the SVD-parameterization, as both method involve the correlation matrix of the neuron activation, but identifying the link between both methods would require further investigation. In our case, the parameters $\phi$ are still optimized through stochastic variational gradient descent, whereas the Laplace approximation does not require multiple gradient updates. As we manage to find a closed-form expression for $\phi^*$ in the linear case (cf. Propositions \ref{thm-lin-indep} and \ref{thm-lin-notindep}), interesting future work directions include ``Laplace-like'' approximation in the MaxWEnt framework, which can potentially speed up the computation of the parameters $\phi^*$.

Regarding the complexity of the SVD parameterization, we can consider the case of a fully connected neural network with $L$ layers of $b$ neurons each. Computing the SVD decomposition matrix $V$ (cf. Section \ref{svd-param-section}) requires one forward pass of the training inputs and the computation of the SVD decomposition at each layer with complexity $\mathcal{O}(L b^3)$ \citep{pan1999complexityEigenValueDecompo}. Storing the matrices adds $\mathcal{O}(L b^2)$ of memory burden, which is equivalent to $\mathcal{O}(d)$ with $d \in \mathbb{N}$ the dimension of the network weight vector. During the variational gradient descent, the matrix multiplication between the matrix $V$ and the vector $\phi \odot z$ has a complexity of order $\mathcal{O}(L b^3)$. For comparison, a forward pass with a batch of size $B$, for the scaling parameterization, is of complexity $\mathcal{O}(L B b^2)$. If we consider that $b \simeq B$ with $B$ the batch size, we can say that the SVD parameterization requires twice as much computational time as the scaling one, which corresponds approximately to what we observed in our experiments.

\subsection{Entropy function}
\label{entropy-func-discuss}

In the case of scaling (Equation \ref{weight-reparam}) or SVD parameterization (Equation \ref{SVD-param}), we manage to provide an expression of the entropy $H(\phi)$ function of $\phi$ (cf.  Equation \ref{entropy}), which is a convenient property to speed up the MaxWEnt optimization. For other weight parameterizations, one may not be able to derive such a closed-form expression. If the probability density function $q_{\phi}(w)$ can be computed, one can estimate the entropy through sampling, as done for the empirical risk. An alternative solution is to use a proxy of the entropy which is directly linked to the parameters $\phi$. If the entropy is a growing function of $\phi_k$ for any $k \in [|1, d|]$, we propose to consider the following general expression for the penalization term related to the entropy:
\begin{equation}
\label{entropy-general}
H(\phi) = \sum_{k=1}^d g_k(\phi^2_k),
\end{equation}
with $g_k: \mathbb{R}_+ \to \mathbb{R}$ predefined growing functions such that $\phi^2_k$ grows with $H(\phi)$. Typical choices are $g_k(u) = \log(u)$ or $g_k(u) = \sqrt{u}$. In the case, $g_k(u) = \log(u)$, Equation (\ref{entropy-general}) matches the entropy expression derived in Proposition (\ref{entropy-closeform}) within a constant factor. Equation (\ref{entropy-general}) can be seen as a ``proxy'' of the weight ``entropy'' as it increases with $\phi_k$ as the entropy.

We observe in our numerical experiments that alternative functions $g_k$ lead to improved results compared to the logarithm. Our interpretation is that the logarithm function over-penalizes small components $\phi_k$. Then, at initialization, the gradient norm of the objective function is large (as $\phi_k$ are initialized close to $0$) and the first gradient steps push the $\phi_k$ too far. We then advocate for the use of proxy functions, as $g_k(u) = \sqrt{u}$, to improve the optimization stability in practice.

% \com{MOVE IT?}

\subsection{Assessing epistemic uncertainty quantification methods}

Our work focuses on better quantifying epistemic uncertainty using Bayesian and ensemble approaches, as we identify the underestimation of epistemic uncertainty as the key reason why standard methods tend to provide overconfident predictions on OOD data.

As discussed in Section \ref{sec-consistent-hypo}, epistemic uncertainty is characterized by the set of consistent hypotheses. We argue that an optimal estimation of epistemic uncertainty should account for the entire set of these consistent hypotheses. This makes evaluating uncertainty quantification methods inherently challenging, as it requires knowledge of the full set of consistent hypotheses, which is the very goal these methods aim to uncover. Additionally, commonly used metrics for evaluating uncertainty—such as negative log-likelihood (NLL), coverage, expected calibration error (ECE)—do not adequately assess epistemic uncertainty. These metrics rely on the ground truth label and evaluate the probability assigned to it or whether it is covered by a predictive interval. However, epistemic uncertainty is more accurately evaluated by determining if all potential labels, given the observation set $\mathcal{S}$, are identified, rather than checking if the correct label is included in the confidence intervals.

Therefore, a commonly used ``proxy'' to evaluate the epistemic uncertainty quantification methods is OOD detection scores, with the underlying assumption that epistemic uncertainty should be larger for OOD data than ID data. This boils down to compute uncertainty scores for test data coming from the same distribution as the training data and an OOD dataset from a different distribution. If epistemic uncertainty is well estimated, the uncertainty scores should be larger for OOD data than test data. AUROC and FPR95 metrics can be used on the uncertainty scores to evaluate the OOD detection ability of the methods \citep{Yang2022openoodBenchmark}. 

A good illustration of the difficulty to evaluate epistemic uncertainty quantification methods with metrics such as NLL or coverage is provided in Figure \ref{citycam-weather-quali} presenting the results of uncertainty quantifiaction for a counting vehicle task. We can see that, after 17:30, because of water drops landing on the camera, many cars are hidden and are difficult to count. The human annotator has only counted visible cars, resulting in an under-estimation of the true number of vehicles present at this time. We can see that Deep Ensemble is rather confident in the number of vehicles and almost matches the number given by the human annotator. On the contrary, MaxWEnt detects the domain shift and triggers large prediction variances. If we evaluate the NLL or the coverage, Deep Ensemble will likely provide better scores than MaxWEnt. However, in this case, MaxWEnt is closer to the ``true'' epistemic uncertainty, as it is not possible to accurately count the actual number of vehicles.

\section{Related Work}

The main related works in distance-based and ensemble-based uncertainty quantification are presented in Section \ref{intro}. The vast uncertainty estimation literature also includes notable methods as conformal prediction \citep{vovk2005conformal, lei2018conformal, angelopoulos2020conformal}, calibration \citep{guo2017calibration, kuleshov2018CalibrationRegression} and evidential learning \citep{sensoy2018evidentialClassif, amini2020deepEvidentialReg}. Our focus in this present work is on the Bayesian and ensemble approaches, for which we propose a specific improvement through the MaxWEnt algorithm. Readers interested in the alternative approaches will find further details in the following surveys \citep{abdar2021UncertaintyQuantificationSurvey, Shen2021OODSurvey}.

\subsection{Deep ensembles and prediction diversity out-of-distribution}

The main challenge, faced by Bayesian and ensemble methods, is the lack of explicit correlation between the prediction diversity and the distance to the training domain, leading to the observation that standard methods in this category often produce over-confident predictions for OOD data  \citep{Angelo2021BayesianNotSuited4OOD, ovadia2019CanYouTrustYourModel, Liu2021PerilDeepOOD}.

As described in Section \ref{intro}, two main approaches are considered to increase the prediction diversity of deep ensemble, especially out-of-distribution: the first approach works on the diversity of the network outputs, gradients or hidden representations \citep{liu1999negativecorrelation, shui2018negativecorrelation, zhang2020NegCorr, ross2020EnsemblesLocallyIndependant,rame2021diceUncertainty, Sinha2021DIBS}. In this category, contrastive approaches make use of auxiliary real or synthetic OOD data \citep{pagliardini2022DBAT, Tifrea2022semiSupervisedOOD, kristiadi2022beingAbitFrequantistBNN, Jain2020MOD, Mehrtens2022MODplus, yu2019OODdetectMaxClassifDisc, wang2022partialContrastiveLongTail}. The second approach works on the hypothesis diversity through random initialization and different architectures \citep{Lakshminarayanan2017DeepEnsemble, Wen2020batchensemble, Wenzel2020hyperDeepEnsemble, Zaidi2021NIPSNES} or by imposing the weight diversity \citep{pearce2018AnchorNetwork, tagasovska2019singleModelUncertainty, Angelo2021RepulsiveDeepEnsemble, deMathelin2023DARE}.

These last methods particularly relate to MaxWEnt. In particular, the DARE algorithm \linebreak \citep{deMathelin2023DARE} produces a deep ensemble at the edge of the consistent hypothesis set by enlarging the network weights while maintaining the loss under an acceptable threshold. However, the DARE approach is limited to the use of the mean squared error loss function with linear end activation. Moreover, the DARE training requires the control of the penalization term to avoid numerical issues when the weights become too large. With the MaxWEnt approach, the training is more stable, as the weight distribution is centered on the weights $\overline{w}$ of a pretrained network. It also works with softmax activation because of the symmetric expansion of the weight distribution.

\subsection{Bayesian neural network priors and stochastic models}

Since the seminal work of Jaynes on Bayesian priors \citep{jaynes1968priorprobability}, an ongoing discussion has been opened about the use of the maximum entropy method for assigning priors in Bayesian modeling. This method, considered ``thought-provoking'' \citep{mackay2003informationTheory}, is generally not recommended \citep{gelman2020Stanwiki}. With the proposed MaxWEnt approach, we do not plan to further extend this discussion. We do not argue that the maximum entropy method is the ``optimal'' way to select a prior, as such a statement depends on the considered notion of optimality. Actually, we advocate for the use of MaxWEnt for OOD detection but do not recommend this method to improve the test accuracy. Enlarging the weight entropy may, indeed, induce a loss of test accuracy due to the large weight variance. However, we show in our experiments that one can always use a ``shrunk'' version of the weight distribution learned by MaxWEnt when looking for accurate inference while sampling over the whole distribution for OOD detection (cf. Section \ref{uci-results-sec}).

The question of the prior choice has been extensively discussed in the Bayesian literature, a recent review provides the main considered approaches \citep{fortuin2021bayesianPriorRevisited}. For Bayesian neural networks, two main groups of priors can be distinguished: weight-space priors and function-space priors. The latter includes priors defined in function space, i.e., over $\mathcal{H}$. Many recent works consider this approach \citep{sun2018functionalVarBayesian, louizos2019functionalNeuralProcess, tran2022allyouneedisGoodFunctionalPrior, fortuin2022PriorsBayesianReview, rudner2023functionSpaceInNN}, which mainly use Gaussian process priors. These methods can be related to the distance-based uncertainty approach, as they make explicit the link between uncertainty and distance to training data through Gaussian processes. The former group corresponds to priors defined over the weights of the neural network, i.e., over $\mathcal{W}$. Our work relates particularly to this approach, as discussed in Section \ref{bayesian-discussion}. The main considered priors in this category are Dropout \citep{gal2016MCdropout, gal2017concreteDropout, boluki2020learnableDropout, nguyen2022OODdropoutEnmbedding}, isotropic Gaussians \citep{zhang2018NoisyNaturalGradientVI, osawa2019practicalDLwithBayesian, jospin2022BNNTutorial}, mixture of Gaussians \citep{blundell2015bayesbybackprop}, hierarchical \citep{wu2018HierarchicalPrior} and horseshoe priors \citep{ghosh2019horshoePrior}. Some methods also propose to define the prior based on empirical observation of the weight distribution of non-Bayesian networks \citep{atanov2018deepWeightPrior, fortuin2021bayesianPriorRevisited}.

Regarding the stochastic model of the weight distribution, previous works have considered the use of diagonal Gaussian \citep{graves2011practicalVINN} and matrix Gaussian to include the weight correlations \citep{louizos2016matrixGaussianPrior, sun2017MatrixGaussian}. In the case of multivariate Gaussian model with fixed mean, approximation methods can be used to derive the posterior distribution without using gradient descent as Laplace approximations \citep{mackay1992practicalBayesian, foong2019inbetweenUncertainty, ritter2018scalableLaplace, kristiadi2020beingAbitBayesian} and tractable approximate Gaussian inference (TAGI) \citep{goulet2021TAGI}. More sophisticated stochastic models have been developed with techniques as normalizing flows \citep{rezende2015variationalNormFlow, louizos2017multiplicativeNormFlow}, implicit distribution \citep{pawlowski2017HyperNetBayes} or distribution defined over subspaces of $\mathcal{W}$ \citep{izmailov2020subspaceInferenceBNN}.

\section{Experiments}
\label{experiments}

We conduct several experiments on both synthetic and real datasets. We primarily focus on OOD detection performances to compare the methods. The implementation details for the MaxWEnt algorithm are presented in Section \ref{impl-choice}. The source code of the experiments is available on GitHub.\footnote{\url{https://github.com/antoinedemathelin/maxwent-expe}}

\subsection{Synthetic experiments}
\label{synth-expe}

In this section, we provide a qualitative analysis of the MaxWEnt algorithm on low dimensional synthetic datasets. Specifically, we compare the uncertainty estimation produced by MaxWEnt and standard ensemble and Bayesian methods.

\subsubsection{Setup}
\label{expe-synthetic-setup}

We consider both classification and regression experiments, performed respectively on the two following datasets:

\begin{itemize}
    \item \textbf{Two Moons Classification}: We consider the \textit{two-moons} classification dataset from scikit-learn\footnote{\url{https://scikit-learn.org/stable/modules/generated/sklearn.datasets.make\_moons.html}} which simulates a two-dimensional binary classification task with moons like distributed classes. The training set is composed of $200$ data points generated from the \textit{two-moons} generator; $50$ additional instances are generated to form a validation dataset. The noise level of the generator is set to $0.1$.
    \item \textbf{1D Regression} : We reproduce the synthetic univariate regression experiment from \citet{Jain2020MOD} with $100$ training and $20$ validation instances. The input instances are drawn in $\mathcal{X} \subset \mathbb{R}$ according to the mixture of two Gaussians centered respectively in $-0.5$ and $0.75$ with standard deviation $0.1$. The outputs $y \in \mathcal{Y} \subset \mathbb{R}$ are drawn according to the conditional distribution: $p(y|x) \sim f^*(x) + \epsilon$ with $\epsilon \sim \mathcal{N}(0, 0.02)$ the noise variable and $f^*(x)$ the ``ground truth'' defined as:
    \begin{equation*}
        f^*(x) = 0.3 \left(x + \sin(2 \pi x) + \sin(4 \pi x) \right).
    \end{equation*}
\end{itemize}

In both experiments, the base estimator is a fully-connected neural network with three layers of $100$ neurons, each with ReLU activations. For classification, the end layer is composed of one layer with sigmoid activation to produce probabilistic outputs. The end layer for regression is made of two neurons which respectively encode for the conditional mean $\mu_w(x)$ and conditional standard deviation $\sigma_w(x)$ of the univariate Gaussian $\mathcal{N}(\mu_w(x), \sigma_w(x))$ as suggested in \citet{nix1994ProbabilisticNetwork} to produce probabilistic outputs in the regression setting. We consider the five following uncertainty quantification methods:
\begin{itemize}
    \item \textbf{Vanilla Network}, the baseline, which produces uncertainty estimation based on the network probabilistic outputs $h_w(x) \in [0, 1]$ for classification and $\sigma_w(x) \in \mathbb{R}_+$ for regression. Notice that an ensemble of Vanilla Networks corresponds to the \textbf{Deep Ensemble} method.
    \item \textbf{MC-Dropout} \citep{gal2016MCdropout}, with dropout rate selected through hold-out validation NLL, computed using the validation data, among $[0.05, 0.1, 0.2, 0.3, 0.5]$;
    \item  standard \textbf{BNN} (Bayesian Neural Network) \citep{mackay1992practicalBayesian, graves2011practicalVINN}, trained with \linebreak stochastic variational inference and reparameterization trick \citep{hoffman2013stochasticVI, Kingma2014Adam}, we use an independent multivariate Gaussian stochastic model $q_{(\mu, \sigma)} \sim \mathcal{N}(\mu, \text{diag}(\sigma^2))$ and a Normal prior $p(w) \sim \mathcal{N}(0, \text{Id})$. Following common practices for variational Bayes approach to BNNs, we consider a trade-off parameter $\lambda$ between the NLL and the KL divergence \citep{wenzel2020howgoodBayesPosterior}. The trade-off parameter is selected in $\{ 10^k \}_{k \in [|-3, 3|]}$ through hold-out validation NLL.
    \item \textbf{MaxWEnt}, with an independent multivariate uniform stochastic model centered on the resulting weights of the Vanilla Network. 
    \item \textbf{MaxWEnt-SVD}, which uses the ``SVD'' parameterization of Equation (\ref{SVD-param}) in addition to the previous MaxWEnt settings.
\end{itemize}

We use the Adam optimizer \citep{Kingma2014Adam} with learning rate $0.001$ and batch size $32$. $10$k iterations are used to train the Vanilla Network and $20$k iterations for other methods, as the stochastic variational inference requires more iterations to converge. For both tasks, the loss function is the Negative Log Likelihood (NLL). It can be written for the respective classification and regression settings as follows:
\begin{gather}
\label{loss-classification}
    \mathcal{L}_{\mathcal{S}}(w) = - \frac{1}{n} \sum_{(x, y) \in \mathcal{S}} y \log\!\left(h_w(x)\right) + (1-y) \log\!\left(1 - h_w(x)\right) \quad \text{(Classification)}\\
\label{loss-regression}
    \mathcal{L}_{\mathcal{S}}(w) = \frac{1}{n} \sum_{(x, y) \in \mathcal{S}} \frac{1}{2} \left( \log(\sigma_w(x)^2) + \frac{\left(y - \mu_w(x)\right)^2}{\sigma_w(x)^2} \right) \quad \text{(Regression)},
\end{gather}
with $h_w \in \mathcal{H}$ the neural network of weights $w \in \mathcal{W}$ such that, for any $x \in \mathcal{X}$, $h_w(x) = (\mu_w(x), \sigma_w(x))$ for the regression setting (cf. \citealt{Lakshminarayanan2017DeepEnsemble}).

To compute uncertainty estimates, we use the entropy metric for classification and the standard deviation of the ``Gaussian mixture approximation'' introduced in \citep{Lakshminarayanan2017DeepEnsemble} for regression. All uncertainty quantification methods except the Vanilla Network produce stochastic outputs, i.e., for any $x \in \mathcal{X}$, $h_w(x)$ is a random variable as $w$ follows a stochastic model. To produce uncertainty estimates at inference, we then compute $P=50$ predictions $\{ h_{w_i}(x) \}_{i \in [|1, P|]}$ with $w_i$ drawn iid according to the learned weight distribution. Then, the uncertainty estimation for each setting becomes, for any $x \in \mathcal{X}$:
\begin{gather}
\label{uncertainty-classification}
    u(x) = - \overline{h}_{w}(x) \log\!\left(\overline{h}_{w}(x)\right) - \left(1-\overline{h}_{w}(x)\right) \log\!\left(1 - \overline{h}_{w}(x)\right) \quad \text{(Classification)}\\
\label{uncertainty-regression}
    u(x) = \frac{1}{P} \sum_{i=1}^P \left(\sigma_{w_i}(x)^2 +  \mu_{w_i}(x)^2 \right) - \overline{\mu}_{w}(x)^2  \quad \text{(Regression)} ,
\end{gather}
with $\overline{h}_w(x), \overline{\mu}_w(x)$, the average of the respective sets $\{ h_{w_i}(x) \}_i$ and $\{ \mu_{w_i}(x) \}_i$. It should be underlined that the uncertainty metric for classification in Equation (\ref{uncertainty-classification}) is the entropy metric applied to the average predicted output over the $P$ stochastic inferences, while the uncertainty metric for regression in Equation (\ref{uncertainty-regression}) is the variance formula for the Gaussian mixture composed of $P$ Gaussians of mean $\mu_{w_i}(x)$ and variance $\sigma_{w_i}(x)^2$ \citep{Lakshminarayanan2017DeepEnsemble}. Notice also that, for the Vanilla Network, the estimated uncertainty is independent of $P$ as the method produces the deterministic outputs $h_w(x)$. In the regression case, the Vanilla Network uncertainty is $u(x) = \sigma_w(x)^2$.

To complete the experiments, we also consider ensembles of the previously mentioned uncertainty quantification methods. We build ensembles of $N = 5$ networks trained independently with different random weight initializations. In this case, the uncertainty metrics are computed in the same way as in the single-network setting through Equation (\ref{uncertainty-classification}) and (\ref{uncertainty-regression}) with $P$ predictions for each network in the ensemble, i.e., with a total of $N P = 250$ predictions.

\subsubsection{Results}

The regression experiment results are reported in Figure \ref{toy-reg-comparison}. Predicted uncertainties for each method are presented in the form of confidence intervals in light blue. We observe that the Deep Ensemble, MC-Dropout and BNN methods provide larger uncertainty estimates out-of-distribution than in-distribution, which offers an efficient way to detect OODs in this case. However, the three methods fail to capture the full epistemic uncertainty, as a significant part of the ground-truth lies outside the confidence intervals. In contrast, MaxWEnt provides relevant confidence intervals outside the training support when extrapolating on the right and left sides of the domain. Although the predicted uncertainties between the two separated parts of the training domain are still under-estimated. This behavior is corrected by MaxWEnt-SVD which fully manages to produce tight confidence intervals in-distribution and uncertainties as large as possible out-of-distribution.

\begin{figure}[h]
    \centering
    \begin{minipage}{0.03\linewidth}
        \begin{tabular}{c}
            \rotatebox[origin=c]{90}{\small Single Net}
        \end{tabular}
    \end{minipage}
    \begin{minipage}{0.185\linewidth}
        \centering
        \includegraphics[width=\linewidth]{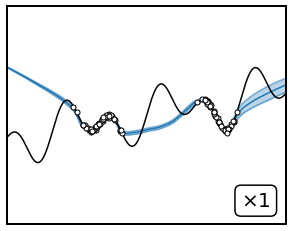} % first figure itself
    \end{minipage}
    \begin{minipage}{0.185\linewidth}
        \centering
        \includegraphics[width=\linewidth]{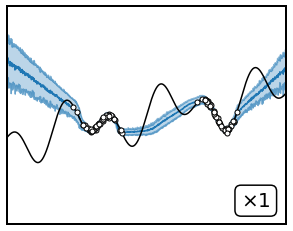} % first figure itself
    \end{minipage}
    \begin{minipage}{0.185\linewidth}
        \centering
        \includegraphics[width=\linewidth]{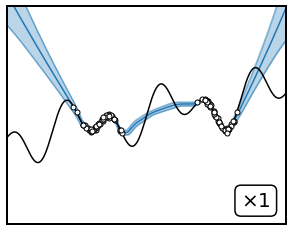} % second figure itself
    \end{minipage} 
    \begin{minipage}{0.185\linewidth}
        \centering
        \includegraphics[width=\linewidth]{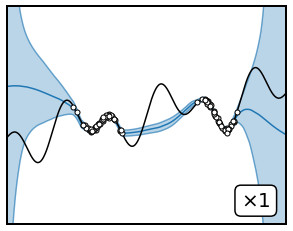} % first figure itself
    \end{minipage}
    \begin{minipage}{0.185\linewidth}
        \centering
        \includegraphics[width=\linewidth]{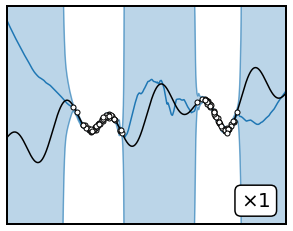} % first figure itself
        % \source{(b) MCDropout}
    \end{minipage} \\

    \centering
    \begin{minipage}{0.03\linewidth}
        \begin{tabular}{c}
            \rotatebox[origin=c]{90}{\small Ensemble}
        \end{tabular}
        \source{}
    \end{minipage}
    \begin{minipage}{0.185\linewidth}
        \centering
        \includegraphics[width=\linewidth]{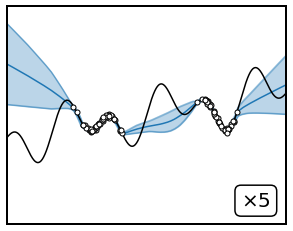}
        \source{Deep Ensemble}
    \end{minipage}
    \begin{minipage}{0.185\linewidth}
        \centering
        \includegraphics[width=\linewidth]{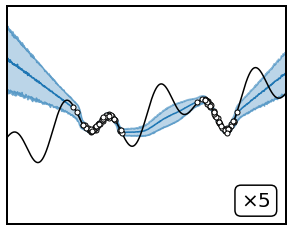}
        \source{MC-Dropout}
    \end{minipage}
    \begin{minipage}{0.185\linewidth}
        \centering
        \includegraphics[width=\linewidth]{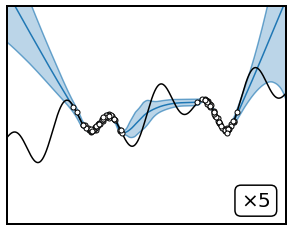}
        \source{BNN}
    \end{minipage} 
    \begin{minipage}{0.185\linewidth}
        \centering
        \includegraphics[width=\linewidth]{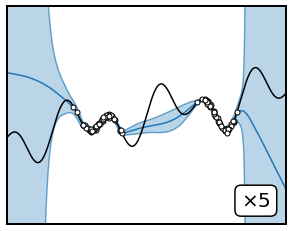}
        \source{MaxWEnt}
    \end{minipage}
    \begin{minipage}{0.185\linewidth}
        \centering
        \includegraphics[width=\linewidth]{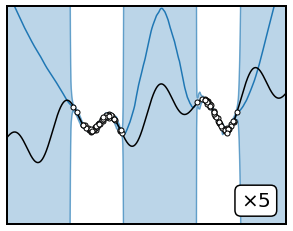} \source{MaxWEnt-SVD}
    \end{minipage}
    \caption{1D-Regression Uncertainty Estimation. The horizontal and vertical axes correspond respectively to the 1D input space $\mathcal{X}$ and the 1D output space $\mathcal{Y}$. The black line represents the ground truth $f^*(x)$ and the blue line the average predictions $\overline{\mu}_w(x)$. Training instances are reported as white dots. Uncertainty estimations are reported in the form of confidence intervals centered around the average prediction (in light blue). The length of the intervals is equal to $4 \sqrt{u(x)}$ with $u(x)$ defined according to Equation (\ref{uncertainty-regression}).}
    \label{toy-reg-comparison}
\end{figure}

The results of the classification experiment are reported in Figure \ref{toy-classif-comparison}. As for the regression experiment, we observe that Deep Ensemble, MC-Dropout and BNN fail to provide relevant uncertainty estimations whereas MaxWEnt and MaxWEnt-SVD are close to the expected behavior of an ideal uncertainty quantifier. Moreover, in this experiment, the first three methods do not offer proper discrimination between out-of-distribution and in-distribution data. The produced uncertainties are concentrated in the margin between classes and do not increase in the OOD areas behind the training instances. We observe that MaxWEnt and MaxWEnt-SVD manage to increase the uncertainty outside the margin between classes.

\begin{figure}[h]
    \centering
    \begin{minipage}{0.03\linewidth}
        \begin{tabular}{c}
            \rotatebox[origin=c]{90}{\small Single Net}
        \end{tabular}
    \end{minipage}
    \begin{minipage}{0.185\linewidth}
        \centering
        \includegraphics[width=\linewidth]{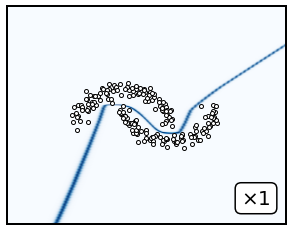} % first figure itself
    \end{minipage}
    \begin{minipage}{0.185\linewidth}
        \centering
        \includegraphics[width=\linewidth]{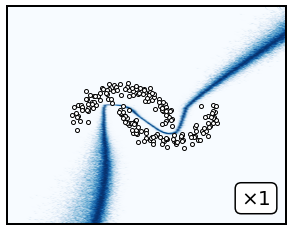} % first figure itself
    \end{minipage}
    \begin{minipage}{0.185\linewidth}
        \centering
        \includegraphics[width=\linewidth]{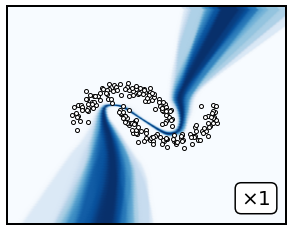} % second figure itself
    \end{minipage} 
    \begin{minipage}{0.185\linewidth}
        \centering
        \includegraphics[width=\linewidth]{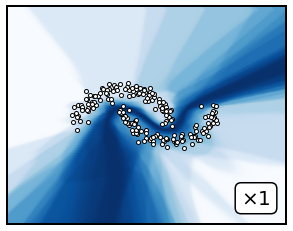} % first figure itself
    \end{minipage}
    \begin{minipage}{0.185\linewidth}
        \centering
        \includegraphics[width=\linewidth]{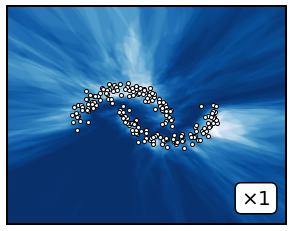} % first figure itself
        % \source{(b) MCDropout}
    \end{minipage} \\

    \centering
    \begin{minipage}{0.03\linewidth}
        \begin{tabular}{c}
            \rotatebox[origin=c]{90}{\small Ensemble}
        \end{tabular}
        \source{}
    \end{minipage}
    \begin{minipage}{0.185\linewidth}
        \centering
        \includegraphics[width=\linewidth]{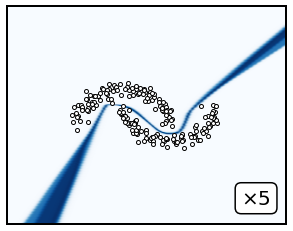}
        \source{Deep Ensemble}
    \end{minipage}
    \begin{minipage}{0.185\linewidth}
        \centering
        \includegraphics[width=\linewidth]{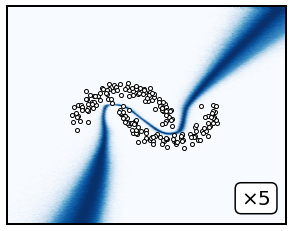}
        \source{MC-Dropout}
    \end{minipage}
    \begin{minipage}{0.185\linewidth}
        \centering
        \includegraphics[width=\linewidth]{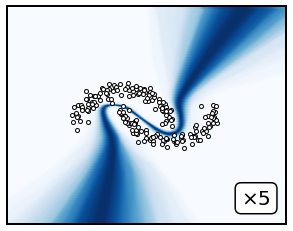}
        \source{BNN}
    \end{minipage} 
    \begin{minipage}{0.185\linewidth}
        \centering
        \includegraphics[width=\linewidth]{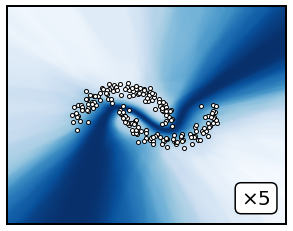}
        \source{MaxWEnt}
    \end{minipage}
    \begin{minipage}{0.185\linewidth}
        \centering
        \includegraphics[width=\linewidth]{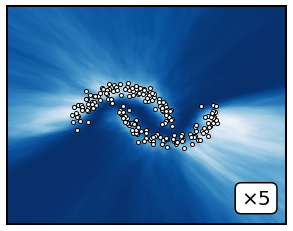} \source{MaxWEnt-SVD}
    \end{minipage}
    \caption{Two-Moons Classification Uncertainty Estimation. The horizontal and vertical axes correspond to both dimensions of the input space $\mathcal{X}$. Training instances are represented by white dots. The two ``moons'' formed by the training instances correspond to two different classes. Predicted uncertainties $u(x)$, computed through Equation (\ref{uncertainty-classification}), are reported in shades of blue (darker areas correspond to larger uncertainties).}
    \label{toy-classif-comparison}
\end{figure}

\subsubsection{Discussion}
\label{synthetic-discuss}

Both experiments on synthetic data strongly highlight the benefit of using MaxWEnt for uncertainty quantification over standard Bayesian and ensemble methods. As discussed in Section \ref{bayesian-discussion}, the MaxWEnt implementation is related to BNN algorithms, however, the predicted uncertainties of MaxWEnt and BNN are very different (cf. Figures \ref{toy-reg-comparison} and \ref{toy-classif-comparison}). These observed discrepancies between the two methods can be explained by their different paradigms. In standard BNN optimization, the main objective is to produce relevant uncertainty estimation inside the training domain. In this perspective, the prior distribution and the trade-off parameters are selected in order to minimize the validation NLL. Consequently, the expansion of the weight distribution is generally limited. In MaxWEnt optimization, the primary goal is to maximize the entropy of the weight distribution as long as the sampled weights are consistent. Although this approach induces a slight penalization of the validation NLL as suggested in Figure \ref{toy-classif-comparison} (predicted uncertainties in the training domain are larger for MaxWEnt and MaxWEnt-SVD than for BNN), it significantly improves the predicted epistemic uncertainties outside the training domain. Notice that one can sample from the whole MaxWEnt weight distribution to detect OOD and then from ``shrunk'' weight distribution to provide more accurate prediction for data identified as in-distribution (cf. Figure \ref{toy-clipping}).

When considering the MaxWEnt-SVD results for both experiments (cf. right side of Figures \ref{toy-reg-comparison} and \ref{toy-classif-comparison}), we might judge that the produced out-of-distribution uncertainties are over-estimated; especially in the regression experiment, where the predicted uncertainties become very large almost instantly at the borders of the training domain. However, this behavior is optimal according to the notion of epistemic uncertainty considered in this work. Indeed, epistemic uncertainty is defined through the set of potential candidates for the optimal hypothesis $h_{w^*}$. Then, as soon as there exists a neural network $h$ in $\mathcal{H}$ which fits the training instances and produces very high outputs out-of-distribution, the user has no reason, in the absence of further regularity consideration, to exclude that the optimal hypothesis can be modeled by $h$. If, for some reason, the user wants to add some prior information on $h_{w^*}$, such as Lipschitz constraints on the network output, this can be achieved, for example, by clipping the scaling variable $\phi \odot z$ during the MaxWEnt inference as done for the weights of the Wasserstein-GAN to impose the $1$-Lipschitz constraint \citep{Arjovsky2017WGAN}. This boils down to considering a reduced hypothesis space $\mathcal{H}$, which de facto reduces the epistemic uncertainty estimation, but potentially increases the discrepancy between $\mathcal{H}$ and $f^*$. We present in Figure \ref{toy-clipping} the impact of weight clipping on the predicted uncertainties of MaxWEnt-SVD on the regression dataset. We observe that the clipping parameter enables the interpolation between the behavior of the vanilla probabilistic network and the MaxWEnt-SVD behavior. Notice that clipping is performed at ``test time'', i.e., after the MaxWEnt optimization, which is convenient as the clipping parameter can be selected ``a posteriori'', without further training.

\begin{figure}[ht]
    \centering
    \begin{minipage}{0.185\linewidth}
        \centering
        \includegraphics[width=\linewidth]{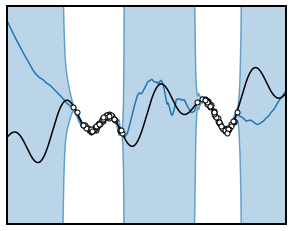} % first figure itself
        \source{$C = + \infty$}
    \end{minipage}
    \begin{minipage}{0.185\linewidth}
        \centering
        \includegraphics[width=\linewidth]{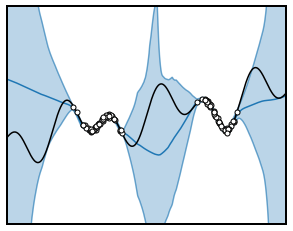} % first figure itself
        \source{$C = 1.5$}
    \end{minipage}
    \begin{minipage}{0.185\linewidth}
        \centering
        \includegraphics[width=\linewidth]{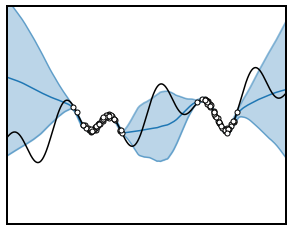} % second figure itself
        \source{$C = 1$}
    \end{minipage} 
    \begin{minipage}{0.185\linewidth}
        \centering
        \includegraphics[width=\linewidth]{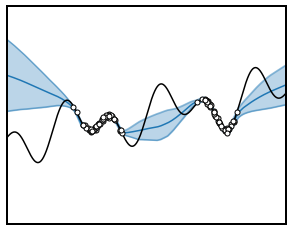} % first figure itself
        \source{$C = 0.5$}
    \end{minipage}
    \begin{minipage}{0.185\linewidth}
        \centering
        \includegraphics[width=\linewidth]{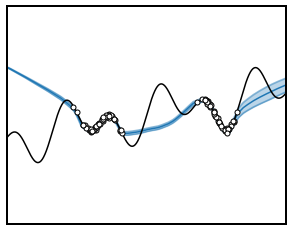} % first figure itself
        \source{$C = 0$}
    \end{minipage}
    \caption{MaxWEnt-SVD Uncertainties for different clipping parameters. Clipping is performed ``a posteriori'' on the scaling variable $\hat{\phi} \odot z$ (cf. Equation \ref{SVD-param}) with $\hat{\phi}$ the parameters of the fitted MaxWEnt-SVD network, such that $q_{\hat{\phi}} \sim \overline{w} + V \min(\hat{\phi} \odot z, C)$, with $C$ the clipping parameter.}
    \label{toy-clipping}
\end{figure}

The comparison between the regression and classification results suggests that out-of-distribution detection is a more difficult task in the classification setting. Indeed, in this setting, the uncertainty quantification methods do not fully manage to increase uncertainty for OOD data behind the training instances of each class. This behavior can be explained by the use of the sigmoid activation at the end-layer, which hardens the epistemic uncertainty estimation as different large outputs are reduced in the same probabilistic output (close to $1$ if positive or $0$ if negative). In fact, recent out-of-distribution detection methods often abandon the use of softmax and sigmoid activation functions at the end layer in favor of distance-based approaches where class assignment is computed through distance to class prototypes \citep{vanAmersfoort2020DUQ}. Notice that, we do not consider distance-based uncertainty methods in these synthetic experiments. For these low dimensional problems, using the Euclidean distance to the training instances would provide an almost perfect OOD detector. However, for high dimensional datasets, ensemble-based approaches generally provide better performances \citep{Yang2022openoodBenchmark}.

In both experiments, we observe that MaxWEnt-SVD produces uncertainty estimates of better quality than MaxWEnt. The theoretical analysis in Section \ref{analysis-weight-param} suggests that this improvement is related to the weight entropy increase. To evaluate this theoretical claim, we report the evolution of the predicted uncertainties and the weight entropy $H(\phi)$ through the epochs for both methods in the regression setting (cf. Figure \ref{toy-epochs}). We observe, for both methods, a strong correlation between the increase of the weight diversity (measured by $H(\phi)$) and the increase of the uncertainty estimates out-of-distribution. Moreover, the predicted uncertainties of MaxWEnt-SVD quickly increase around epoch $100$ as well as its distribution entropy $H(\phi)$, which becomes higher than the MaxWEnt entropy ($H(\phi) = -0.03$ at epoch 125 for MaxWEnt-SVD while $H(\phi) = -2.51$ for MaxWEnt). After this stage, the predicted OOD uncertainties are better for MaxWEnt-SVD than for MaxWEnt, especially in the interpolation regime between the two parts of the training domain. These observations support the idea that higher weight diversity for the same level of in-distribution risk produces better uncertainty quantification out-of-distribution.

\begin{figure}[ht]
    \centering
    \begin{minipage}{0.03\linewidth}
        \begin{tabular}{c}
            \rotatebox[origin=c]{90}{\small MaxWEnt}
        \end{tabular}
    \end{minipage}
    \begin{minipage}{0.235\linewidth}
        \centering
        \includegraphics[width=\linewidth]{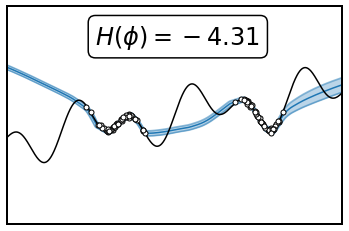} % first figure itself
    \end{minipage}
    \begin{minipage}{0.235\linewidth}
        \centering
        \includegraphics[width=\linewidth]{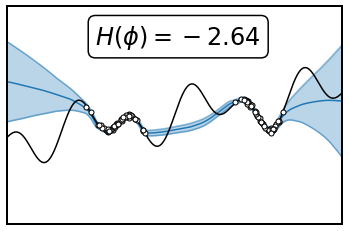} % first figure itself
    \end{minipage}
    \begin{minipage}{0.235\linewidth}
        \centering
        \includegraphics[width=\linewidth]{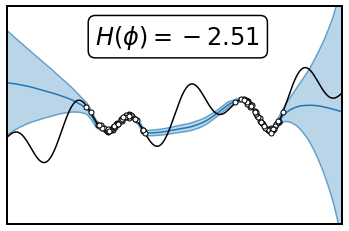} % second figure itself
    \end{minipage} 
    \begin{minipage}{0.235\linewidth}
        \centering
        \includegraphics[width=\linewidth]{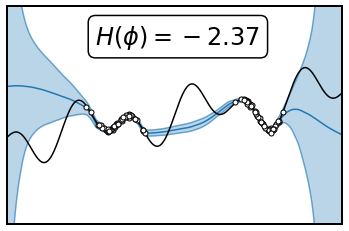} % first figure itself
    \end{minipage} \\

    \centering
    \begin{minipage}{0.03\linewidth}
        \begin{tabular}{c}
            \rotatebox[origin=c]{90}{\small MaxWEnt-SVD}
        \end{tabular}
        \source{}
    \end{minipage}
    \begin{minipage}{0.235\linewidth}
        \centering
        \includegraphics[width=\linewidth]{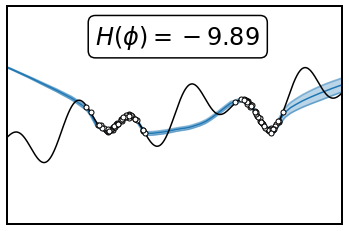}
        \source{Epoch 10}
    \end{minipage}
    \begin{minipage}{0.235\linewidth}
        \centering
        \includegraphics[width=\linewidth]{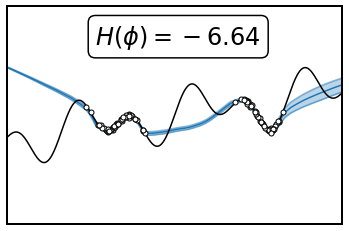} 
        \source{Epoch 75}
    \end{minipage}
    \begin{minipage}{0.235\linewidth}
        \centering
        \includegraphics[width=\linewidth]{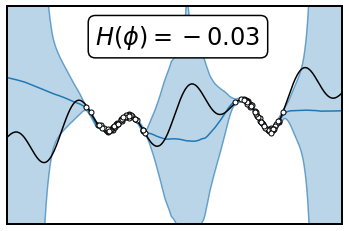}
        \source{Epoch 125}
    \end{minipage} 
    \begin{minipage}{0.235\linewidth}
        \centering
        \includegraphics[width=\linewidth]{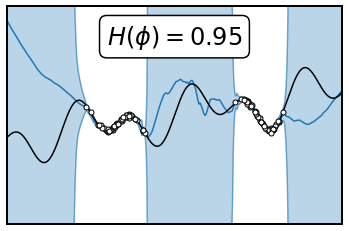}
        \source{Epoch 200}
    \end{minipage}
    \caption{MaxWEnt Uncertainties Evolution through Epochs. One epoch corresponds to $100$ iterations. The entropy term $H(\phi)$ is defined here as $H(\phi) = \frac{1}{d} \sum_{k=1}^d \log(\phi_k^2)$ with $\phi$ the scale parameters of the weight distribution. Notice that MaxWEnt and MaxWEnt-SVD have different parameter initialization, respectively: $\phi_{\text{init}} = \text{softplus}(-5)$ and $\phi_{\text{init}} = \text{softplus}(-10)$.}
    \label{toy-epochs}
\end{figure}

\subsubsection{Neuron Activation Amplitude and Scaling Parameters}
\label{theory-verif}

In the theoretical analysis in Section \ref{analysis}, we show, in the case of fully-connected neural networks, that the scaling parameters $\phi_k$ are inversely proportional to the neuron activation amplitude on the training data denoted $a^2_{(l, k)}$ for the $l^{th}$ layer (cf. Proposition \ref{thm-multi-closeform}). We aim at supporting this theoretical result with empirical observations. For this purpose, we estimate the activation amplitudes in each layer of the MaxWEnt neural network and compare their values with the average of their corresponding scaling parameters $(1/b_l) \sum_{j=1}^{b_l} \phi_{(l, j, k)}$. We report the result in Figure \ref{layer-amplitude}. The top three graphics present the scaling parameters as a function of the activation amplitudes in the three layers of the MaxWEnt neural network trained on the two moons dataset. We observe a clear relation of inverse proportionality between the two quantities, in line with the theoretical outcomes. The three graphics below present the results for the standard BNN method. We observe the inverse proportionality relationship for the first layer but to a lesser extent than for MaxWEnt. This relationship is diminished in the two next layers. Moreover, we observe that the scaling parameters in the two last layers are globally larger for MaxWEnt than for BNN.

\begin{figure}[ht]
    \centering
    \begin{minipage}{0.03\linewidth}
        \begin{tabular}{c}
            \rotatebox[origin=c]{90}{\small MaxWEnt}
        \end{tabular}
        \source{}
    \end{minipage}
    \begin{minipage}{0.31\linewidth}
        \centering
        \includegraphics[width=\linewidth]{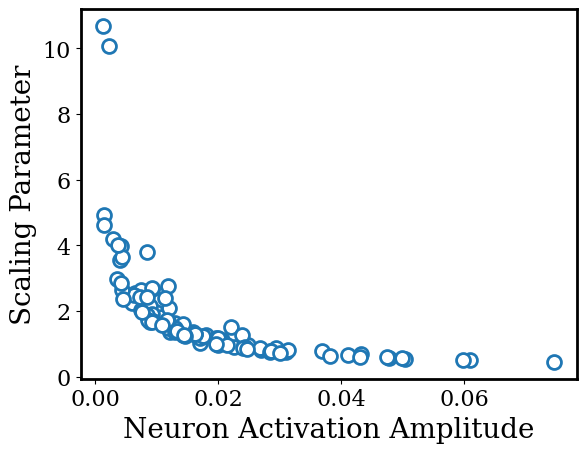} % first figure itself
    \end{minipage}
    \begin{minipage}{0.29\linewidth}
        \centering
        \includegraphics[width=\linewidth]{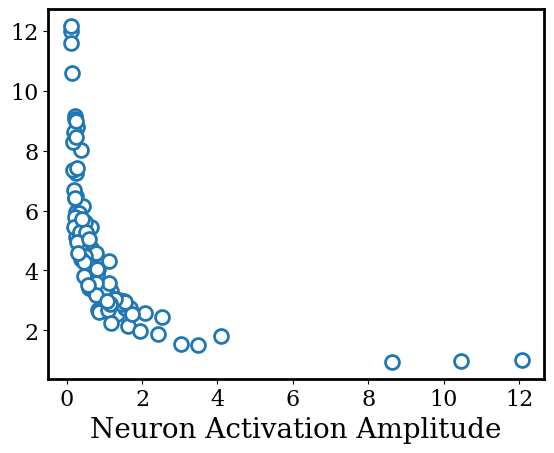} % second figure itself
    \end{minipage} 
    \begin{minipage}{0.3\linewidth}
        \centering
        \includegraphics[width=\linewidth]{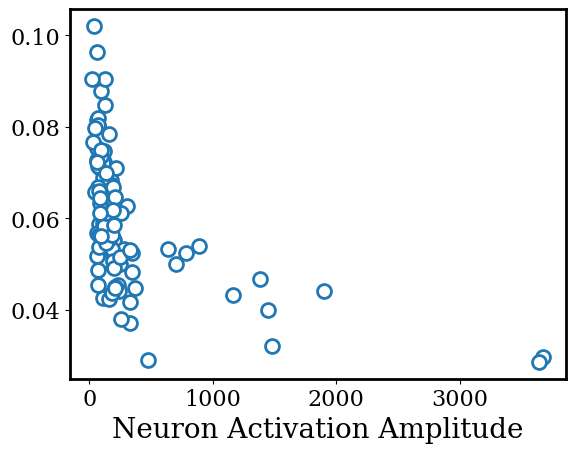} % first figure itself
    \end{minipage} \\

    \centering
    \begin{minipage}{0.03\linewidth}
        \begin{tabular}{c}
            \rotatebox[origin=c]{90}{\small BNN}
        \end{tabular}
        \source{}
    \end{minipage}
    \begin{minipage}{0.31\linewidth}
        \centering
        \includegraphics[width=\linewidth]{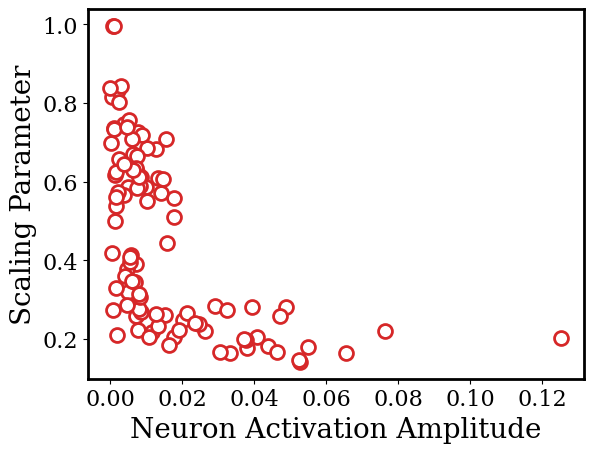} % first figure itself
        \source{Layer 1}
    \end{minipage}
    \begin{minipage}{0.29\linewidth}
        \centering
        \includegraphics[width=\linewidth]{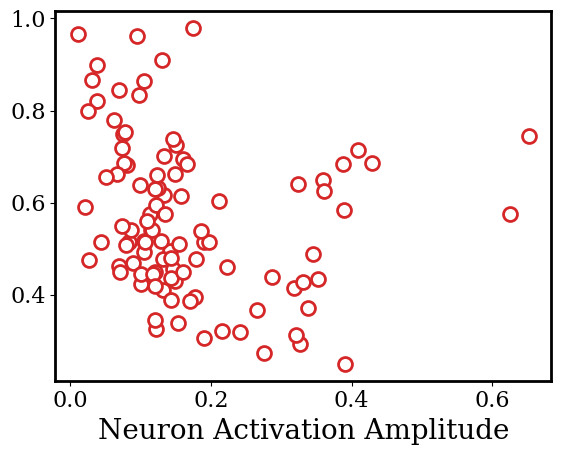} % second figure itself
        \source{Layer 2}
    \end{minipage} 
    \begin{minipage}{0.3\linewidth}
        \centering
        \includegraphics[width=\linewidth]{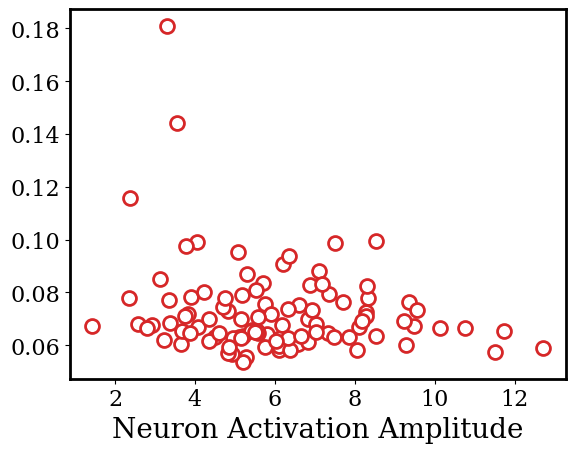} % first figure itself
        \source{Layer 3}
    \end{minipage}

    \caption{Neuron activation amplitude for the three hidden layers of MaxWEnt and BNN for the synthetic classification experiment: The top three graphics correspond to MaxWEnt while the three bottom to BNN. Each graphic reports the value of the average scaling parameter as a function of the training neuron activation amplitude in the three layers of the neural network.}
    \label{layer-amplitude}
\end{figure}

\subsection{UCI regression datasets}
\label{exp-uci-reg-dataset}

\subsubsection{Setup}

In this section, we consider the most common UCI regression datasets used to evaluate uncertainty quantification methods. Most previous works evaluate the methods based on the in-distribution NLL computed on a test set drawn from the same distribution as the training set \citep{Lakshminarayanan2017DeepEnsemble}. In this work, we focus on the method's ability to detect whether a data point is outside the training support or not. For this purpose, we build OOD detection problems by splitting each dataset into two distinct parts, with one part modeling the training domain and the other part the OOD data. Inspired by \citet{foong2019inbetweenUncertainty} and \citet{Jain2020MOD}, which propose OOD splits for UCI datasets, we split the dataset along the first component of the input PCA: we define the \textit{internal} domain with the data between the 25\% and 75\% percentiles of the first component of the input PCA, while the rest of the data forms the \textit{external} domain. We then consider the two following experimental setups:
\begin{itemize}
    \item \textbf{Extrapolation}: The training data are defined by the \textit{internal} domain, while the data from the \textit{external} domain are considered as OOD.
    \item \textbf{Interpolation}: The training data are defined by the \textit{external} domain, while the data from the \textit{internal} domain are considered as OOD.
\end{itemize}

In all experiments, we consider a fully-connected network with three hidden layers of $100$ neurons each and ReLU activation for the base estimator. The end-layer is composed of two neurons, which respectively predict the conditional mean and standard deviation $\mu_w(x), \sigma_w(x)$ (cf. Section \ref{expe-synthetic-setup}). We consider $13$ different uncertainty quantification approaches: five deep ensemble methods: \textbf{Deep Ensemble} \citep{Lakshminarayanan2017DeepEnsemble}, \textbf{Negative Correlation} \citep{liu1999negativecorrelation, shui2018negativecorrelation}, \textbf{Maximize-Overall-Diversity (MOD)} \citep{Jain2020MOD}, \textbf{Anchored-Networks} \citep{pearce2018AnchorNetwork}, \textbf{Repulsive-Deep-Ensemble (RDE)} \citep{Angelo2021RepulsiveDeepEnsemble} and four ``Bayesian'' methods: \textbf{MC-Dropout}, \textbf{BNN}, \textbf{MaxWEnt}, \textbf{MaxWEnt-SVD} (described in Section \ref{expe-synthetic-setup}), and ensemble version of these four previous Bayesian methods. The competitor characteristics are summarized in Table \ref{table-uci-competitors}. We use the Gaussian NLL loss for regression, as defined in Equation (\ref{loss-regression}) and the Adam optimizer \citep{Kingma2014Adam} with learning rate $0.001$ and batch size $128$. The number of iterations is chosen such that the minimum validation NLL is generally reached by every method on every dataset. We then consider $10$k iterations for ensemble methods and $50$k iterations for Bayesian and Bayesian ensemble methods, as stochastic variational inference converges slower than stochastic gradient descent. A callback process is used to monitor the validation NLL of the model every $100$ iterations, the network weights corresponding to the iteration of best validation NLL are restored at the training end. For MaxWEnt, the scale parameters are saved if the validation NLL is below the threshold defined in Section \ref{impl-choice}. Notice that the OOD dataset is not used during the network training, as well as a test dataset, selected uniformly in the training domain. $10\%$ of the in-distribution data are selected to form the test set and $5\%$ of the remaining data to form the validation set.

\begin{table}[ht]
\footnotesize
\begin{tabularx}{\linewidth}{>{\lratio{1.55}}X>{\lratio{0.4}}Y>{\lratio{0.7}}Y>{\lratio{0.25}}Y>{\lratio{0.25}}Y>{\lratio{0.25}}Y>{\lratio{0.4}}Y>{\lratio{3.2}}Y}
Methods & Abrv. & Kind & Nets & Preds & Total & Parallel & Hyper-parameters \\
\midrule
Deep Ensemble & DE & Ensemble & 5 & 1 & 5 & \checkmark & None \\
Negative Correlation & NC & Ensemble & 5 & 1 & 5 & \checkmark & $\lambda \in \{10^{-k}; \; k \in [|0, 5|]\}$ \\
Maximize-Overall-Diversity & MO & Ensemble & 5 & 1 & 5 & $\times$ & $\lambda \in \{10^{-k}; \; k \in [|0, 5|]\}$ \\
Anchored-Networks & AN & Ensemble & 5 & 1 & 5 & \checkmark & $\Sigma, \lambda \in [0.1, 1, 10]$ \\
Repulsive-Deep-Ensemble & RE & Ensemble & 5 & 1 & 5 & $\times$ & $\sigma = \text{median heuristic}$ \\
\midrule
MC-Dropout $(\times 1)$ & MD & Bayesian & 1 & 50 & 50 & N/A & rate $\in [0.05, 0.1, 0.2, 0.3, 0.5]$ \\
Bayesian Neural Net $(\times 1)$ & BN & Bayesian & 1 & 50 & 50 & N/A & $\lambda \in \{10^k; \; k \in [|-3, 3|]\}$ \\
MaxWEnt $(\times 1)$ & ME & Bayesian & 1 & 50 & 50 & N/A & $\lambda = 10$, $\phi_{\text{init}} = \text{soft}(-5)$ \\
MaxWEnt-SVD $(\times 1)$ & ME+ & Bayesian & 1 & 50 & 50 & N/A & $\lambda = 10$, $\phi_{\text{init}} = \text{soft}(-10)$ \\
\midrule
MC-Dropout $(\times 5)$ & MD & Bay Ens & 5 & 50 & 250 & \checkmark & rate $\in [0.05, 0.1, 0.2, 0.3, 0.5]$ \\
Bayesian Neural Net $(\times 5)$ & BN & Bay Ens & 5 & 50 & 250 & \checkmark & $\lambda \in \{10^k; \; k \in [|-3, 3|]\}$ \\
MaxWEnt $(\times 5)$ & ME & Bay Ens & 5 & 50 & 250 & \checkmark & $\lambda = 10$, $\phi_{\text{init}} = \text{soft}(-5)$ \\
MaxWEnt-SVD $(\times 5)$ & ME+ & Bay Ens & 5 & 50 & 250 & \checkmark & $\lambda = 10$, $\phi_{\text{init}} = \text{soft}(-10)$ \\
\bottomrule
\end{tabularx}
\caption{Competitors Summary. The columns ``Nets'' and ``Preds'' respectively report the number of networks in the ensemble and the number of predictions at inference for one network. ``Total'' is the total number of predictions (Nets $\times$ Preds). The ``Parallel'' column reports whether the ensemble can be trained in parallel or not. When a list is given in the ``Hyper-parameters'' section, the value is selected based on hold-out validation NLL. ``soft'' is the abbreviation for the ``softplus'' function: $\text{soft}(x) = \log(1 + \exp(x))$.}
\label{table-uci-competitors}
\end{table}

\subsubsection{Results}
\label{uci-results-sec}

\begin{table}[ht]
\scriptsize
\centering
\begin{tabular}{l|l|ccccc|cccc|cccc}
\toprule
& \multirow{2}{*}{\backslashbox{Data}{Meth}} & \multicolumn{5}{c|}{\textbf{Ensemble}} & \multicolumn{4}{c|}{\textbf{Bayesian}} & \multicolumn{4}{c}{\textbf{Bayesian Ensemble}} \\
  &  {} &             DE &        NC &            MO &      AN &            RE & BN & MD &  ME & ME+ & BN & MD &  ME & ME+ \\
\midrule
\midrule
\multirow{12}{*}{\rotatebox[origin=c]{90}{\textbf{Extrapolation}}}  & yacht    & 98.9          & 99.1          & 99.1 & 98.1          & \textbf{99.5} & 89.5 & 78.2 & 97.1          & \textbf{99.4} & 95.3 & 83.1 & \textbf{99.6} & 99.1          \\
 & energy   & 81.0          & \textbf{93.6} & 91.3 & 79.9          & 92.9          & 88.2 & 55.6 & 74.3          & \textbf{99.6} & 92.0 & 81.9 & 91.7          & \textbf{99.9} \\
 & concrete & 78.4          & \textbf{89.8} & 88.9 & 83.8          & 87.7          & 75.7 & 68.7 & 74.3          & \textbf{90.8} & 79.5 & 72.1 & 81.8          & \textbf{95.6} \\
 & wine     & 38.8          & \textbf{48.7} & 36.8 & 45.8          & 39.3          & 70.9 & 62.3 & 79.1          & \textbf{85.9} & 66.7 & 64.2 & 83.8          & \textbf{88.4} \\
 & power    & \textbf{84.9} & 78.5          & 75.3 & 75.1          & 79.4          & 82.1 & 79.8 & 78.4          & \textbf{93.0} & 82.4 & 86.7 & 93.1          & \textbf{93.3} \\
 & naval    & 97.5          & 97.7          & 85.3 & \textbf{99.7} & 96.0          & 89.5 & 96.1 & 96.9          & \textbf{97.2} & 96.8 & 96.8 & 98.9          & \textbf{99.6} \\
 & protein  & 82.5          & \textbf{83.0} & 82.8 & 78.0          & 79.7          & 82.9 & 74.7 & \textbf{81.6} & 79.6          & 84.0 & 79.9 & \textbf{89.3} & 87.6          \\
 & kin8nm   & 45.4          & 45.0          & 45.0 & 45.9          & \textbf{46.1} & 52.5 & 51.4 & 39.1          & \textbf{60.3} & 54.5 & 52.8 & 49.1          & \textbf{78.2} \\
\cmidrule{2-15}
\RowColorb \cellcolor{white} & \textbf{Avg AUC}   &           75.9 &           \textbf{79.4} &           75.6 &           75.8 &      77.6      &    78.9  &     70.8 &           77.6 &             \textbf{88.2} &     81.4 &           77.2 &           85.9 &             \textbf{92.7} \\
\RowColorb \cellcolor{white} &  \textbf{Rank}  &           9 &           \textbf{5} &          11 &           10 &           7 &     6 &          12 &           7 &             \textbf{2} &     4 &          8 &     3 &             \textbf{1} \\
\midrule
\midrule
\multirow{12}{*}{\rotatebox[origin=c]{90}{\textbf{Interpolation}}}   & yacht    & 77.5 & 78.4          & \textbf{80.6} & 76.1          & 79.7          & 46.7 & 48.4 & 71.4 & \textbf{98.9} & 51.9 & 48.0 & 90.1 & \textbf{98.6}  \\
 & energy   & 99.2 & \textbf{99.7} & 99.6          & 98.7          & 99.5          & 95.5 & 78.8 & 88.5 & \textbf{99.5} & 98.2 & 96.4 & 98.8 & \textbf{100.0} \\
 & concrete & 60.8 & 72.7          & 72.4          & 46.2          & \textbf{73.6} & 48.6 & 57.4 & 46.7 & \textbf{93.4} & 60.3 & 60.6 & 62.5 & \textbf{95.0}  \\
 & wine     & 43.3 & 42.7          & 42.7          & 41.1          & \textbf{43.8} & 34.8 & 41.6 & 32.2 & \textbf{52.5} & 33.9 & 41.0 & 37.0 & \textbf{62.1}  \\
 & power    & 43.5 & 42.8          & 17.8          & \textbf{67.3} & 48.5          & 38.1 & 42.7 & 58.6 & \textbf{94.7} & 57.2 & 47.1 & 65.5 & \textbf{96.0}  \\
 & naval    & 81.8 & 73.5          & 73.6          & \textbf{83.6} & 71.2          & 22.8 & 83.8 & 54.6 & \textbf{98.6} & 46.9 & 91.5 & 88.9 & \textbf{98.4}  \\
 & protein  & 70.6 & 72.5          & 65.9          & \textbf{73.7} & 73.3          & 66.0 & 71.6 & 64.2 & \textbf{83.8} & 71.0 & 76.1 & 71.9 & \textbf{80.6}  \\
 & kin8nm   & 63.7 & 63.2          & 63.3          & 64.7          & \textbf{64.8} & 53.1 & 56.2 & 55.8 & \textbf{67.1} & 54.2 & 58.1 & 56.9 & \textbf{67.7}  \\
\cmidrule{2-15}
\RowColorb \cellcolor{white} &  \textbf{Avg AUC}  &           67.6 &           68.2 &           64.5 &           68.9 &           \textbf{69.3} &     50.7 &          60.1 &           59.0 &             \textbf{86.1} &     59.2 &          64.8 &           71.4 &             \textbf{87.3} \\
\RowColorb \cellcolor{white} & \textbf{Rank}  &   7 &   6 &           9 &           5 &           \textbf{4} &   13 &          10 &           12 &             \textbf{2} &     11 &          8 &           3 &             \textbf{1} \\
\bottomrule
\bottomrule
\end{tabular}
\caption{UCI experiments OOD detection results. AUROC scores for OOD detection are reported. The best score for each category is emphasized in bold (higher scores are better). The two last rows for the extrapolation and interpolation settings report the average AUROC over the eight datasets (Avg AUC) and the rank of the method among all methods according to the average score (Rank).}
\label{uci-results}
\end{table}

To evaluate the model performances, we use the metric defined in Equation (\ref{uncertainty-regression}) which defines an uncertainty score for each data point, this score is used to compute the AUROC metric between in-distribution and OOD data which is a commonly used metric in the OOD detection setting \citep{Yang2022openoodBenchmark}. All results are reported in Table \ref{uci-results}. Each experiment is performed only once to reduce the computational time of the experiments. As many different datasets are used, this is sufficient to obtain statistically significant results. We report the results by kind of methods: ensemble, Bayesian and Bayesian ensemble. The best result for each dataset in each category is emphasized in bold. We report the average AUROC among extrapolation and interpolation experiments and the rank of the methods.  Our observations can be summarized as follows:

\begin{itemize}
    \item \textbf{MaxWEnt-SVD (ME+) outperforms all other approaches}, with or without ensembling. The second-best non-MaxWEnt approach is 11.3 points behind in extrapolation and 18 points behind in interpolation in terms of average AUROC. Ensembling improves from 4.5 points in extrapolation and 1.2 points in interpolation.
    \item \textbf{The ensemble version of MaxWEnt (ME) is third best} behind the two versions of \linebreak MaxWEnt-SVD. The single-network MaxWEnt, however, provides poor performances,  \linebreak which advocates for the use of ensembling or SVD parameterization.
    \item \textbf{AUROC scores are higher in extrapolation than in interpolation}, suggesting that the second task is more difficult. This seems reasonable, as the network is conditioned on both sides of the OOD domain in the interpolation case while being conditioned only on one side of the OOD domain in extrapolation.
    \item \textbf{Ensembling of Bayesian methods generally improves the results compared to the single-net from 7 points on average}. However, using Bayesian combined in ensemble increases the training and inference time by the number of members as well as the required memory size. Note that, for these methods, the ensemble training can be conducted in parallel, which can alleviate the training time burden.
\end{itemize}

\begin{table*}[ht]
\scriptsize
\centering
\begin{tabular}{l|l|ccc|cccc|cccc}
\toprule
& \multirow{3}{*}{Metric} & \multicolumn{3}{c|}{\textbf{Baselines}} & \multicolumn{4}{c|}{\textbf{MaxWEnt}} & \multicolumn{4}{c}{\textbf{MaxWEnt + Clip}} \\
  &  {} &  DE &   BN1 &  BN5  &  ME1 & ME1+ & ME5 & ME5+ &  ME1 & ME1+ & ME5 & ME5+  \\
\midrule
 & Avg NLL &         -0.69 & -0.61 &	\textbf{-0.75}  &             -0.44 &                -0.33 &      -0.41 &          -0.04 &             -0.61 &             -0.71 &                -0.59 &                -0.71 \\
& Avg ECE &          0.37 &  0.37 &	0.35 &              0.36 &                 0.33 &        0.36 &           0.39 &              0.31 &              0.36 &                 \textbf{0.29} &                 0.35 \\
 & Cov. & 0.61 & 0.58 & 0.65 & 0.73 & 0.75 & 0.79 & \textbf{0.82} & 0.65 & 0.68 & 0.63 & 0.64 \\
 & Int. Width & \textbf{1.00} & 1.33 & 1.85 & $>50$ & $>50$ & $>50$ & $>50$ & 3.07 & 1.19 & 1.07 & 1.12 \\
 & Coverage Conform & 0.91 & 0.93 & 0.86 & 0.93 & 0.92 & 0.91 & \textbf{0.94} & 0.93 & 0.93 & 0.90 & 0.90 \\
 \multirow{-6}{*}{\rotatebox[origin=c]{90}{\textbf{Extrapolation}}} & Int. Width Conform & 1.00 & 1.41 & 1.68 & $>50$ & $>50$ & $>50$ & $>50$ & 3.34 & 1.42 & \textbf{0.99} & \textbf{0.99} \\
\midrule
 & Avg NLL &         -0.45 & -0.49 &	\textbf{-0.54} &             -0.26 &                -0.12 &       -0.23 &          -0.06 &             -0.27 &             -0.45 &                -0.28 &                -0.45 \\
& Avg ECE &          0.33 &  0.32 &	\textbf{0.30} &              0.32 &                 \textbf{0.30} &        \textbf{0.30} &           0.34 &              0.33 &              0.33 &                 0.33 &                 0.33 \\
& Coverage & 0.65 & 0.57 & 0.64 & 0.74 & 0.78 & 0.81 & \textbf{0.84} & 0.59 & 0.59 & 0.65 & 0.65 \\
& Int. Width & 1.00 & 1.21 & 1.74 & $>50$ & $>50$ & $>50$ & $>50$ & 0.90 & \textbf{0.88} & 1.00 & 1.02 \\
& Coverage Conform & 0.91 & 0.90 & 0.90 & 0.93 & \textbf{0.96} & \textbf{0.94} & 0.92 & 0.93 & \textbf{0.94} & 0.91 & 0.91 \\
\multirow{-6}{*}{\rotatebox[origin=c]{90}{\textbf{Interpolation}}} & Int. Width Conform & 1.00 & 1.25 & 1.68 & $>50$ & $>50$ & 3.54 & $>50$ & 1.14 & 1.20 & 1.00 & \textbf{0.99} \\
\bottomrule
\end{tabular}
\caption{UCI experiments In-distribution performances. The average Negative Log Likelihood (NLL) and Expected Calibration Error (ECE) over the eight datasets are reported. The scores are 
computed on the test set, the lower the score the better. The number at the end of the acronyms corresponds to the number of networks (ME1 refers to a MaxWEnt single network and ME5 to an ensemble of 5 MaxWEnt networks). The coverage target level is 95\%. ``Int. Width'' indicates the average confidence interval width and ``Conf'' denotes conformalized metrics, i.e., the metrics computed after conformalization of the intervals on the validation dataset.}
\label{uci-results-id}
\end{table*}

Finally, to evaluate the in-distribution performance of the methods, we compute, on the test set, the Negative Log Likelihood (NLL) as well as the Expected Calibration Error (ECE) \citep{levi2022ECEregression}. To further assess the in-distribution performance of the methods, we also introduce two additional metrics: coverage and average interval width. Coverage measures the proportion of labels that fall within the predicted uncertainty intervals, while interval width evaluates the average length of these intervals. The average metrics computed over the eight datasets are reported in Table \ref{uci-results-id}. To evaluate the impact of clipping on the in-distribution performance, we also report the average metrics for the ``clipped'' MaxWEnt weight distribution: $q_{\phi} \sim \overline{w} + \min(\phi \odot z, C)$ (independent) and $q_{\phi} \sim \overline{w} + V \min(\phi \odot z, C)$ (SVD), with $C$ the clipping parameter selected in $[+\infty, 10, 5, \allowbreak 2, 1, 0.5, 0.2, 0.1, 0]$ according to the validation NLL performance. We observe that the MaxWEnt algorithms generally penalize the test NLL and ECE compared to the baselines. In particular, the average NLL of MaxWEnt-SVD (x5) is larger than the ones produced by the other methods, suggesting that stronger OOD detection results come with weaker test performances. However, we observe that the use of weight clipping improves the MaxWEnt test performances, which become comparable to those of the baselines. Our results reveal that, in the absence of clipping, the predicted uncertainty intervals tend to be overestimated, which, while resulting in better coverage, leads to excessively large interval widths. After applying clipping, we observe that both the length of the intervals and the coverage improve, becoming comparable to those achieved by the baseline methods, especially for the MaxWEnt-SVD (x5) and MaxWEnt-SVD (x5) (Me5 and Me5+). These results suggest that the user should use the ``unclipped'' MaxWEnt predicted uncertainties to perform OOD detection and the ``clipped'' MaxWEnt inferences to provide predictions for data identified as in-distribution. This requires two different inferences: one for OOD detection and one for prediction.

We emphasize that clipping is not used during training; but only in inference after the model has been trained. Additionally, clipping is not essential for out-of-distribution (OOD) detection, but rather for in-distribution (ID) prediction. Therefore, the choice of clipping parameter does not affect the training nor the OOD detection score. Regarding the impact of clipping on the entropy, clipping intuitively reduces the weight entropy as suggested by the reduction of the prediction uncertainty shown in Figure \ref{toy-clipping} for the synthetic experiment. Deriving the formula of the weight entropy after clipping is not straightforward because the clipped distribution is a mixture of continuous and discrete distributions. Insights to compute the entropy in such cases can be found in \citep{nair2006entropyMixture}.

% \com{BETTER EXPLAINING CLIPPING. Only in inference, no training no interaction with lambda and epoch, chosen with validation NLL, entropy formula?}

% \com{COMMENT ON COVERAGE AND CONFORMALIZATION. Large length with no clipping because some are very high, find similar performances}

\subsection{CityCam regression datasets}

\subsubsection{Setup}

This section is dedicated to uncertainty quantification on the real-world dataset CityCam \citep{Zhang2017WebCamT}. This dataset is composed of images gathered from several cameras monitoring the traffic in a city. Each camera records between $1$k and $6$k images dispatched over several days and hours. The task consists in counting the number of vehicles in the image using a neural network. This task is useful, for instance, to monitor the traffic in the city. To produce in-distribution vs OOD splits, we consider the three following experiments:
\begin{itemize}
    \item \textbf{Camera-Shift}: Images coming from ten different cameras are selected for this experiment. At each round, five cameras are randomly selected to form the training dataset, while the five remaining cameras are used as OOD dataset. On average, both datasets contain around $20$k images.
    \item \textbf{BigBus-Shift}: Images from five cameras are considered in this experiment. Some of them are marked as ``big-bus'' if a large vehicle masks a significant part of the image (cf. \citealt{Zhang2017WebCamT}). These images are selected to form the OOD dataset, while the remaining ones compose the training set. The in-distribution and OOD datasets respectively contain around $17$k and $1$k images.
    \item \textbf{Weather-Shift}: For this experiment, we consider the images gathered from three cameras recorded during February the $23^{\text{th}}$ from 9 am to 6 pm. On this particular day, weather conditions changed considerably between the beginning and end of the day. The dataset is split into two subsets: images recorded before $2$ pm are considered in-distribution, while the others are out-of-distribution. After $4$ pm, water drops landed on the cameras blur the images, causing a clear domain shift (cf. Table \ref{citycam-setup}).
\end{itemize}

\begin{table}[h]
\small
\centering
\begin{tabular}{lccc}
\toprule
& \textbf{Camera-Shift} & \textbf{BigBus-Shift} & \textbf{Weather-Shift} \\
\midrule
\RowColorb \rotatebox[origin=l]{90}{\quad \textbf{In-Dist}} & \includegraphics[width=0.2\linewidth]{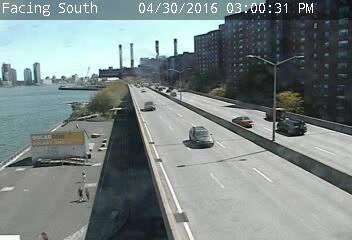} & \includegraphics[width=0.2\linewidth]{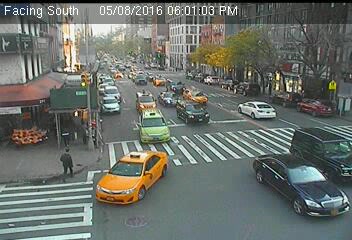} & \includegraphics[width=0.2\linewidth]{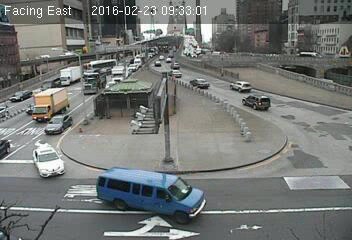} \\
\midrule
\RowColorc \rotatebox[origin=l]{90}{\quad \textbf{OOD}} & \includegraphics[width=0.2\linewidth]{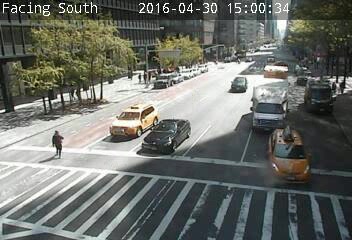} & \includegraphics[width=0.2\linewidth]{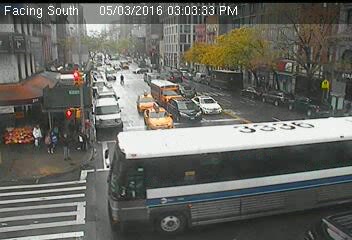} & \includegraphics[width=0.2\linewidth]{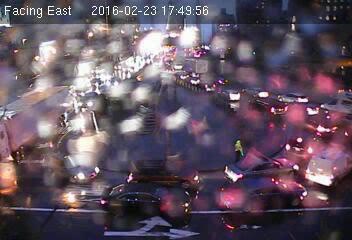} \\
\bottomrule
\end{tabular}
\caption{CityCam Experiments setup. An example of a webcam image is given for each domain for the three settings: Camera-Shift, BigBus-Shift and Weather-Shift.}
\label{citycam-setup}
\end{table}

The three previous experiments model different out-of-distribution scenarios. OOD data for the BigBus-Shift and Weather-Shift experiments can be considered as ``anomalies''. When a large vehicle masks an important part of the image or when the images become too blurry due to raindrops, it becomes very difficult to produce accurate predictions even for a human (cf. Table \ref{citycam-setup}). In this case, the user may expect uncertainty quantification methods to provide large prediction uncertainty in order to detect such abnormal events. The paradigm slightly differs for the Camera-Shift experiment. In this setting, the domain shift essentially lies in the background differences between cameras. Since the model is trained on five different cameras, the user might expect the model to ``generalize'' and to provide accurate predictions for the images of the novel cameras. 

As preprocessing, we use the features of the last layer of a ResNet50 \citep{He2016ResNet} pretrained on ImageNet \citep{deng2009imagenet}. We consider the same setting as for the UCI experiments in terms of base estimator, optimization parameters, callbacks and competitors. 

% Notice that the OOD dataset and a test set, extracted from the in-distribution data, are not used during the network training but to compute the OOD detection scores.

\subsubsection{Results}

For each experiment, we compute the AUROC metric and the False Positive Rate at 95 percent (FPR@95) \citep{Yang2022openoodBenchmark}, using the uncertainty scores given in Equation (\ref{uncertainty-regression}). The computed metrics are reported in Table \ref{citycam-results}. We observe an important discrepancy between the scores produced by MaxWEnt-SVD and the ones of other methods. The gap is particularly large for the Camera-Shift experiments, where every method produces an average FPR@95 score around 97\% while MaxWEnt-SVD provides a false positive rate of 29.4\% in the single-net setting and 15.3\% with ensembling. Similarly, MaxWEnt-SVD outperforms every other method for the BigBus-Shift and Weather-Shift experiments. The MaxWEnt algorithm without SVD parameterization provides the second-best results in the Bayesian and ensemble category, however, the performance gains compared to the baselines are much smaller than the ones obtained with the SVD parameterization. Notice, however, that MaxWEnt-SVD requires more computational time because of the additional matrix multiplication caused by the SVD alignment (cf. Section \ref{discuss-svd}).

\begin{table}[h]
\scriptsize
\centering
\begin{tabular}{l|cc|cc|cc}
\toprule
\multirow{2}{*}{Method} & \multicolumn{2}{c|}{Camera-Shift} & \multicolumn{2}{c|}{BigBus-Shift} & \multicolumn{2}{c}{Weather-Shift} \\
{} & FPR@95 & AUROC & FPR@95 & AUROC & FPR@95 & AUROC \\
\midrule
\midrule
DE             &          98.3 (1.4) &          52.1 (4.9) &         82.0 (2.2) &         77.9 (1.3) &          79.7 (2.1) &          77.5 (1.0) \\
NegCorr        &          \textbf{95.6 (0.6)} &          56.5 (4.3) &         78.4 (3.6) &         \textbf{79.9 (1.0)} &          80.0 (1.1) &          78.5 (1.9) \\
MOD            &          97.0 (1.7) &          \textbf{57.2 (2.2)} &         78.0 (4.0) &         79.0 (2.1) &          76.7 (2.4) &          78.5 (1.9) \\
AnchorNet      &          99.4 (0.4) &          51.0 (5.9) &         84.0 (1.7) &         78.2 (0.9) &          \textbf{73.4 (7.2)} &          \textbf{80.9 (3.2)} \\
RDE            &          97.4 (0.4) &          54.6 (3.9) &         \textbf{78.0 (1.4)} &         78.4 (1.1) &          77.1 (1.0) &          78.0 (0.6) \\
\midrule
BNN (x1)       &          98.0 (2.8) &          51.0 (2.3) &         93.3 (2.1) &         62.3 (7.6) &          76.6 (1.4) &          76.7 (1.8) \\
MCDropout (x1) &    99.9 (0.1) &	43.5 (4.4) &	92.2 (1.4) &	71.7 (1.8) &	77.1 (4.3) &	77.7 (1.9) \\
MaxWEnt (x1)   &          95.4 (0.0) &          51.2 (0.0) &         86.6 (0.0) &         78.7 (0.0) &          70.4 (0.0) &          77.3 (0.0) \\
MaxWEnt-SVD (x1) &          \textbf{29.4 (6.3)} &          \textbf{92.3 (2.5)} &         \textbf{57.5 (5.9)} &         \textbf{87.0 (2.5)} &          \textbf{61.1 (3.0)} &          \textbf{85.7 (0.7)} \\
\midrule
BNN (x5)       &          98.1 (2.5) &          53.5 (2.9) &         94.1 (1.4) &         64.0 (7.9) &          75.3 (1.9) &          80.2 (1.1) \\
MCDropout (x5) &     99.8 (0.1) &	56.5 (1.8) &	87.4 (1.4) &	78.0 (0.1) &	76.1 (2.5) &	80.6 (2.7) \\
MaxWEnt (x5)   &          93.6 (2.1) &          58.5 (5.9) &         79.1 (4.9) &         80.5 (1.2) &          67.8 (2.7) &          80.8 (0.3) \\
MaxWEnt-SVD (x5) &          \textbf{15.3 (6.3)} &          \textbf{96.9 (1.5)} &         \textbf{53.5 (3.4)} &         \textbf{88.6 (0.7)} &          \textbf{59.8 (7.6)} &          \textbf{86.7 (2.5)} \\
\bottomrule
\end{tabular}
\caption{CityCam Experiments: OOD Detection Results. Average AUROC and FPR@95 over three repetitions of the experiment are reported for each dataset and each method.}
\label{citycam-results}
\end{table}

A visualization of the MaxWEnt uncertainty evolution on the Weather-Shift experiment is presented in Figure \ref{citycam-weather-quali}. We compare the evolution of the confidence intervals produced by Deep Ensemble and MaxWEnt (x1) along the day. The left part of Figure \ref{citycam-weather-quali} corresponds to the images recorded between 2:00 pm and 2:30 pm which are the closest OOD data to the training domain. We observe that, in this time interval, both methods produce tight uncertainty intervals which well cover the ground-truth. The right part of Figure \ref{citycam-weather-quali} corresponds to the time interval 4:00 pm to 6:00 pm. During this period of time, raindrops progressively land on the camera objective and blur the image. At some point around 5:30 pm, the deterioration of the image becomes critical for the vehicles' counting. We observe that, in this case, the size of the confidence intervals produced by Deep Ensemble does not increase. Paradoxically, the Deep Ensemble method seems to produce more confident predictions around 5:30 pm than before 2:30 pm. Conversely, the MaxWEnt predicted uncertainty progressively grows after 5:00 pm in correlation with the increasing task difficulty. Notice that, at some point, even the ground-truth is not reliable anymore, as the human annotator was not able to accurately count the actual number of vehicles.

\begin{figure}[ht]
\center
\includegraphics[width=0.9\linewidth]{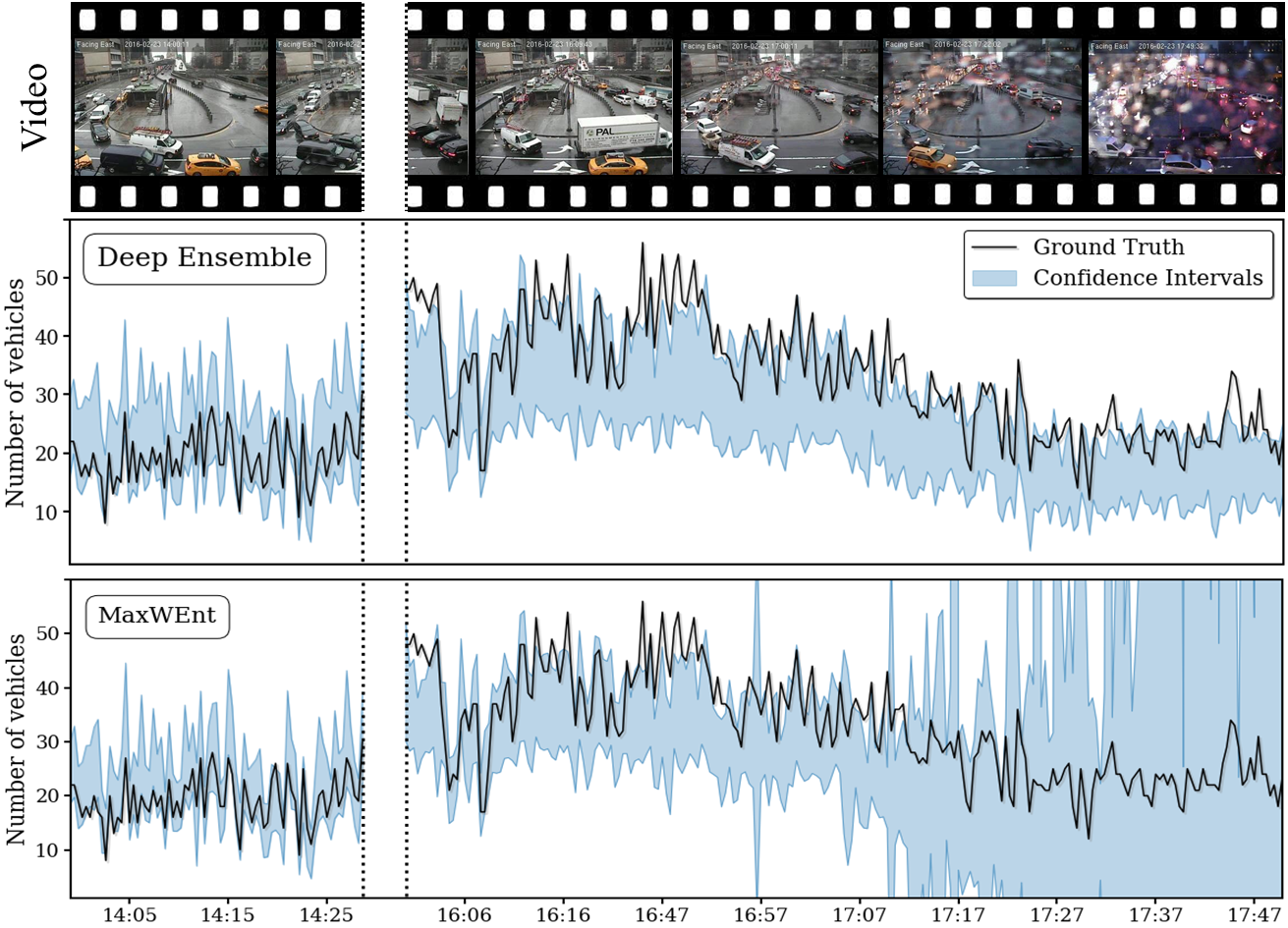}
\caption{Comparison of the uncertainty evolution across time for Deep Ensemble and MaxWEnt on the Weather-Shift OOD dataset. The top images are examples of the camera's recording. The length of confidence intervals (in light blue) is equal to $2 \sqrt{u(x)}$.}
\label{citycam-weather-quali}
\end{figure}

\subsection{OSR-OOD detection benchmark on classification datasets}

\subsubsection{Setup}

We consider the Open-Set-Recognition (OSR) and Out-of-Distribution detection extensive benchmark (OpenOOD), developed in \citet{Yang2022openoodBenchmark} which compares more than $30$ OSR and OOD detection methods on various classification datasets. The source code for the MaxWEnt experiments, conducted within the OpenOOD benchmark, is available on GitHub.\footnote{\url{https://github.com/antoinedemathelin/OpenOOD}} We focus on the OSR and OOD detection experiments:

\begin{itemize}
    \item \textbf{Open-Set-Recognition}: For the OSR benchmark, each dataset is divided into two parts by removing the instances corresponding to some classes from the training set. The goal is to detect whether an instance comes from a training class or a removed one. Each experiment is repeated five times with random selections of the training classes. Four datasets are considered: MNIST \citep{deng2012mnist}, CIFAR10, CIFAR100 \citep{krizhevsky2009cifar10} and TinyImageNet \citep{torralba2008tinyImageNet}.
    \item \textbf{Out-Of-Distribution Detection}: For the OOD detection benchmark, data coming from all classes are used at training time. The goal is then to discriminate between the test set and data coming from other datasets (with no overlapping classes). Two types of OOD datasets are considered: \textbf{Far-OOD} which corresponds to images very different from the training instances (e.g., CIFAR10 vs MNIST) and \textbf{Near-OOD} which corresponds to images close to the training instances (e.g., CIFAR10 vs CIFAR100). This last type of OOD detection is considered more challenging and is closely related to the OSR setting \citep{Yang2022openoodBenchmark}. Three datasets are considered: MNIST, CIFAR10 and CIFAR100.
\end{itemize}

A summary of the datasets used in each experiment is presented in Table \ref{table-openood-setup}. The AUROC score is used to evaluate the discrimination accuracy between test and OOD datasets. To compute the ``OOD scores'', a variety of algorithms are considered. They can be classified into two main categories:

\begin{itemize}
    \item \textbf{post-hoc Methods}, defined as methods that can be applied ``directly'' on a pretrained single network, independently of the training process. These methods are considered practical and model-agnostic \citep{Yang2022openoodBenchmark}. Among them, we can further distinguish the methods that do not require the training data: 
    MSP \citep{hendrycksbaseline2017MSP}, MLS \citep{hendrycks2022MLS}, ODIN \citep{liang2017ODIN}, EBO \citep{liu2020EBO}, GradNorm \citep{huang2021GradNorm}, ReAct \citep{sun2021react}, KLM \citep{hendrycks2022MLS} and TempScale \citep{guo2017TempScale} and the methods that uses the training set: 
    OpenMax \citep{bendale2016OpenMax}, MDS \citep{lee2018MDS}, Gram \citep{sastry2020Gram}, VIM \citep{wang2022vim}, KNN \citep{Sun2022KNNOOD}, DICE \citep{sun2022dice}. Notice that, except for MSP and MLS, all post-hoc methods at least require the use of a validation dataset to fine-tune their hyper-parameters.
    
    \item \textbf{Non post-hoc Methods}, including all methods which do not belong to the previous category, essentially because they require a specific training process (in terms of training loss or data augmentation for instance). This category of methods includes anomaly detection approaches: DeepSVDD \citep{ruff2018deepSVDD}, CutPaste \citep{li2021cutpaste}, DRAEM \citep{zavrtanik2021draem}; OOD detection methods with specific training process: ConfBranch \citep{devries2018ConfBranch}, G-ODIN \citep{hsu2020GODIN}, CSI \citep{tack2020csi}, ARPL \citep{chen2021ARPL}, MOS \citep{huang2021mos}, OpenGAN \citep{kong2021opengan}, VOS \citep{duvos2022VOS}, LogitNorm \citep{wei2022LogitNorm}; uncertainty-based approaches: MCdropout \citep{gal2016MCdropout}, Deep Ensemble \citep{Lakshminarayanan2017DeepEnsemble}; and data augmentation methods: MixUp \citep{thulasidasan2019mixup}, CutMix \citep{yun2019cutmix}, PixMix \citep{hendrycks2022pixmix}.
\end{itemize}

\begin{table}[ht]
\footnotesize
\centering
\begin{tabularx}{\linewidth}{>{\lratio{0.5}}X|X|>{\lratio{1.5}}X}
 Experiment &  In-Distribution Dataset & Out-Of-Distribution Datasets \\
\toprule
\multirow{4}{*}{\textbf{OSR}} & MNIST6 & MNIST4 \\
& CIFAR6 & CIFAR4 \\
& CIFAR50 & CIFAR50  \\
& TIN20 & TIN180 \\
\midrule
\multirow{3}{*}{\textbf{Near-OOD}} & MNIST & NOTMNIST,  FashionMNIST  \\
& CIFAR10 & CIFAR100, TIN200 \\
& CIFAR100 & CIFAR10, TIN200 \\
\midrule
\multirow{3}{*}{\textbf{Far-OOD}} & MNIST & CIFAR10, TIN200, Texture, Places-365  \\
& CIFAR10 & MNIST, SVHN, Texture, Places-365 \\
& CIFAR100 & MNIST, SVHN, Texture, Places-365 \\
\bottomrule
\end{tabularx}
\caption{OpenOOD Experiments Summary.}
\label{table-openood-setup}
\end{table}

According to \citet{Yang2022openoodBenchmark}, a fair comparison between methods should be done among each category, as non-post-hoc methods may benefit from their specific training process. Notice that this classification is not perfect. Post-hoc methods are considered model-agnostic, as they can generally be ``plugged'' into any pretrained network. However, most post-hoc methods generally require the end-layer of the network to produce logits. Post-hoc methods are considered practical because they generally require less computational time than the non post-hoc methods. This computational efficiency is mainly due to the training process economy. It should be mentioned, however, that inference time for some post-hoc methods may become important for large training datasets. For instance, KNN computes the distance between test data and all training data in the penultimate network layer. This may lead to important memory and computational burden if the training dataset is very large.

The MaxWEnt algorithm can be plugged directly into a pretrained neural network $h_{\overline{w}}$. It may not be totally considered as post-hoc, as it requires the additional training of the scale parameters $\phi$. However, this training may be done with a few epochs and also on a small extract of the training dataset. For our experiments, we trained MaxWEnt with the Adam optimizer \citep{Kingma2014Adam} with learning rate $5 \cdot 10^{-4}$ and $20$ epochs. We also consider an ensemble of five MaxWEnt networks. For inference, we use $P = 10$ predictions.

\subsubsection{Results}

The results are reported in Figure \ref{openood-results}, we compare AUROC scores between MaxWEnt (x1) and MaxWEnt (x5) (in red) to the previously mentioned methods (in blue). Note that we do not include OOD detection methods that require auxiliary OOD datasets during training for the comparison, as MaxWEnt does not use this kind of additional information. post-hoc methods are marked with a dagger $\dagger$. We group all experiments in the three main categories: OSR, NearOOD and FarOOD as described in Table \ref{table-openood-setup}. The reported AUROC scores are averaged over all experiments inside each category and over five different random seeds. We observe that MaxWEnt (x1) is ranked $3^{\text{rd}}$, $8^{\text{th}}$ and $2^{\text{nd}}$ for respectively the OSR, FarOOD and NearOOD experiments compared to all methods. When restricting the comparison to post-hoc methods, the MaxWEnt (x1) rankings become $1^{\text{st}}$, $3^{\text{rd}}$ and $1^{\text{st}}$ which demonstrates the effectiveness of the approach. It should be underlined that MaxWEnt (x1) is outperforming all other methods in the particular setting OSR and Near-OOD which are known to be the more challenging. For these two experiments, the MaxWEnt (x1) performances closely match those of Deep Ensemble, which requires the training of five neural networks and thus more computational resources. The ensemble of MaxWEnt networks provides an additional gain of around $2$ points of AUROC scores and is then ranked $1^{\text{st}}$, $3^{\text{rd}}$ and $1^{\text{st}}$ compared to all methods. However, this improvement requires the training of five networks, which increases the computational time.

\begin{figure}[!htbp]
    \centering
    \begin{minipage}{0.9\linewidth}
    \begin{minipage}{0.03\linewidth}
        \begin{tabular}{c}
            \rotatebox[origin=c]{90}{\small AUROC}
        \end{tabular}
    \end{minipage}
    \begin{minipage}{0.95\linewidth}
        \centering
        \includegraphics[width=\linewidth]{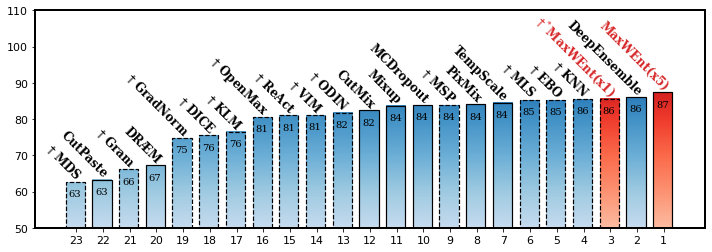}
    \end{minipage}
    \source{(a) OSR: MNIST6 + CIFAR6 + CIFAR50 + TIN20}
    \end{minipage} \\

    \begin{minipage}{0.9\linewidth}
    \begin{minipage}{0.03\linewidth}
        \begin{tabular}{c}
            \rotatebox[origin=c]{90}{\small AUROC}
        \end{tabular}
    \end{minipage}
    \begin{minipage}{0.95\linewidth}
        \centering
        \includegraphics[width=\linewidth]{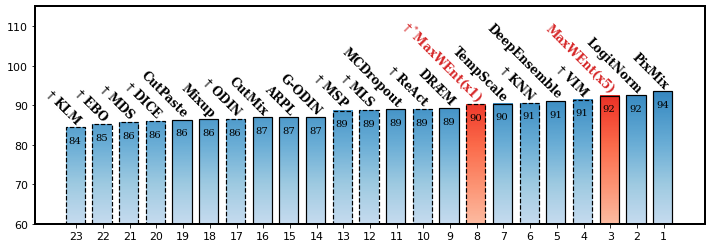}
    \end{minipage}
    \source{(b) FarOOD: MNIST + CIFAR10 + CIFAR100}
    \end{minipage} \\

    \begin{minipage}{0.9\linewidth}
    \begin{minipage}{0.03\linewidth}
        \begin{tabular}{c}
            \rotatebox[origin=c]{90}{\small AUROC}
        \end{tabular}
    \end{minipage}
    \begin{minipage}{0.95\linewidth}
        \centering
        \includegraphics[width=\linewidth]{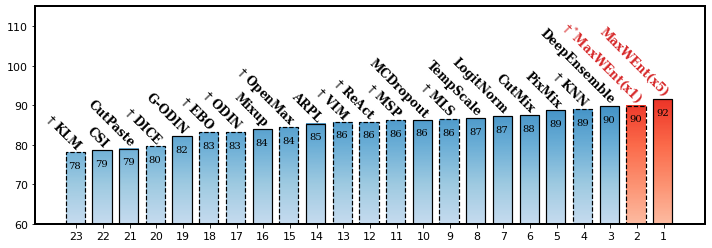} 
    \end{minipage}
    \source{(c) NearOOD: MNIST + CIFAR10 + CIFAR100}
    \end{minipage} \\
    
    \caption{OpenOOD benchmark ranking. Each method is ranked according to the average AUROC score computed for the three ``global'' experiments: OSR, Far-OOD, Near-OOD. Each experiment is performed on 3 or 4 different datasets (cf. Table \ref{table-openood-setup}). The top $23$ scores among the $32$ competitors are reported. post-hoc methods are marked with daggers, MaxWEnt(x1) can be considered as post-hoc as it applies on a pretrained network, although it requires additional training steps to learn the scaling parameters $\phi$.}
    \label{openood-results}
\end{figure}

\subsection{Implementation choices}
\label{impl-choice}

We present hereafter the implementation choices that we consider as ``good practices'' for MaxWEnt:

% \begin{itemize}

\subsubsection{Initialization}

In our proposed setup, the weight mean ${\mathbb{E}}_{q_{\phi}}[w] = \overline{w}$ is frozen during the MaxWEnt optimization and independent of the parameters $\phi$. The weight vector $\overline{w}$ is derived from a pretrained network $h_{\overline{w}}$ fitted on the training data. The $\phi$ parameters are initialized with a small constant value $C \ll 1$. Therefore, the weight distribution $q_{\phi}$ is initialized as a peaked distribution around $\overline{w}$, which already provides low empirical risk. Notice that the use of pretrained weights to initialize the mean of $q_{\phi}$ is similar to the common practice in Laplace approximation \citep{ritter2018scalableLaplace}, where the mean of the posterior distribution is set to the maximum a posteriori estimation (MAP). Moreover, in the case where a pretrained network is already available, the use of pretrained weights reduces the computational time. Note, finally, that we also consider a ``softplus'' activation of the $\phi$ parameters to smooth the increase of the weight entropy in earlier stages: $\phi = \log(1 + \exp(u))$.

\subsubsection{Entropy function}

In our implementation, we use a proxy for the entropy given by $H(\phi) \propto \sum_{k=1}^d |\phi_k|$. This choice smooths the increase of $\phi$ during the initial stages of training, offering improved optimization stability compared to the logarithmic function $ \log(\phi^2_k) $, which can over-penalize small components and cause excessive initial gradients (cf. Section \ref{entropy-func-discuss}).

\subsubsection{Trade-off parameter}

The MaxWEnt optimization (\ref{tradeoff-optim}) involves a trade-off between empirical risk minimization and entropy maximization, which is controlled by the trade-off parameter $\lambda$. A small $\lambda$ penalizes larger average risks, while a large $\lambda$ favors the weight distribution expansion. Obviously, the learner has to accept to penalize the empirical risk to offer room for the weight distribution to expand. In this perspective, we do not recommend selecting the trade-off parameter based on validation risk minimization. The $\lambda$ value should be selected large enough to speed up the increase of the weight entropy, while not too large to avoid optimization instabilities. In all our experiments, we choose to consider a fixed trade-off $\lambda = 10$ for simplicity.\footnote{In practice the entropy is scaled by the number of parameters such that $\lambda = 10/d$.}

An extended analysis of the impact of $\lambda$ on the training and the OOD detection performances is reported in Appendix \ref{app-hyp-impact}. This analysis reveals that OOD performances and entropy generally increase with higher $\lambda$ values, but when $\lambda$ is too large, the training becomes unstable and the validation loss quickly increases. As a result, a recommended approach is to use a stopping criterion to maintain a reasonable validation loss and select $\lambda$ based on the resulting weight entropy. A well-performing model is characterized by high entropy and reasonable validation loss. Both metrics can be measured without using OOD data, which enables a practical approach to select the $\lambda$ parameter.

% Figures ?? in Appendix ?? report the impact of lambda on the MaxWEnt training. 

% We observe through numerical experiments that a relatively large range of $\lambda$ value is acceptable to provide an efficient trade-off (cf. Section \ref{impact-tradeoff}). However, we do not find a satisfactory heuristic to set the hyper-parameter value. In all our experiments, we choose to consider a fixed trade-off $\lambda = 10$\footnote{In practice the entropy is scaled by the number of parameters such that $\lambda = 10/D$ with $D \in \mathbb{N}$ the dimension of $\phi$}. Obviously, choosing the same value of $\lambda$ in any case seems intuitively sub-optimal, as the range of the training risk can vary from one problem to another. However, we observe that, when normalizing the output labels in regression and using logits in classification, the value $\lambda = 10$ appears to be a good trade-off.

% \com{COMMENT RESULTS IN APPENDIX, COMMENT POTENTIAL STRATEGIES. Obs: if lambda small, can take some time before AUC grows, besides entropy grow stops at some point and so AUC. if lambda large entropy continue to grows with epoch and so val loss. AUC grows also but with large variance. Hence advocating for a stopping criterion. Strategies? Increase Lambda until entropy is growing on first epochs. Loss should be stable, ex: citycam weather SVD lambda 10 stay still while lambda 100 loos continue to grow. We want loss to stay stronger than entropy. Tke largest lambda with that property}

\subsubsection{Stopping criterion}
\label{imp-stopping-criterion}

In standard training of neural networks, a sufficiently large number of epochs is generally performed until the full convergence of the training loss. Then the learner restores the weights of the network for the epoch which provides the best validation risk. Of course, we cannot consider such a technique for the MaxWEnt optimization, as increasing the weight entropy generally induces a small degradation of the validation risk. We then propose to save the network weights if the validation risk is below a threshold $\tau$ computed at the beginning of the optimization. We propose to estimate the performance threshold $\tau$ by the validation risk of the pretrained network $h_{\overline{w}}$ plus a statistical error:
\begin{equation*}
    \tau = \mathcal{L}_{\mathcal{S}_{\text{val}}}(\overline{w}) + \frac{2}{n_{\text{val}}} \sqrt{ \sum_{(x, y) \in \mathcal{S}_{\text{val}}} \left(\ell(h_{\overline{w}}(x), y) -\mathcal{L}_{\mathcal{S}_{\text{val}}}(\overline{w}) \right)^2 }.
\end{equation*}
The second term is proportional to the standard deviation of the errors on the validation dataset. 

The hyperparameter impact study presented in Appendix \ref{app-hyp-impact} reveals that, in some cases, a significant variance in OOD detection metrics is observed for late epochs, coinciding with an important increase in the validation loss. In these cases, the stopping criterion effectively selects an epoch with lower variance in OOD metrics and a reasonable validation loss. This indicates that the stopping criterion heuristic provides a relevant guideline to end the training, which does not add further hyperparameter tuning.

% \com{COMMENT RESULTS IN APPENDIX, COMMENT POTENTIAL STRATEGIES}

% We can see that this criterion implies that when $n_{\text{val}}$ increases, the threshold converges to the validation loss of $h_{\overline{w}}$. As a consequence, if $n_{\text{val}} \to \infty$, the weight candidates are the true risk minimizers (cf. Section \ref{maxent-discuss}). 

\subsubsection{Ensemble}

It should be underlined that the proposed parameterizations (\ref{weight-reparam}) and (\ref{SVD-param}) limit the range of the weight distribution around a neighborhood of $\overline{w}$. A straightforward improvement would be to apply Algorithm (\ref{alg-training}) on a set of weights $\overline{w}^{(j)}$ coming from a pretrained deep ensemble \citep{Lakshminarayanan2017DeepEnsemble}. Conceptually, this comes down to describing $q_{\phi}$ as a mixture with, for any $j \in [|1, m|]$, $\phi^{(j)} \in \mathbb{R}^d$, $z^{(j)} \sim \mathcal{Z}$ and $\pi \sim \mathcal{U}(\{1, ..., m \})$:
\begin{equation*}
q_{\phi} \sim \sum_{j=1}^m \mathbb{1}(\pi=j) \, \omega(\phi^{(j)}, z^{(j)}),
\end{equation*}
with $\omega(\phi^{(j)}, z^{(j)}) = \overline{w}^{(j)} + \phi^{(j)} \odot z^{(j)}$ or $\omega(\phi^{(j)}, z^{(j)}) = \overline{w}^{(j)} + V \left( \phi^{(j)} \odot z^{(j)} \right)$. In practice, we apply Algorithm (\ref{alg-training}) to each of the pretrained networks with the scaling parameterization $\omega(\phi^{(j)}, z^{(j)})$. Notice that, if there is no overlap between the mixture components, the ensemble parameterization necessarily results in a weight distribution of higher entropy for the same empirical risk level, and then leads to a more efficient parameterization than the single network setting. A guideline to choose the centers $\overline{w}^{(j)}$ is then to avoid overlapping, which can be achieved with centers distant from each other. Thus, combining MaxWEnt with techniques such as RDE \citep{Angelo2021RepulsiveDeepEnsemble} and AnchorNet \citep{pearce2018AnchorNetwork} may offer increased performances.

% \end{itemize}

\section{Limitations and Perspectives}

In this work, we develop the MaxWEnt algorithm to improve OOD detection with stochastic neural networks. The main goal of MaxWEnt is to produce samples with larger weight diversity compared to standard Bayesian and ensemble methods. Our experiments show that MaxWEnt fulfills its promise, it increases the weight entropy and provides better OOD detection results. Moreover, we show that the more the weight entropy, the better the OOD detection (for the same level of average empirical error).

\begin{itemize}
    \item \textbf{Increasing the weight entropy}: The weight entropy increase is strongly conditioned by the weight parameterization. We show that the use of the SVD-parameterization is already an important improvement compared to the use of independent scaling parameters. However, more efficient parameterization may be obtained with other techniques such as normalizing flows \citep{louizos2017multiplicativeNormFlow} or weight subspaces \citep{izmailov2020subspaceInferenceBNN}. Nevertheless, the maximum entropy framework provides a general guideline for selecting the weight parameterization: an efficient stochastic model should enable a large increase of the weight entropy in low empirical risk regions of the weight space. 

\item \textbf{Penalized performances in-distribution}: We have seen that increasing the entropy penalizes the in-distribution performances. However, this negative result can be mitigated by the use of ``shrunk'' weight distribution obtained through weight clipping (cf. Sections \ref{synthetic-discuss} and \ref{uci-results-sec}). The user can use the MaxWEnt uncertainties to discriminate between ID and OOD data, and then use the prediction obtained with ``shrunk'' weight distribution for the ID data.

\item \textbf{SVD-parameterization for Convolutions}: For now, the SVD-parameterization is only developed for fully connected neural networks, but it may also be applied to convolutional layers. Convolutions apply the same kernel to multiple windows of one channel. To use the SVD-parameterization in this context, one idea is to concatenate all the windows on which the kernel is applied for all training data and then compute the SVD decomposition of the resulting dataset.

\item \textbf{General Bayesian and ensemble limitations}: The developed MaxWEnt approach improves upon Bayesian and ensemble methods in terms of weight diversity. However, it still inherits the other limitations of these approaches, which principally include the computational burden in training and inference. Future work will then consider the use of ``Laplace-like'' approximation to reduce the computational time of MaxWEnt (cf. Section \ref{discuss-svd}).

\end{itemize}

\section{Conclusion}

In this work, we tackle the over-confidence issue encountered with standard Bayesian and ensemble methods outside the training domain. Building on the maximum entropy principle, we show that penalizing the empirical average error with the weight entropy leads to larger hypothesis diversity and, then, to improved OOD detection. Theoretical analysis shows that the behavior of the developed MaxWEnt approach is related to the amplitude of the neuron activation on the training data. In MaxWEnt neural networks, weakly activated neurons play a more important role in the OOD detection in comparison to vanilla probabilistic networks. Motivated by this quest of entropy maximization and by the outcomes of our theoretical analysis, we propose the SVD parameterization to take advantage of correlations between weights with limited additional complexity. Numerical experiments show the benefit of the method and highlight the link between weight entropy and OOD detection performances. We show that the maximum entropy framework offers a guideline to rank two weight distributions with the same empirical risk, the one with the largest entropy should be preferred to improve OOD detection. Moreover, we advocate for the use of stochastic models that foster the increase of the weight entropy, as the SVD parameterization. We are convinced that this approach is a step forward in the safety of deep learning. Although many challenges have to be resolved such as the training and inference computational time.

\clearpage

\appendix

\section{Proofs}

\subsection{Proof of Proposition \ref{entropy-closeform}}
\label{proof-entropy-closeform}

% \begin{proof}
Let's consider a matrix $A \in \mathbb{R}^{d \times d}$ and a vector $\phi \in \mathbb{R}^d$, $\phi > 0$, such that the weights $w$ are written:
\begin{equation*}
\begin{split}
    w = \overline{w} + A (\phi \odot z),
\end{split}
\end{equation*}
with $z \sim \mathcal{Z}$ following either a multivariate normal or a uniform distribution. We demonstrate Proposition (\ref{entropy-closeform}) for any orthogonal matrix $A$. Indeed, the weight parameterizations (\ref{weight-reparam}) and (\ref{SVD-param}) correspond respectively to the specific cases $A = \text{Id}_d$ and $A = V$ which are both orthogonal matrices. 

% We consider both cases where $\mathcal{Z}$ respectively follows a multivariate Gaussian or uniform distribution.

\subsubsection{Gaussian Case}

To demonstrate the result in the Gaussian case $z \sim \mathcal{N}(0, \text{Id}_d)$, we first derive the two following preliminary results:
\begin{itemize}
    \item $z \sim \mathcal{N}(0, \text{Id}_d) \implies A (z \odot \phi) \sim \mathcal{N}(0, A^T \text{diag}(\phi^2) A)$, with $\text{diag}(\phi^2)$ the diagonal matrix of diagonal values $\phi^2$ (cf. Lemma \ref{lemma1}).
    \item The entropy of a multivariate Gaussian $\mathcal{N}(0, \Sigma)$ is written $C + \frac{1}{2} \log(|\text{det}(\Sigma)|)$ with $C>
    0$ a constant (independent of $\Sigma$) and $\text{det}(\Sigma)$ the determinant of $\Sigma$ (cf. Lemma \ref{lemma2}).
\end{itemize}

\begin{lemma}
\label{lemma1}
For any $A \in \mathbb{R}^{d \times d}$ and any $\phi \in \mathbb{R}^d$, we have:
\begin{equation*}
z \sim \mathcal{N}(0, \textnormal{Id}_d) \implies A (z \odot \phi) \sim \mathcal{N}(0, A \, \textnormal{diag}(\phi^2) A^T).
\end{equation*}
\end{lemma}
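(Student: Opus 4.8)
The plan is to reduce the Hadamard product to an ordinary matrix multiplication and then invoke the elementary fact that a linear image of a standard Gaussian vector is again Gaussian. First I would observe that $z \odot \phi = \textnormal{diag}(\phi)\, z$, where $\textnormal{diag}(\phi)$ is the diagonal matrix with entries $\phi_1, \dots, \phi_d$. Consequently $A(z \odot \phi) = M z$ with $M = A\,\textnormal{diag}(\phi) \in \mathbb{R}^{d \times d}$, and the entire claim becomes the standard statement that $Mz \sim \mathcal{N}(0, MM^T)$ whenever $z \sim \mathcal{N}(0, \textnormal{Id}_d)$.

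The core step is to establish both the Gaussianity and the parameters of $Mz$ simultaneously, which is most cleanly done through the characteristic function rather than through a change-of-variables density argument (the latter would require $M$ invertible, whereas $A$ and $\textnormal{diag}(\phi)$ may be singular). For any $t \in \mathbb{R}^d$, using $\mathbb{E}[\exp(i s^T z)] = \exp(-\tfrac{1}{2}\|s\|_2^2)$ for the standard Gaussian, I would compute
\begin{equation}
\mathbb{E}\!\left[ \exp(i\, t^T M z) \right] = \mathbb{E}\!\left[ \exp\!\big(i\, (M^T t)^T z\big) \right] = \exp\!\Big( -\tfrac{1}{2}\, (M^T t)^T (M^T t) \Big) = \exp\!\Big( -\tfrac{1}{2}\, t^T M M^T t \Big).
\end{equation}
This is exactly the characteristic function of the multivariate normal law $\mathcal{N}(0, MM^T)$, so by uniqueness of characteristic functions $Mz$ has that distribution, with mean $\mathbb{E}[Mz] = M\,\mathbb{E}[z] = 0$ confirmed directly.

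It then remains to identify the covariance. Since $\textnormal{diag}(\phi)$ is symmetric and $\textnormal{diag}(\phi)\,\textnormal{diag}(\phi) = \textnormal{diag}(\phi^2)$, I would write $M M^T = A\,\textnormal{diag}(\phi)\,\textnormal{diag}(\phi)^T A^T = A\,\textnormal{diag}(\phi^2)\,A^T$, which is the claimed covariance matrix. There is no genuine obstacle here: the only point requiring care is to avoid assuming invertibility of $M$, which is precisely why the characteristic-function route is preferable to manipulating densities. This lemma then feeds directly into Lemma \ref{lemma2} and the determinant computation, since for the orthogonal matrices $A = \textnormal{Id}_d$ and $A = V^T$ arising in the two parameterizations one has $|\textnormal{det}(A\,\textnormal{diag}(\phi^2)A^T)| = |\textnormal{det}(\textnormal{diag}(\phi^2))| = \prod_k \phi_k^2$, yielding the entropy expression of Proposition \ref{entropy-closeform}.
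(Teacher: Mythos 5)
Your proof is correct, and it takes a mildly but genuinely different route from the paper's. The paper asserts Gaussianity up front by citing the closure property ("linear combinations of Gaussian variables are Gaussians") and then separately computes the mean, $\mathbb{E}[A(z\odot\phi)] = A(\mathbb{E}[z]\odot\phi) = 0$, and the covariance, by expanding $\mathbb{E}\bigl[ (A(z\odot\phi))(A(z\odot\phi))^T \bigr] = A\,\mathbb{E}[(z\odot\phi)(z\odot\phi)^T]A^T = A\,\textnormal{diag}(\phi^2)A^T$. You instead fold everything into the single identity $A(z\odot\phi) = Mz$ with $M = A\,\textnormal{diag}(\phi)$ and derive the law of $Mz$ via its characteristic function, which establishes joint Gaussianity and the parameters in one stroke. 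What your route buys is rigor at exactly the point where the paper is loosest: the closure property the paper invokes is really a statement about joint Gaussianity of a transformed vector, and the characteristic-function computation is its proof; moreover, your argument transparently covers singular $M$ (degenerate covariance), where a density change-of-variables would fail — though note neither proof actually needs densities here, so this is a clarification rather than a repair. What the paper's route buys is brevity and elementary bookkeeping: pure moment algebra with the element-wise product, no appeal to uniqueness of characteristic functions. Your closing observation that orthogonality of $A$ gives $\lvert\textnormal{det}(A\,\textnormal{diag}(\phi^2)A^T)\rvert = \prod_k \phi_k^2$ matches how the paper uses the lemma downstream in Proposition \ref{entropy-closeform}.
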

\begin{proof}
    We first notice that linear combinations of Gaussian variables are Gaussians. Then, it appears that:
    \begin{equation*}
    \begin{split}
         \mathbb{E}[A (z \odot \phi)] & = A (\mathbb{E}[z] \odot \phi) = 0,
    \end{split}
    \end{equation*}
    and:
    \begin{equation*}
    \begin{split}
         \mathbb{V}[A (z \odot \phi)] & = \mathbb{E}[\left( A (z \odot \phi) \right) \left( A (z \odot \phi) \right)^T ]  \\
         & = \mathbb{E}[ A (z \odot \phi) (z \odot \phi)^T A^T ]  \\
         & = A \, \mathbb{E}[(z \odot \phi) (z \odot \phi)^T] A^T  \\
         & = A \, \mathbb{V}[z \odot \phi] A^T  \\
         & = A \, \textnormal{diag}(\phi^2) A^T .
    \end{split}
    \end{equation*}
    From which we conclude that $A (z \odot \phi) \sim \mathcal{N}(0, A \, \textnormal{diag}(\phi^2) A^T)$
    
\end{proof}

\begin{lemma}
\label{lemma2}
The entropy of a multivariate Gaussian $\mathcal{N}(0, \Sigma)$ is written $C + \frac{1}{2} \log(|\textnormal{det}(\Sigma)|)$ with $C>0$ a constant (independent of $\Sigma$) and $\textnormal{det}(\Sigma)$ the determinant of $\Sigma$.
\end{lemma}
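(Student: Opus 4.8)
The plan is to compute the differential entropy directly from its definition $H = \mathbb{E}_{w}[-\log p(w)]$, where $p$ is the density of $\mathcal{N}(0, \Sigma)$ and the expectation is taken over $w \sim \mathcal{N}(0, \Sigma)$. First I would write down the Gaussian density in dimension $d$,
\begin{equation}
p(w) = \frac{1}{(2\pi)^{d/2} \, |\textnormal{det}(\Sigma)|^{1/2}} \exp\!\left( -\tfrac{1}{2} \, w^T \Sigma^{-1} w \right),
\end{equation}
so that its negative log-density splits into a term independent of $w$ and a quadratic form:
\begin{equation}
-\log p(w) = \frac{d}{2} \log(2\pi) + \frac{1}{2} \log\!\left( |\textnormal{det}(\Sigma)| \right) + \frac{1}{2} \, w^T \Sigma^{-1} w.
\end{equation}

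The key step is then to take the expectation over $w \sim \mathcal{N}(0, \Sigma)$. The first two terms are deterministic and pass through the expectation unchanged. For the quadratic term, I would use the trace identity $w^T \Sigma^{-1} w = \textnormal{tr}(\Sigma^{-1} w w^T)$ together with the linearity of trace and expectation, giving $\mathbb{E}[w^T \Sigma^{-1} w] = \textnormal{tr}(\Sigma^{-1} \, \mathbb{E}[w w^T]) = \textnormal{tr}(\Sigma^{-1} \Sigma) = \textnormal{tr}(\textnormal{Id}_d) = d$, where I use $\mathbb{E}[w w^T] = \Sigma$ since $w$ is centered. Collecting terms yields
\begin{equation}
H = \frac{d}{2} \log(2\pi) + \frac{d}{2} + \frac{1}{2} \log\!\left( |\textnormal{det}(\Sigma)| \right) = C + \frac{1}{2} \log\!\left( |\textnormal{det}(\Sigma)| \right),
\end{equation}
with $C = \frac{d}{2}\left( \log(2\pi) + 1 \right) = \frac{d}{2} \log(2\pi e)$, which is manifestly positive and, crucially, independent of $\Sigma$. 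This establishes the claimed form of the entropy.

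I do not expect a genuine obstacle here, as the result is classical; the only point requiring minor care is the trace manipulation of the quadratic form, which cleanly reduces the $w$-dependent contribution to the constant $d/2$ regardless of $\Sigma$. I would note that the argument tacitly assumes $\Sigma$ is positive definite (so that $\Sigma^{-1}$ and the density exist); in the application to Proposition~\ref{entropy-closeform} this holds because $\Sigma = A\,\textnormal{diag}(\phi^2)\,A^T$ with $A$ orthogonal and $\phi_k \neq 0$, as supplied by Lemma~\ref{lemma1}. Combining this lemma with the present computation then gives $H(\phi) \propto \log\!\left(|\textnormal{det}(\textnormal{diag}(\phi^2))|\right) = \sum_k \log(\phi_k^2)$ up to an additive constant, which is exactly the ingredient needed to close the Gaussian case of Proposition~\ref{entropy-closeform}.
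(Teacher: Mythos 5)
Your proof is correct and follows essentially the same route as the paper: write the Gaussian density, split $-\log p(w)$ into the constant term, the log-determinant term, and the quadratic form, then show that the expected quadratic form contributes exactly $d/2$. The only difference is a single step — you evaluate $\mathbb{E}[w^T \Sigma^{-1} w] = \mathrm{tr}\!\left(\Sigma^{-1}\,\mathbb{E}[w w^T]\right) = \mathrm{tr}(\textnormal{Id}_d) = d$ via the trace identity, whereas the paper diagonalizes $\Sigma^{-1}$ and changes variables to reach the same value; your version is marginally cleaner but the argument is the same.
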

\begin{proof}
Let's consider the multivariate Gaussian variable $Z \sim \mathcal{N}(0, \Sigma)$ with $\Sigma \in \mathbb{R}^{d \times d}$. We denote $p_Z(z)$ its probability density function such that, for any $z \in \mathbb{R}^d$:
\begin{equation*}
    \begin{split}
         p_Z(z) = \frac{1}{\sqrt{(2 \pi)^d |\textnormal{det}(\Sigma)|}} \exp \left(- \frac{1}{2} z^T \Sigma^{-1} z \right).
    \end{split}
    \end{equation*}
Then,
\begin{equation*}
\begin{split}
     -2 \log(p_Z(z)) = d \log(2 \pi) + \log(|\textnormal{det}(\Sigma)|) + z^T \Sigma^{-1} z.
\end{split}
\end{equation*}
We now consider the eigen-decomposition of $\Sigma^{-1}$, such that $\Sigma^{-1} = Q^T \textnormal{diag}(1/\lambda) Q$ with $Q \in \mathbb{R}^{d \times d}$ an orthogonal matrix and $\lambda$ the vector of eigenvalues of $\Sigma$. The following equality holds:
\begin{equation*}
\begin{split}
     z^T \Sigma^{-1} z = (Q z)^T \textnormal{diag}(1/\lambda) (Q z) = u^T \textnormal{diag}(1/\lambda) u = \sum_{k=1}^d \frac{u_k^2}{\lambda_k} \, .
\end{split}
\end{equation*}
Moreover, for any $z \sim \mathcal{N}(0, \Sigma)$, the variable $u = Q z$ follows the distribution $\mathcal{N}(0, Q \Sigma Q^T) = \mathcal{N}(0, \textnormal{diag}({\lambda}))$. We then deduce that:
\begin{equation*}
\begin{split}
     \mathbb{E}[z^T \Sigma^{-1} z] = \sum_{k=1}^d \frac{\mathbb{E}[u_k^2]}{\lambda_k^2} = \sum_{k=1}^d \frac{\lambda_k^2}{\lambda_k^2} = d .
\end{split}
\end{equation*}
Finally, we can derive the following formula for the entropy of $Z$:
\begin{equation*}
\begin{split}
     -\mathbb{E}[\log(p_Z(z))] = C + \frac{1}{2} \log(|\textnormal{det}(\Sigma)|),
\end{split}
\end{equation*}
with $C \in \mathbb{R}$ verifying: $C = \frac{d}{2} \log(2 \pi) + \frac{d}{2}$

\end{proof}

Let's now consider the variable $z \sim \mathcal{N}(0, \textnormal{Id}_d)$. According to Lemma (\ref{lemma1}), the variable $A (z \odot \phi)$ follows the distribution $\mathcal{N}(0, A \, \textnormal{diag}(\phi^2) A^T)$. Then, according to Lemma (\ref{lemma2}) and by invariance of the entropy by translation, the entropy of the distribution $q_{\phi}(w) \sim \overline{w} + A (z \odot \phi)$ is written:
\begin{equation*}
\begin{split}
    H(\phi) & = -\mathbb{E}[\log(q_{\phi}(w))] \\
    & = C + \frac{1}{2} \log(|\textnormal{det}(A \, \textnormal{diag}(\phi^2) A^T)|) \\
    & = C + \frac{1}{2} \log(|\textnormal{det}(A) \, \textnormal{det}(\textnormal{diag}(\phi^2))  \textnormal{det}(A^T)|),
\end{split}
\end{equation*}
with $C \in \mathbb{R}$ a constant. Then, as $A$ is an orthogonal matrix, we have $|\textnormal{det}(A)| = |\textnormal{det}(A^T)| = 1$ and:
\begin{equation*}
\begin{split}
    H(\phi) & = C + \frac{1}{2} \log( |\textnormal{det}(\textnormal{diag}(\phi^2))|) \\
    & = C + \frac{1}{2} \log(|\prod_{k=1}^d \phi_k^2|) \\
    & = C + \frac{1}{2} \sum_{k=1}^d 
 \log(\phi_k^2) .
\end{split}
\end{equation*}

\subsubsection{Uniform Case}

The probability density function $p_{Z}(z)$ of a uniform distribution defined over the parallelotope $\mathcal{P}$ described by the matrix $\Sigma \in \mathbb{R}^{d \times d}$ is written:
\begin{equation*}
    p_{Z}(z) = \begin{cases} 
      1 / \mathcal{V}(\mathcal{P}) & z \in \mathcal{P} \\
      0 & z \notin \mathcal{P}
   \end{cases},
\end{equation*}
with $\mathcal{P}$ the subset of $\mathbb{R}^d$ defined as $\mathcal{P} = \{ \Sigma x \, ; \; x \in [0, 1]^{d} \}$ and $\mathcal{V}(\mathcal{P})$ the volume of $\mathcal{P}$ which verifies $\mathcal{V}(\mathcal{P}) = |\textnormal{det}(\Sigma)|$.

Let's now consider the variable $Z$ of probability density function $p_{Z}(z)$, the entropy of $Z$ is then written:
\begin{equation*}
   \mathbb{E}[-\log(p_{Z}(z))] = \log(|\textnormal{det}(\Sigma)|).
\end{equation*}

We notice that, if $z \sim \mathcal{U}([-\sqrt{3}, \sqrt{3}]^d)$, then the variable $A (z \odot \phi) = A \, \textnormal{diag}(\phi) z$ is defined as the uniform distribution over the parallelotope $\mathcal{P} = \{ A \, \textnormal{diag}(\phi) x \, ; \; x \in [-\sqrt{3}, \sqrt{3}]^{d} \}$, which is well-defined as $\textnormal{det}(A \, \textnormal{diag}(\phi)) \neq 0$. As the volume of a subset is invariant by translation, we have $\mathcal{V}(\mathcal{P}) = \mathcal{V}(\tilde{\mathcal{P}})$ with $\tilde{\mathcal{P}}$ the parallelotope defined as $\tilde{\mathcal{P}} = \{ A \, \textnormal{diag}(\phi) x \, ; \; x \in [0, 2 \sqrt{3}]^{d} \} = \{ 2 \sqrt{3} A \, \textnormal{diag}(\phi) x \, ; \; x \in [0, 1]^{d} \}$. We then deduce that the entropy of $q_{\phi}(w) \sim \overline{w} + A (z \odot \phi)$ %verifies:
\begin{equation*}
\begin{split}
   H(\phi) & = \mathbb{E}[-\log(q_{\phi}(w))] \\ 
   & = \log(|\textnormal{det}(2 \sqrt{3} A \, \textnormal{diag}(\phi))|) \\
   & = \log(|\textnormal{det}(A)| \, |\textnormal{det}(2 \sqrt{3} \, \textnormal{diag}(\phi))|).
\end{split}
\end{equation*}
Finally, as $A$ is an orthogonal matrix, we have $|\textnormal{det}(A)| = 1$ and:
\begin{equation*}
\begin{split}
   H(\phi) & = \log(|\textnormal{det}(2 \sqrt{3} \, \textnormal{diag}(\phi))|) \\
   & = 2^{d-1} \sqrt{3}^d \sum_{k=1}^b \log(\phi_k^2).
\end{split}
\end{equation*}

\subsection{Proof of Proposition \ref{thm-lin-indep}}
\label{appendix-thm-lin-indep}

% \begin{equation}
% \label{appendix-eq-lin-indep}
% \begin{split}
%     {\mathbb{E}}_{\mathcal{Z}}\left[ || X (\overline{w} + \phi \odot z) - y ||_2^2 \right] - || X \overline{w} - y ||_2^2 & = {\mathbb{E}}_{\mathcal{Z}}\left[ || X (\phi \odot z) ||^2_2 \right] \\
%     & = {\mathbb{E}}_{\mathcal{Z}}\left[ \sum_{i=1}^n \sum_{k=1}^b \sum_{j=1}^b X_{ik} X_{ij} \phi_k \phi_j z_k z_j \right] \\
%     & = \sum_{i=1}^n \sum_{k=1}^b \sum_{j=1}^b X_{ik} X_{ij} \phi_k \phi_j {\mathbb{E}}_{\mathcal{Z}}[z_k z_j] \\
%     & = \sum_{i=1}^n \sum_{k=1}^b X_{ik}^2 \phi_k^2 \; \; \text{(considering that $\mathbb{V}[z] = \text{Id}_b$)} \\
%     & = \sum_{k=1}^b \left( \sum_{i=1}^n  X_{ik}^2 \right) \phi_k^2 \\
%     & = \sum_{k=1}^b n \, a_k^2 \phi_k^2 .
% \end{split}
% \end{equation}

\begin{proof}
Let's consider $\phi \in \mathbb{R}^b$ and $z \sim \mathcal{Z}$. The training risk for the weight $w = \overline{w} + \phi \odot z$ can be written as follows:
\begin{equation*}
\begin{split}
     || X (\overline{w} + \phi \odot z) - y ||_2^2 & = || X (\phi \odot z) + X \overline{w} - y ||^2_2 \\
    & =  || X (\phi \odot z) ||^2_2 + \langle X (\phi \odot z), X \overline{w} - y \rangle + || X \overline{w} - y ||_2^2.
\end{split}
\end{equation*}
When averaging over $z \sim \mathcal{Z}$, considering that $\mathbb{E}[z] = 0$, we obtain:
\begin{equation}
\label{appendix-eq-lin-indep}
\begin{split}
    {\mathbb{E}}_{\mathcal{Z}}\left[ || X (\overline{w} + \phi \odot z) - y ||_2^2 \right] - || X \overline{w} - y ||_2^2 & = {\mathbb{E}}_{\mathcal{Z}}\left[ || X (\phi \odot z) ||^2_2 \right] \\
    & = \sum_{i=1}^n {\mathbb{E}}_{\mathcal{Z}}\left[ \left( \sum_{k = 1}^b X_{ik} \phi_k z_k \right)^2 \right] \\
    & = \sum_{i=1}^n \sum_{k=1}^b X_{ik}^2 \phi^2_k \\
    & = n \sum_{k=1}^b a_k^2 \phi^2_k
\end{split}
\end{equation}
The objective function of Problem (\ref{linear-optim}) can then be written, for any $\phi \in \mathbb{R}^b$:
\begin{equation*}
\begin{split}
    G(\phi) = \sum_{k=1}^b \left( a_k^2 \phi_k^2 - \lambda \log(\phi_k^2) \right) .
\end{split}
\end{equation*}
The objective function of Problem (\ref{linear-optim}) is convex and admits a solution. Moreover, the partial derivative of the objective with respect to $\phi_k^2$ is written:
\begin{equation*}
    \frac{\partial G(\phi)}{\partial \phi_k^2} = a_k^2 - \frac{\lambda}{\phi_k^2} \, .
\end{equation*}
As a consequence, the gradient of $G$ is null if and only if
\begin{equation*}
    \phi_k^2 = \frac{\lambda}{a_k^2},
\end{equation*}
which is well-defined when assuming $a_k^2 > 0$.

\end{proof}

\subsection{Proof of Proposition \ref{thm-lin-notindep}}
\label{appendix-thm-lin-notindep}

\begin{proof} 
Let's consider $\phi \in \mathbb{R}^b$, $V$ the matrix of eigenvectors of $\frac{1}{n} X^T X$ with $s^2$ the corresponding vector of eigenvalues and $z \sim \mathcal{Z}$. The average training risk for the weight $w = \overline{w} + V (\phi \odot z)$ can be written as follows:
\begin{equation*}
\begin{split}
    {\mathbb{E}}_{\mathcal{Z}}\left[\frac{1}{n} || X (\overline{w} + V (\phi \odot z)) - y ||_2^2 \right] = {\mathbb{E}}_{\mathcal{Z}}\left[\frac{1}{n} || X V (\phi \odot z) ||^2_2 \right] + \frac{1}{n} || X \overline{w} - y ||_2^2 .
\end{split}
\end{equation*}
We notice that:
\begin{equation*}
\begin{split}
     \frac{1}{n} || X V (\phi \odot z) ||^2_2 & = \frac{1}{n} || X V \, \textnormal{diag}(\phi) z ||^2_2 \\
     & = z^T \textnormal{diag}(\phi)^T V^T \left( \frac{1}{n} X^T X \right) V \textnormal{diag}(\phi) z \\
     & = z^T \textnormal{diag}(\phi)^T \textnormal{diag}(s^2) \textnormal{diag}(\phi) z \\
     & = z^T \textnormal{diag}(s^2 \phi^2) z \\
     & = \sum_{k=1}^b s_k^2 \phi_k^2 z_k^2 .
\end{split}
\end{equation*}
Then,
\begin{equation}
\label{appendix-eq-lin-dep}
\begin{split}
    {\mathbb{E}}_{\mathcal{Z}}\left[\frac{1}{n} || X (\overline{w} + V (\phi \odot z)) - y ||_2^2 \right] = \sum_{k=1}^b s_k^2 \phi_k^2 + \frac{1}{n} || X \overline{w} - y ||_2^2 .
\end{split}
\end{equation}
The continuation of the proof is similar to the proof in Appendix (\ref{appendix-thm-lin-indep}) with $s_k^2$ instead of $a_k^2$.

\end{proof}

\subsection{Proof of Proposition \ref{thm-wf}}
\label{appendix-thm-compar}

\begin{proof}
Let $q^{(1)}_{\phi^*}$, $q^{(2)}_{\phi^*}$ be the respective optimal weight distributions for the scaling and the SVD parameterization. Then,
\begin{gather*}
    q^{(1)}_{\phi^*} \sim \overline{w} + \frac{\lambda}{a} \odot z \\
    q^{(2)}_{\phi^*} \sim \overline{w} + V ( \frac{\lambda}{s} \odot z), 
\end{gather*}
with $z \sim \mathcal{Z}$. Considering Equations (\ref{appendix-eq-lin-indep}) and (\ref{appendix-eq-lin-dep}), both average empirical losses are written:
\begin{gather*}
    {\mathbb{E}}_{q^{(1)}_{\phi^*}}\left[ \mathcal{L}_{\mathcal{S}}(w) \right] = \sum_{k=1}^b \frac{\lambda a_k^2}{a_k^2} + \frac{1}{n} || X \overline{w} - y ||_2^2 \\
    {\mathbb{E}}_{q^{(2)}_{\phi^*}}\left[ \mathcal{L}_{\mathcal{S}}(w) \right] = \sum_{k=1}^b \frac{\lambda s_k^2}{s_k^2} + \frac{1}{n} || X \overline{w} - y ||_2^2 .
\end{gather*}
Then,
\begin{equation*}
    {\mathbb{E}}_{q^{(1)}_{\phi^*}}\left[ \mathcal{L}_{\mathcal{S}}(w) \right] = {\mathbb{E}}_{q^{(2)}_{\phi^*}}\left[ \mathcal{L}_{\mathcal{S}}(w) \right] = \lambda \, b + \frac{1}{n} || X \overline{w} - y ||_2^2 .
\end{equation*}
Moreover, both entropy can be written as:
\begin{gather*}
    {\mathbb{E}}_{q^{(1)}_{\phi^*}}\left[ -\log(q^{(1)}_{\phi^*}) \right] = 
    - \sum_{k=1}^b \log(a_k^2) + b \log(\lambda)
    \\ {\mathbb{E}}_{q^{(2)}_{\phi^*}}\left[ - \log(q^{(2)}_{\phi^*}) \right] = 
    - \sum_{k=1}^b \log(s_k^2) + b \log(\lambda) .
\end{gather*}
Let's denote $M = \frac{1}{n} X^T X$, by definition, we have the following equalities:
\begin{gather}
    \label{appendix-diago-eq}
    M = V^T \, \textnormal{diag} (s^2) V \\
    \label{appendix-diag-elem-eq}
    M_{ii} = a_i^2 \; \forall i \in [|1, b|] .
\end{gather}
Equation (\ref{appendix-diago-eq}) implies that $M = U U^T \; \textnormal{with} \; U = V^T \, \textnormal{diag} (s) V$. For any $i \in [|1, b|]$, we denote $u_i \in \mathbb{R}^b$ the $i^{th}$ row vector of the matrix $U$ and $||u_i||_2 = \sqrt{\sum_{j=1}^b U_{ij}^2}$ its corresponding Euclidean norm.

Applying the Hadamard inequality to the matrix $U$, we obtain that:
\begin{equation*}
    \textnormal{det}(U) \leq \prod_{i=1}^b ||u_i||_2 .
\end{equation*}
Then, the formula $U = V^T \, \textnormal{diag} (s) V$ implies that $\textnormal{det}(U) = \prod_{i=1}^b s_i$ and the equality $M = U U^T$ implies that $M_{ii} = \sum_{j=1}^b U_{ij}^2 = ||u_i||_2^2$. Considering Equation (\ref{appendix-diag-elem-eq}), we then deduce that:
\begin{equation*}
    \prod_{i=1}^b s^2_i \leq \prod_{i=1}^b a_i^2 .
\end{equation*}
From which we conclude that:
\begin{equation*}
\begin{split}
    -\log \left( \prod_{i=1}^b s^2_i \right)  \geq -\log \left( \prod_{i=1}^b a_i^2 \right) & \implies
    - \sum_{i=1}^b \log(s_i^2)  \geq - \sum_{i=1}^b \log(a_i^2) \\
    & \implies
    {\mathbb{E}}_{q^{(2)}_{\phi^*}}\left[ - \log(q^{(2)}_{\phi^*}) \right]  \geq {\mathbb{E}}_{q^{(2)}_{\phi^*}}\left[ - \log(q^{(1)}_{\phi^*}) \right] .
\end{split}
\end{equation*}

\end{proof}

\subsection{Proof of Proposition \ref{thm-multi-closeform}}
\label{proof-thm-multi-closeform}

The proof consists in first rewriting the optimization problem (\ref{multi-layer}) as a maximum entropy problem with a constraint over the average empirical risk. Then, we show that $\phi^*$ is solution of the optimization problem (OP) augmented with additional equality constraints in the hidden layers. We then remove the constraint over the average empirical risk and show that the solution $\phi^{\dagger}$ of the resulting OP provides a distribution with higher entropy than $\phi^*$. By splitting the OP in sub-optimization problems by hidden layer, we show that $\phi^{\dagger}$ verifies Equation (\ref{thm-multi-cond-phi-eq}). Then, using recursively Assumption (\ref{activ-func}) on the activation function, we show that, for any layer, the first and second moments of the neuron activation are the same for both distribution $q_{\phi^{\dagger}}$ and $q_{\phi^*}$. We then prove the equality of empirical risk for $q_{\phi^{\dagger}}$ and $q_{\phi^*}$, leading to show that $\phi^{\dagger}$ is solution of Problem (\ref{multi-layer}), from which we conclude that $\phi^{\dagger} = \phi^*$, as the solution is unique.

\begin{proof}
    Let's consider $\overline{w} \in \mathbb{R}^d$ and, for any $\phi \in \mathbb{R}^d$, the distribution $q_{\phi} \sim \overline{w} + \phi \odot z$ with $z \sim \mathcal{Z}$ such that $\mathcal{Z} \sim \mathcal{U}([-\sqrt{3}, \sqrt{3}]^d)$ or $\mathcal{Z} \sim \mathcal{N}(0, \textnormal{Id}_d)$. The optimization problem (\ref{multi-layer}) is written:
\begin{equation}
\label{app-multi-optim1}
    \min_{\phi \in \mathbb{R}^d} {\mathbb{E}}_{q_{\phi}}\left[\mathcal{L}_{\mathcal{S}}(w) \right] - \lambda \sum_{k=1}^d \log(\phi_k^2) .
\end{equation}
It is assumed that the above optimization problem has a unique solution, denoted $\phi^* \in \mathbb{R}^d$. Then, there exists $\tau \in \mathbb{R}_+$ such that $\phi^*$ verifies the following optimization problem:
\begin{equation}
\label{app-multi-optim2}
\begin{gathered}
    \max_{\phi \in \mathbb{R}^d} \quad \sum_{k=1}^d \log(\phi_k^2) \\
    \textnormal{subject to} \quad
     {\mathbb{E}}_{q_{\phi}}\left[\mathcal{L}_{\mathcal{S}}(w) \right] \leq \tau .
\end{gathered}
\end{equation}
Indeed, for $\tau = {\mathbb{E}}_{q_{\phi^*}}\left[\mathcal{L}_{\mathcal{S}}(w) \right]$, if we denote $\phi^{**} \in \mathbb{R}^d$ the solution of problem (\ref{app-multi-optim2}), then \linebreak $\sum_{k=1}^d \log({\phi^{**}_k}^2) \geq \sum_{k=1}^d \log({\phi^*_k}^2)$ and $ {\mathbb{E}}_{q_{\phi^{**}}}\left[\mathcal{L}_{\mathcal{S}}(w) \right] \leq \tau$ which implies that:
\begin{equation*}
    {\mathbb{E}}_{q_{\phi^{**}}}\left[\mathcal{L}_{\mathcal{S}}(w) \right] - \lambda \sum_{k=1}^d \log({\phi^{**}_k}^2) \leq {\mathbb{E}}_{q_{\phi^*}}\left[\mathcal{L}_{\mathcal{S}}(w) \right] - \lambda \sum_{k=1}^d \log({\phi^*_k}^2)  .
\end{equation*}
From which we deduce that $\phi^{**} = \phi^*$, as the solution of Problem (\ref{app-multi-optim1}) is assumed unique. Moreover, $\phi^*$ is the unique solution of Problem (\ref{app-multi-optim2}).

\smallskip

\noindent For each layer, we define the amplitude of the input neuron activation on average over the training data:
\begin{gather*}
\label{app-multi-ampl}
a_{(l, k)}^2 = \frac{1}{n} \sum_{i=1}^n {\mathbb{E}}_{q_{\phi^*}}[\psi_{(l, k)}(x_i)^2]   \quad \forall \, l \in [|0, L|]; k \in [|1, b|] .
\end{gather*}
We also define the quantities $\sigma_{(l, j)}^2$, related to the variance of the output neurons, before activation, on average over the training data:
\begin{gather*}
\label{app-multi-sigma}
\sigma_{(l, j)}^2 = \frac{1}{n} \sum_{i=1}^n {\mathbb{V}}_{q_{\phi^*}}\left[\psi_{(l)}(x_i)^T \left(w_{(l, j)} - \overline{w}_{(l, j)} \right) \right] \quad \forall \, l \in [|0, L|]; j \in [|1, b_l|] ,
\end{gather*}
with $b_l = 1$ if $l = L$ and $b_l=b$ otherwise.

\noindent Let's now take $l \in [|0, L|]$ and $j \in [|1, b_l|]$, considering the independence between $\psi_{(l)}$ and $z_{(l, j)}$, we have:
\begin{equation*}
\label{app-eq-cov1}
\begin{split}
    n \sigma_{(l, j)}^2 & = \sum_{i=1}^n {\mathbb{V}}_{q_{\phi^*}}[\psi_{(l)}(x_i) (\phi_{(l, j)}^* \odot z_{(l, j)})]\\
    & = \sum_{i=1}^n {\mathbb{V}}_{q_{\phi^*}} \left[ \sum_{k=1}^b \psi_{(l, k)}(x_i) \phi_{(l, j, k)}^*  z_{(l, j, k)} \right] \\
    & = \sum_{i=1}^n \sum_{u=1}^b \sum_{v=1}^b \phi_{(l, j, u)}^* \phi_{(l, j, v)}^* \textnormal{Cov}\left(\psi_{(l, u)}(x_i)  z_{(l, j, u)}, \psi_{(l, v)}(x_i) z_{(l, j, v)} \right) \\
    & = \sum_{i=1}^n \sum_{u=1}^b \sum_{v=1}^b \phi_{(l, j, u)}^* \phi_{(l, j, v)}^* \mathbb{E}_{q_{\phi^*}}\left[  \psi_{(l, u)}(x_i) \psi_{(l, v)}(x_i) \right] \mathbb{E}_{q_{\phi^*}}\left[ z_{(l, j, u)} z_{(l, j, v)} \right] .
\end{split}
\end{equation*}
For $u \neq v$, $z_{(l, j, u)} \independent z_{(l, j, v)}$ and $\mathbb{E}_{q_{\phi^*}}\left[ z_{(l, j, u)} z_{(l, j, v)} \right] = \mathbb{E}_{q_{\phi^*}}\left[ z_{(l, j, u)} \right] \mathbb{E}_{q_{\phi^*}}\left[ z_{(l, j, v)} \right] = 0$, then:
\begin{equation*}
\begin{split}
    \sigma_{(l, j)}^2 & = \frac{1}{n} \sum_{i=1}^n \sum_{k=1}^b {\mathbb{E}}_{q_{\phi^*}}[\psi_{(l, k)}(x_i)^2] {\phi_{(l, j , k)}^*}^2 \\
    & = \sum_{k=1}^b a_{(l, k)}^2 {\phi_{(l, j , k)}^*}^2 .
\end{split}
\end{equation*}

\noindent The optimization problem (\ref{app-multi-optim2}) is then equivalent to:
\begin{equation}
\label{app-multi-optim3}
\begin{gathered}
    \max_{\phi \in \mathbb{R}^d} \quad \sum_{k=1}^d \log(\phi_k^2) \\
    \textnormal{subject to:} \begin{cases} 
      {\mathbb{E}}_{q_{\phi}}\left[\mathcal{L}_{\mathcal{S}}(w) \right] \leq \tau \\
      \sum_{k=1}^b a_{(l, k)}^2 {\phi_{(l, j , k)}}^2 = \sigma_{(l, j)}^2 \quad \forall \, l \in [|0, L|]; j \in [|1, b_l|] .
   \end{cases}
\end{gathered}
\end{equation}
Indeed, as problem (\ref{app-multi-optim3}) includes more constraints than problem (\ref{app-multi-optim2}), its solution necessarily provides a distribution of lower or equal entropy than $q_{\phi^*}$. However, as the additional constraints are verified by $\phi^*$, $\phi^*$ is the unique solution of problem (\ref{app-multi-optim3}).

We now remove the constraint over the average empirical risk and consider the following alternative optimization problem:
\begin{equation}
\label{app-multi-optim4}
\begin{gathered}
    \max_{\phi \in \mathbb{R}^d} \quad \sum_{k=1}^d \log(\phi_k^2) \\
    \textnormal{subject to:}  \quad
      \sum_{k=1}^b a_{(l, k)}^2 {\phi_{(l, j , k)}}^2 = \sigma_{(l, j)}^2 \quad \forall \, l \in [|0, L|]; j \in [|1, b_l|] .
\end{gathered}
\end{equation}
Considering a similar argument as before, the solution $\phi^{\dagger}$ of problem (\ref{app-multi-optim4}) necessarily provides a distribution of larger or equal entropy than $\phi^*$, i.e.,
\begin{equation}
\label{app-multi-entropy}
    \sum_{k=1}^d \log({\phi^*_k}^2) \leq \sum_{k=1}^d \log({\phi^{\dagger}_k}^2) .
\end{equation}
Moreover, the optimization problem (\ref{app-multi-optim4}) can be decomposed in multiple sub-problems such that:
\begin{equation*}
    \phi^{\dagger} = \overset{L}{\underset{l=0}{\bigotimes}} \overset{b_l}{\underset{j=1}{\bigotimes}} \, \phi^{\dagger}_{(l, j)} , 
\end{equation*}
with $\phi^{\dagger}_{(l, j)} \in \mathbb{R}^b$ for any $l \in [|0, L|]$, $j \in [|1, b_l|]$. The operator $\bigotimes$ is the concatenation operator. Each vector $\phi^{\dagger}_{(l, j)}$ is a solution of the following optimization sub-problem:
\begin{equation*}
\label{app-multi-optim5}
\begin{gathered}
    \max_{\phi_{(l, j)} \in \mathbb{R}^b} \; \sum_{k=1}^b \log(\phi_{(l, j, k)}^2) \\
    \textnormal{subject to:} \quad
     \sum_{k=1}^b a_{(l, k)}^2 {\phi_{(l, j , k)}}^2 = \sigma_{(l, j)}^2 .
\end{gathered}
\end{equation*}
Then, by writing the Karush–Kuhn–Tucker conditions of the above optimization problem we get the following expression for the solution:
\begin{equation*}
    {\phi^{\dagger}_{(l, j, k)}}^2 = \frac{\sigma^2_{(l, j)}}{b \, a^2_{(l, k)}} \quad \forall \, k \in [|1, b|] .
\end{equation*}
Thus, $\phi^{\dagger}$ verifies Equation (\ref{thm-multi-cond-phi-eq}).

We now need to show that $\phi^{\dagger}$ provides the same empirical risk than $\phi^*$. For this purpose, we consider $l \in [|0, L-1|]$ and assume that the first and the second moments of the neuron activation in layer $l$ are the same for $\phi^*$ and $\phi^{\dagger}$, we will then show that this property is true in layer $l+1$. Let's then assume that:
\begin{gather}
\label{app-assum-mom1}
\sum_{i=1}^n {\mathbb{E}}_{q_{\phi^*}}\left[ \psi_{(l, j)}(x_i) \right] = \sum_{i=1}^n {\mathbb{E}}_{q_{\phi^{\dagger}}}\left[ \psi_{(l, j)}(x_i) \right] \quad \forall \, j \in [|1, b|] \\
\label{app-assum-mom2}
\sum_{i=1}^n {\mathbb{E}}_{q_{\phi^*}}\left[ \psi_{(l)}(x_i) \psi_{(l)}(x_i)^T \right] = \sum_{i=1}^n {\mathbb{E}}_{q_{\phi^{\dagger}}}\left[ \psi_{(l)}(x_i) \psi_{(l)}(x_i)^T \right] .
\end{gather}
Let's define $U_i = (U_{i1}, ..., U_{ib})$ with $U_{ij} = \psi_{(l)}(x_i)^T w_{(l, j)} \; \forall i \in [|1, n|], \; \forall j \in [|1, b|]$. Considering Equation (\ref{app-assum-mom1}), for any $j \in [|1, b|]$, we have:
\begin{equation}
\label{app-mean-eq}
\begin{split}
    \sum_{i=1}^n {\mathbb{E}}_{q_{\phi^{\dagger}}} \left[ U_{ij} \right]
    = \sum_{i=1}^n {\mathbb{E}}_{q_{\phi^{\dagger}}} \left[ \psi_{(l)}(x_i) \right]^T \overline{w}_{(l, j)}
     = \sum_{i=1}^n {\mathbb{E}}_{q_{\phi^*}} \left[ \psi_{(l)}(x_i) \right]^T \overline{w}_{(l, j)}
    = \sum_{i=1}^n {\mathbb{E}}_{q_{\phi^*}} \left[ U_{ij} \right] .
\end{split}
\end{equation}
Moreover, for any $k, j \in [|1, b|]$ such that $k \neq j$, we have:
\begin{equation}
\label{app-ij-cov-equality}
\begin{split}
     \sum_{i=1}^n {\mathbb{E}}_{q_{\phi^{\dagger}}} \left[ U_i U_i^T \right]_{k j} & =  \sum_{i=1}^n {\mathbb{E}}_{q_{\phi^{\dagger}}} \left[ U_{ik} U_{ij} \right] \\
    & =  \sum_{i=1}^n {\mathbb{E}}_{q_{\phi^{\dagger}}} \left[ \left( \psi_{(l)}(x_i)^T w_{(l, k)} \right) \left(  \psi_{(l)}(x_i)^T w_{(l, j)} \right) \right] \\
    & =  \sum_{i=1}^n \sum_{u = 1}^b \sum_{v = 1}^b {\mathbb{E}}_{q_{\phi^{\dagger}}} \left[ \psi_{(l, u)}(x_i) \psi_{(l, v)}(x_i) \right] {\mathbb{E}}_{q_{\phi^{\dagger}}} \left[w_{(l, k, u)}  w_{(l, j, v)} \right] \\
    & =  \sum_{u = 1}^b \sum_{v = 1}^b \left( \sum_{i=1}^n {\mathbb{E}}_{q_{\phi^{\dagger}}} \left[ \psi_{(l, u)}(x_i) \psi_{(l, v)}(x_i) \right] \right) \overline{w}_{(l, k, u)}  \overline{w}_{(l, j, v)} \\
    & =  \sum_{i=1}^n \overline{w}_{(l, k)}^T \left( \sum_{i=1}^n {\mathbb{E}}_{q_{\phi^{\dagger}}} \left[ \psi_{(l)}(x_i) \psi_{(l)}(x_i)^T \right] \right) \overline{w}_{(l, j)} \\
    & =  \sum_{i=1}^n \overline{w}_{(l, k)}^T \left( \sum_{i=1}^n {\mathbb{E}}_{q_{\phi^*}} \left[ \psi_{(l)}(x_i) \psi_{(l)}(x_i)^T \right] \right) \overline{w}_{(l, j)} \quad (\textnormal{considering Equation (\ref{app-assum-mom2})}) \\
    & =  \sum_{i=1}^n {\mathbb{E}}_{q_{\phi^*}} \left[ U_i U_i^T \right]_{k j} .
\end{split}
\end{equation}
Then, for any $j \in [|1, b|]$, we have:
\begin{equation*}
\begin{split}
     \sum_{i=1}^n {\mathbb{E}}_{q_{\phi^{\dagger}}} \left[ U_i U_i^T \right]_{j j} & = \sum_{i=1}^n {\mathbb{E}}_{q_{\phi^{\dagger}}} \left[ \left( \psi_{(l)}(x_i)^T w_{(l, j)} \right)^2 \right] \\
    & =  \sum_{i=1}^n \left( {\mathbb{V}}_{q_{\phi^{\dagger}}} \left[\psi_{(l)}(x_i)^T (w_{(l, j)} -\overline{w}_{(l, j)}) \right] + {\mathbb{E}}_{q_{\phi^{\dagger}}} \left[ \left( \psi_{(l)}(x_i)^T \overline{w}_{(l, j)} \right)^2 \right] \right) \\
    & =  \sum_{i=1}^n \left( \sum_{k=1}^b {\mathbb{E}}_{q_{\phi^{\dagger}}}[\psi_{(l, k)}(x_i)^2] {\phi_{(l, j , k)}^{\dagger}}^2 + \overline{w}_{(l, j)}^T {\mathbb{E}}_{q_{\phi^{\dagger}}} \left[ \psi_{(l)}(x_i) \psi_{(l)}(x_i)^T  \right] \overline{w}_{(l, j)} \right) \\
    & =  \sum_{i=1}^n \left( \sum_{k=1}^b {\mathbb{E}}_{q_{\phi^*}}[\psi_{(l, k)}(x_i)^2] {\phi_{(l, j , k)}^{\dagger}}^2 + \overline{w}_{(l, j)}^T {\mathbb{E}}_{q_{\phi^*}} \left[ \psi_{(l)}(x_i) \psi_{(l)}(x_i)^T  \right] \overline{w}_{(l, j)} \right) .
\end{split}
\end{equation*}
Where the last equality is deducted from Equation (\ref{app-assum-mom2}). Moreover, the first term can be developed as follows:
\begin{equation*}
\begin{split}
    \sum_{i=1}^n \sum_{k=1}^b {\mathbb{E}}_{q_{\phi^*}}[\psi_{(l, k)}(x_i)^2] {\phi_{(l, j , k)}^{\dagger}}^2 & = \sum_{k=1}^b  n \, a^2_{(l, k)} {\phi_{(l, j , k)}^{\dagger}}^2 \\
    & = \sum_{k=1}^b  n \, a^2_{(l, k)} \frac{\sigma_{(l, j)}^2}{b \, a^2_{(l, k)}} \quad \textnormal{by definition of $\phi^{\dagger}$} \\
    & =  n \, \sigma_{(l, j)}^2 \\
    & = \sum_{i=1}^n {\mathbb{V}}_{q_{\phi^*}}\left[\psi_{(l)}(x_i)^T \left(w_{(l, j)} - \overline{w}_{(l, j)} \right) \right] .
\end{split}
\end{equation*}
We then deduce that:
\begin{equation}
\label{app-ii-cov-equality}
\sum_{i=1}^n {\mathbb{E}}_{q_{\phi^{\dagger}}} \left[ U_i U_i^T \right]_{j j} = \sum_{i=1}^n {\mathbb{E}}_{q_{\phi^*}} \left[ U_i U_i^T \right]_{j j}  .
\end{equation}
Equations (\ref{app-ij-cov-equality}) and (\ref{app-ii-cov-equality}) implies that $\sum_{i=1}^n {\mathbb{E}}_{q_{\phi^{\dagger}}} \left[ U_i U_i^T \right] = \sum_{i=1}^n {\mathbb{E}}_{q_{\phi^*}} \left[ U_i U_i^T \right]$. Considering this last equality, Equation (\ref{app-mean-eq}) and Assumption (\ref{activ-func}), we then conclude that:
\begin{gather}
\label{app-assum-1-equiv}
    \sum_{i=1}^n  {\mathbb{E}}_{q_{\phi^{\dagger}}}\left[ \zeta \left( U_i \right) \right] = \sum_{i=1}^n  {\mathbb{E}}_{q_{\phi^*}}\left[  \zeta \left( U_i \right) \right] \\
\label{app-assum-2-equiv}
    \sum_{i=1}^n  {\mathbb{E}}_{q_{\phi^{\dagger}}}\left[ \zeta \left( U_i \right) \zeta \left( U_i \right)^T \right] = \sum_{i=1}^n  {\mathbb{E}}_{q_{\phi^*}}\left[  \zeta \left( U_i \right) \zeta \left( U_i \right)^T \right] .
\end{gather}
Where,
\begin{equation*}
\zeta \left( U_i \right) = \left(\zeta \left( \psi_{(l)}(x_i) w_{(l, 1)}  \right), ..., \zeta \left( \psi_{(l)}^T(x_i) w_{(l, p)} \right) \right) = \psi_{(l+1)}(x_i) .
\end{equation*}
Then Equations (\ref{app-assum-1-equiv}) and (\ref{app-assum-2-equiv}) are equivalent to the moments' equality in Equations (\ref{app-assum-mom1}) and (\ref{app-assum-mom2}) applied to layer $l+1$. As these equations are true for $l=0$, then, by recurrence, we have Equations (\ref{app-assum-mom1}) and (\ref{app-assum-mom2}) for $l = L+1$, then:
\begin{gather*}
    \sum_{i=1}^n  {\mathbb{E}}_{q_{\phi^{\dagger}}}\left[ h(x_i) \right] = \sum_{i=1}^n  {\mathbb{E}}_{q_{\phi^*}}\left[  h(x_i) \right] \quad \textnormal{and} \quad \sum_{i=1}^n  {\mathbb{E}}_{q_{\phi^{\dagger}}}\left[ h(x_i)^2 \right] = \sum_{i=1}^n  {\mathbb{E}}_{q_{\phi^*}}\left[  h(x_i)^2 \right] .
\end{gather*}
Moreover, by developing the empirical risk, we have:
\begin{gather*}
    \mathcal{L}_{\mathcal{S}}(w) = \sum_{i=1}^n  \left( h(x_i) - y_i \right)^2 = \sum_{i=1}^n  \left( h(x_i)^2 - 2 h(x_i) y_i + y_i^2 \right) .
\end{gather*}
From which we deduce that:
\begin{gather*}
    {\mathbb{E}}_{q_{\phi^{\dagger}}}\left[ \mathcal{L}_{\mathcal{S}}(w) \right] = {\mathbb{E}}_{q_{\phi^*}}\left[ \mathcal{L}_{\mathcal{S}}(w) \right] .
\end{gather*}
Then, considering Equation (\ref{app-multi-entropy}) and the uniqueness of the solution of Problem (\ref{app-multi-optim2}), we conclude that $\phi^{\dagger} = \phi^*$.

\end{proof}

\section{Hyperparameters Impact Analysis}
\label{app-hyp-impact}

% \subsection{MaxWEnt-SVD}

\begin{figure}[!htb]
\center
\includegraphics[width=0.85\linewidth]{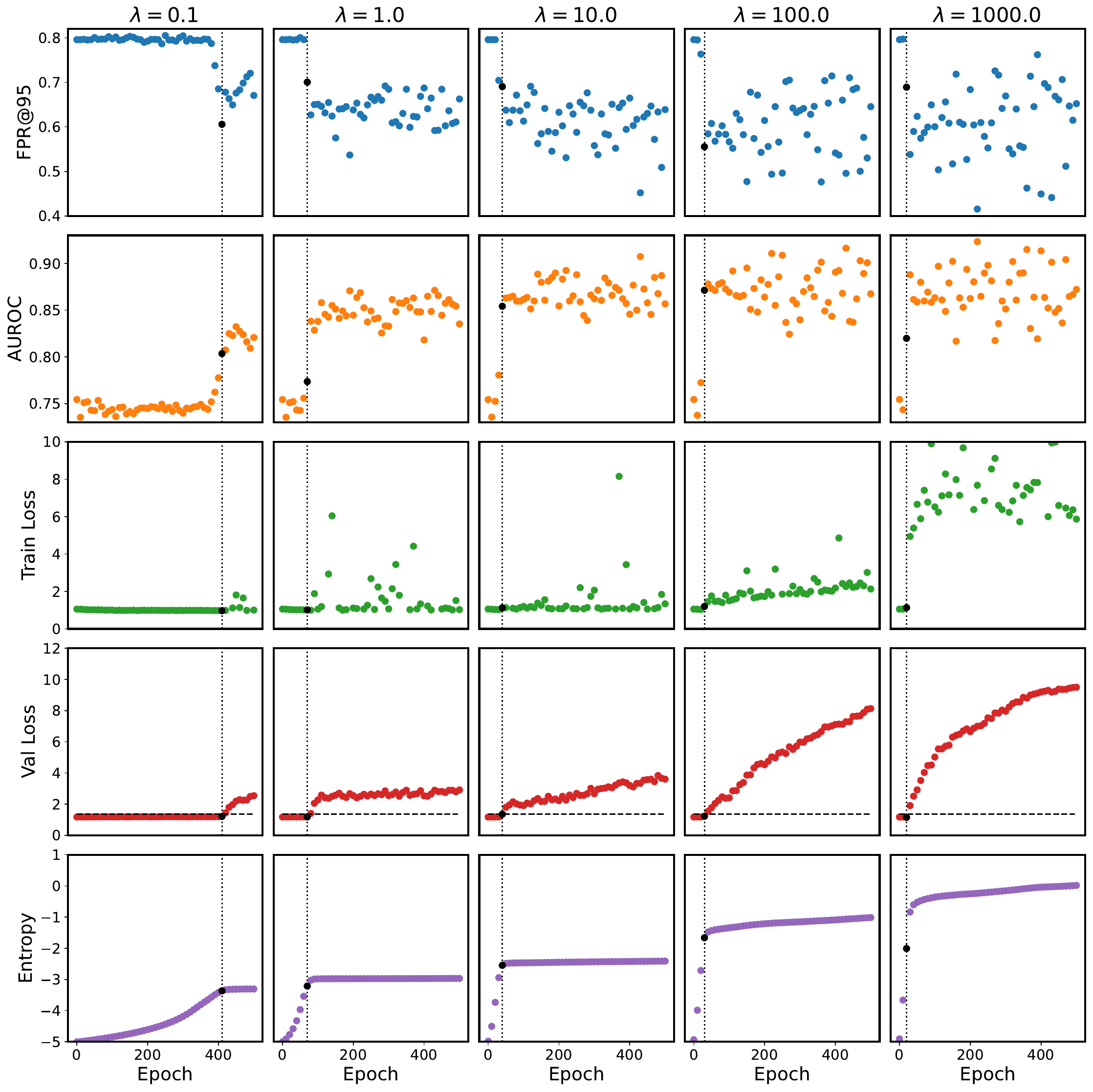}
\caption{Hyperparameter analysis of MaxWEntSVD for the Weather-Shift experiment. Evolution of the five metrics: FPR@95, AUROC, training loss, validation loss, and weight entropy across epochs for different values of $\lambda$ (note that a scaling factor of $1/d$ is further applied to $\lambda$). The horizontal dashed lines in the validation loss plots represent the threshold $\tau$ and the dotted vertical lines indicate the stopping epoch (after which the validation loss exceeds $\tau$). We observe that higher $\lambda$ values lead to better FPR and AUROC, along with higher entropy. At a certain epoch, entropy increases rapidly, coinciding with improved OOD detection performance but also an increase in validation loss. For high $\lambda$ values and late epochs, the variance in OOD detection metrics between epochs becomes large. The stopping criterion ensures that training stops when AUROC is high but with reduced variance. For $\lambda = 1000$, the training loss is large and fluctuating, with high variance in OOD detection metrics, suggesting that the training loss stability should be taken into account when selecting the $\lambda$ value.}
\label{app-citycam-weather-svd}
\end{figure}

\clearpage

\begin{figure}[!htb]
\center
\includegraphics[width=\linewidth]{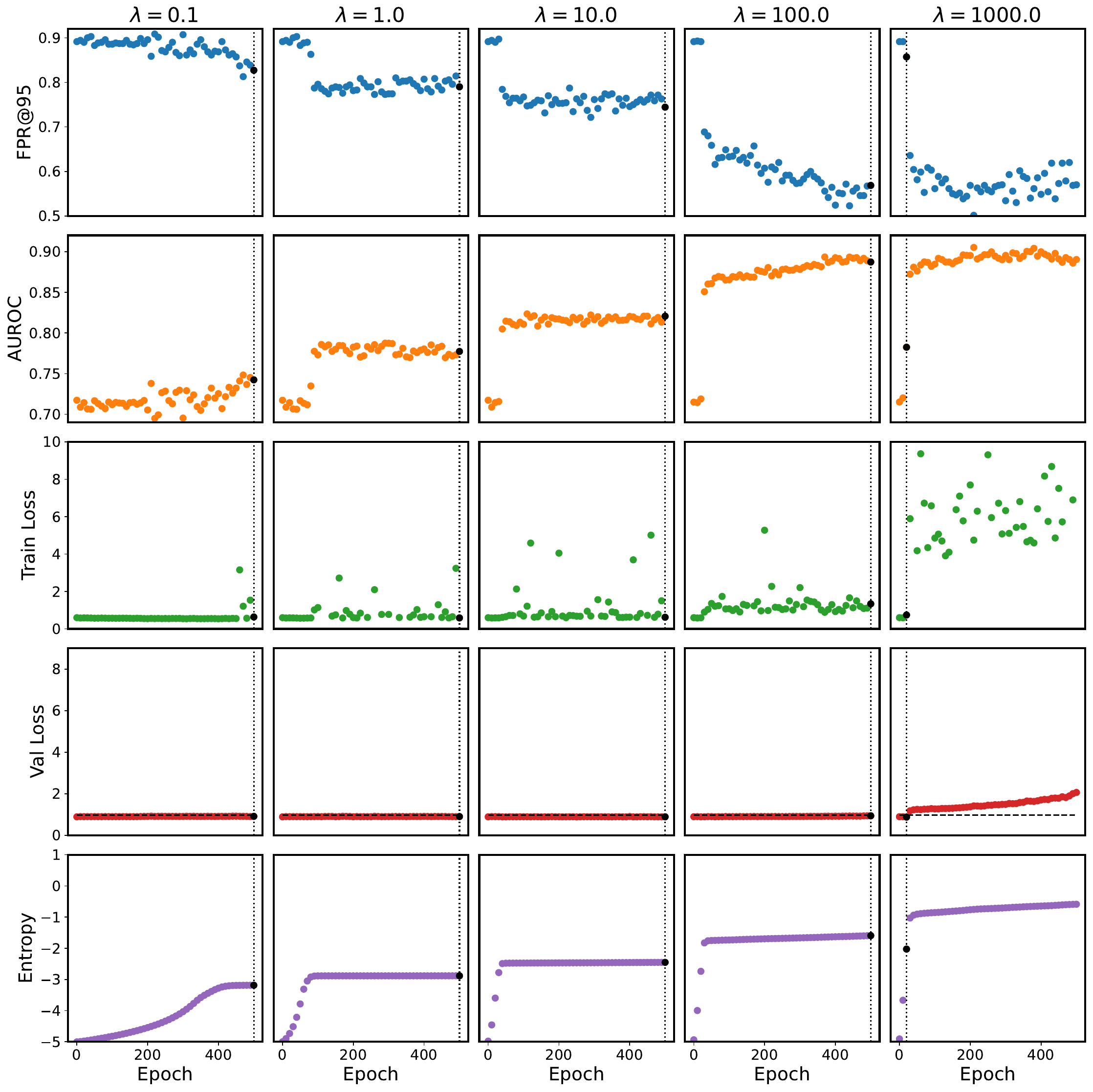}
\caption{Hyperparameter analysis of MaxWEntSVD for the BigBus-Shift experiment. Evolution of the five metrics: FPR@95, AUROC, training loss, validation loss, and entropy across epochs for different values of $\lambda$ (note that a scaling factor of $1/d$ is further applied to $\lambda$). The horizontal dashed lines in the validation loss plots represent the threshold $\tau$ and the dotted vertical lines indicate the stopping epoch (after which the validation loss exceeds $\tau$). We observe that higher $\lambda$ values lead to better FPR and AUROC, along with higher entropy. At a certain epoch, entropy increases rapidly, coinciding with improved OOD detection performance. The increase in validation loss is only observed for $\lambda = 1000$. In this setting, the OOD detection performance remains stable for late epochs although the training loss is unstable.}
\label{app-citycam-bigbus-svd}
\end{figure}

\clearpage

\begin{figure}[ht]
\center
\includegraphics[width=\linewidth]{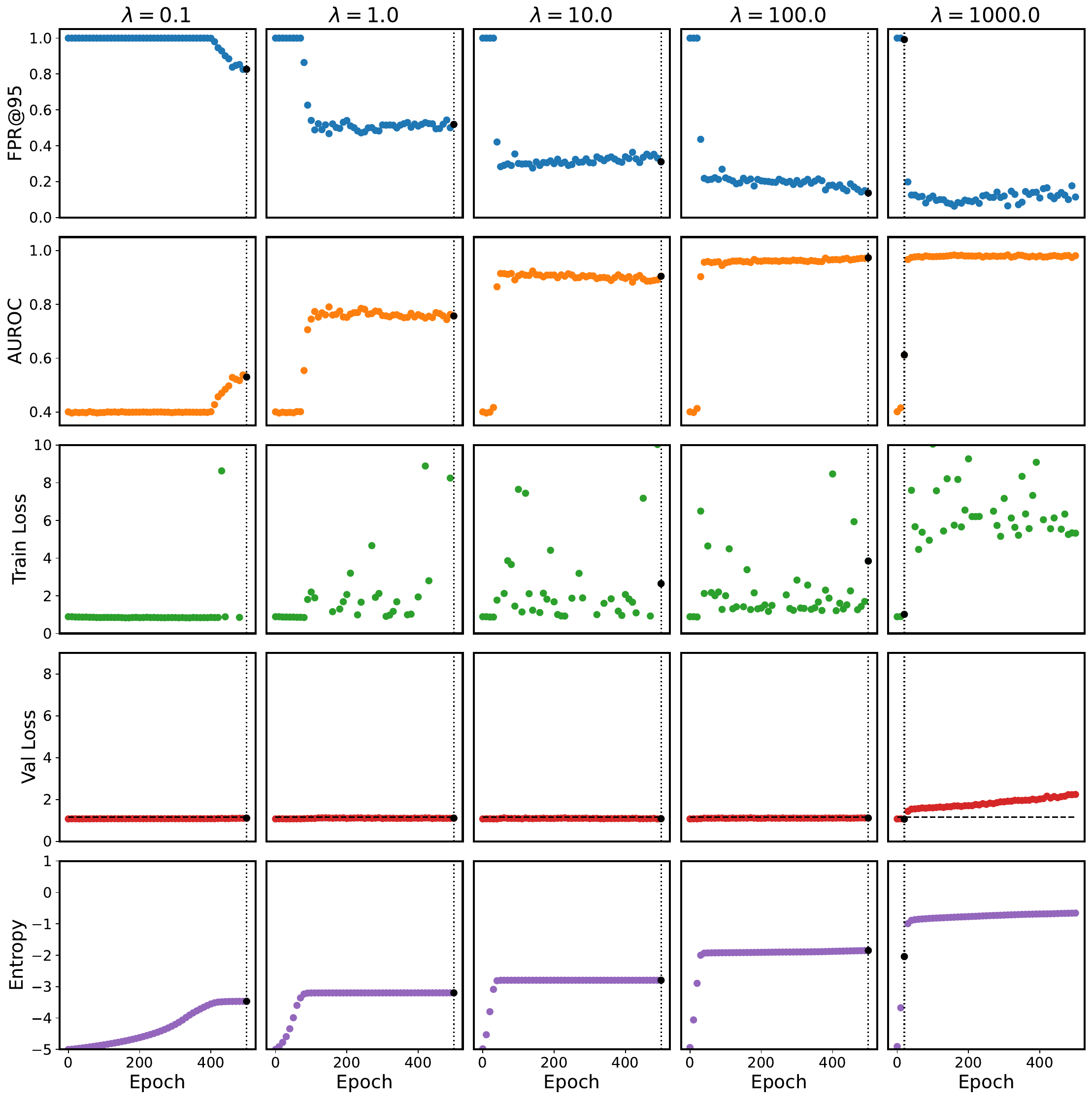}
\caption{Hyperparameter analysis of MaxWEntSVD for the Cameras-Shift experiment. Evolution of the five metrics: FPR@95, AUROC, training loss, validation loss, and entropy across epochs for different values of $\lambda$ (note that a scaling factor of $1/d$ is further applied to $\lambda$). The horizontal dashed lines in the validation loss plots represent the threshold $\tau$ and the dotted vertical lines indicate the stopping epoch (after which the validation loss exceeds $\tau$). We observe that higher $\lambda$ values lead to better FPR and AUROC, along with higher entropy. At a certain epoch, entropy increases rapidly, coinciding with improved OOD detection performance. The increase in validation loss is only observed for $\lambda = 1000$. In this setting, the OOD detection performance remains stable for late epochs although the training loss is unstable.}
\label{app-citycam-cameras-svd}
\end{figure}

\clearpage

% \subsection{MaxWEnt}
The analysis of the hyperparameter impact across the three experiments Weather, BigBus, and Camera Shift (Figures \ref{app-citycam-weather-svd}, \ref{app-citycam-bigbus-svd}, \ref{app-citycam-cameras-svd}) demonstrates that larger values of $\lambda$ generally result in improved OOD detection performances. In the Weather Shift experiment, however, a significant variance in OOD detection metrics is observed for high $\lambda$ values during later epochs. Furthermore, for $\lambda = 1000$, the training loss is highly unstable across all three experiments, indicating that while larger $\lambda$ values tend to improve performance, excessively large values can lead to instability during training. Additionally, it should be noted that, in the Weather Shift experiment, the stopping criterion effectively selects an epoch with lower variance in OOD metrics and a reasonable validation loss.

The detailed metrics obtained at the stopping epoch are reported for all $\lambda$ values in Table \ref{tab:metrics:summary}. The results reveal that weight entropy is closely aligned with OOD performance: higher entropy corresponds to better AUROC and lower FPR. Although $\lambda = 1000$ yields the highest entropy after 500 epochs, the stopping criterion stops the training at an epoch where the entropy is lower than that for $\lambda = 100$. This observation suggests that hyperparameter selection should be guided by both entropy and validation loss. Specifically, users should select a $\lambda$ value that produces a model with the highest entropy, while ensuring the validation loss remains below the threshold $\tau$. Importantly, it should be highlighted that the lowest validation loss does not correspond to the best OOD detection performance.

In summary, while OOD metrics tend to improve with more epochs, the variance between epochs also increases in some cases. OOD performance and entropy generally increase with higher $\lambda$ values, but when $\lambda$ is too large, training becomes unstable and validation loss quickly exceeds the threshold. Therefore, the recommended approach is to use the stopping criterion to maintain a reasonable validation loss and select $\lambda$ based on the resulting entropy. A well-performing model is characterized by high entropy and reasonable validation loss. Both metrics can be measured without using OOD data, which enables a practical approach to select the hyperparameter $\lambda$.

\begin{table}[ht]
\centering
\footnotesize
\begin{tabular}{l|cccc|cccc|cccc}
\toprule
\multirow{2}{*}{\backslashbox{$\lambda$}{Exp}} & \multicolumn{4}{c|}{BigBus} & \multicolumn{4}{c|}{Cameras} & \multicolumn{4}{c}{Weather} \\
 & AUC & FPR95 & Ent. & Val & AUC & FPR95 & Ent. & Val & AUC & FPR95 & Ent. & Val \\
\midrule
0.1 & 0.74 & 0.83 & -3.19 & 0.91 & 0.53 & 0.83 & -3.47 & 1.11 & 0.8 & 0.61 & -3.36 & 1.21 \\
1.0 & 0.78 & 0.79 & -2.88 & 0.9 & 0.76 & 0.52 & -3.2 & 1.11 & 0.77 & 0.7 & -3.21 & 1.18 \\
10.0 & 0.82 & 0.74 & -2.45 & 0.89 & 0.9 & 0.31 & -2.8 & 1.09 & 0.85 & 0.69 & -2.55 & 1.35 \\
100.0 & \textbf{0.89} & \textbf{0.57} & \textbf{-1.59} & 0.94 & \textbf{0.97} & \textbf{0.14} & \textbf{-1.85} & 1.12 & \textbf{0.87} & \textbf{0.56} & \textbf{-1.66} & 1.22 \\
1000.0 & 0.78 & 0.86 & -2.03 & \textbf{0.88} & 0.61 & 0.99 & -2.04 & \textbf{1.07} & 0.82 & 0.69 & -2.01 & \textbf{1.15} \\
\bottomrule
\end{tabular}
\caption{Metrics for different values of $\lambda$ obtained at the stopping epoch. The table presents the AUC, FPR95, weight entropy (Ent.), and validation NLL (Val) for the three datasets BigBus, Cameras, and Weather. Lower FPR95 values and higher AUC scores indicate better OOD detection performance, while lower validation NLL account for better ID performance. The entropy measures the diversity of the weight distribution. We observe that the $\lambda$ providing the highest weight entropy at the stopping epoch achieves the best AUC and FPR. In contrast, the $\lambda$ corresponding to the lowest validation loss does not yield the best overall performance.}
\label{tab:metrics:summary}
\end{table}

% \begin{table}[]
% \centering
% \footnotesize
% \begin{tabular}{lcccccccccccc}
% \toprule
% dataset & \multicolumn{4}{r}{BigBus} & \multicolumn{4}{r}{Cameras} & \multicolumn{4}{r}{Weather} \\
%  & AUC & FPR95 & Ent. & Val & AUC & FPR95 & Ent. & Val & AUC & FPR95 & Ent. & Val \\
% \midrule
% 0.1 & 0.74 & 0.83 & -3.19 & 0.91 & 0.53 & 0.83 & -3.47 & 1.11 & 0.82 & 0.67 & -3.31 & 2.54 \\
% 1.0 & 0.78 & 0.79 & -2.88 & 0.9 & 0.76 & 0.52 & -3.2 & 1.11 & 0.84 & 0.66 & -2.97 & 2.91 \\
% 10.0 & 0.82 & 0.74 & -2.45 & 0.89 & 0.9 & 0.31 & -2.8 & 1.09 & 0.86 & 0.64 & -2.41 & 3.6 \\
% 100.0 & 0.89 & 0.57 & -1.59 & 0.94 & 0.97 & 0.14 & -1.85 & 1.12 & 0.87 & 0.65 & -1.01 & 8.13 \\
% 1000.0 & 0.89 & 0.57 & -0.59 & 2.06 & 0.98 & 0.11 & -0.66 & 2.24 & 0.87 & 0.65 & 0.02 & 9.5 \\
% \bottomrule
% \end{tabular}
% \caption{\textbf{Hyperparameter analysis of MaxWEntSVD for the Weather-Shift experiment}. Evolution of the five metrics: FPR\@95, AUROC, training loss, validation loss, and entropy across epochs for different values of $\lambda$. The horizontal dashed lines in the validation loss plots represent the threshold $\tau$, while the dotted vertical lines indicate the stopping epoch (after which the validation loss exceeds $\tau$).}
% \label{tab:my_label}
% \end{table}

\clearpage

\section{Achieving MaxWEnt's Behavior in Bayesian Neural Networks}
\label{app-bnn-setting}

In this section, we conduct an ablation study to identify the key components that enable a Bayesian Neural Network (BNN) with a Gaussian prior to exhibit behaviors similar to those of MaxWEnt and MaxWEnt-SVD. We identified three critical factors: frozen mean, large prior variance, and SVD parameterization. Experiments were performed on the synthetic classification dataset and the eight UCI regression datasets introduced in Section \ref{impl-choice}.

In this study, we consider a non-MC-Dropout BNN trained with stochastic variational inference \citep{hoffman2013stochasticVI} using the reparameterization trick \citep{Kingma2013VAE}. We use the KL-weighted ELBO optimization involving a $\lambda$ parameter to weight the KL term as follows:
\begin{equation*}
    \max_{\theta \in \mathbb{R}^D} \; {\mathbb{E}}_{q_{\theta}}\left[ \sum_{(x, y) \in \mathcal{S}} \log(p(y | h_w(x)) \right] - \lambda D_{\textnormal{KL}} \left( q_{\theta}(w), p(w) \right).
\end{equation*}
This formulation is commonly adopted in practice for the variational Bayes approach to Bayesian neural networks \citep{wenzel2020howgoodBayesPosterior}. The use of a trade-off parameter enables the direct comparison with the MaxWEnt objective. In the following, we adopt the same trade-off factor as for MaxWEnt, such that $\lambda = 10 \frac{n}{d}$. We consider a Gaussian isotropic prior $p(w)$ of variance $\sigma^2$ and a Gaussian posterior $q_{\theta}(w) \sim \mathcal{N}(\mu, \Sigma)$, with $\theta = (\mu, \Sigma)$ the posterior parameters.

% To ease the comparison with the MaxWEnt setting, we consider the same trade-off factor, such that $\lambda = 10 \frac{n}{D}$.

The following settings are examined:

\begin{itemize} 
    \item \textbf{SVD vs. No SVD}: In both settings, the posterior variance $\Sigma$ is parameterized by the vector $\phi \in \mathbb{R}^d$, such that $\Sigma = \text{diag}(\phi^2)$ (No SVD) or $\Sigma = V \text{diag}(\phi^2) V^T$ (SVD), using the matrix $V$ defined in Section \ref{svd-param-section}.
    
    \item \textbf{Frozen vs. Unfrozen Mean}: For the frozen mean case, the posterior mean $\mu$ is trained along with $\Sigma$; the prior is centered on $0$ such that $p(w) \sim \mathcal{N}(0, \sigma^2 \text{Id}_d)$, which corresponds to the usual setting for BNNs. For the unfrozen mean case, the posterior mean is equal to the weights of a pretrained neural network, such that $\mu = \overline{w}$ is fixed during training, as implemented in MaxWEnt. Accordingly, the prior is set to $\mathcal{N}(\overline{w}, \sigma^2 \text{Id}_d)$.
    
    \item \textbf{Large vs. Small Prior Variance $\boldsymbol{\sigma}^2$}: We have highlighted in Section \ref{bayesian-discussion} that the MaxWEnt optimization is related to the BNN training with Gaussian prior of large variance. To showcase the impact of the prior variance we consider both setting $\sigma^2 \ll 1$ ($\sigma^2=0.1)$ and $\sigma^2 \gg 1$ ($\sigma^2 = 10^{10}$).
\end{itemize}

It should be noted that the SVD parameterization requires a frozen posterior mean. Hence, the implementation choices above results in six different setup combinations. For all experiments, all networks are trained during 500 epochs and implement the stopping criterion presented in Section \ref{imp-stopping-criterion}, providing a consistent basis for comparison.

\subsection{Synthetic Classification}

\begin{table}[!ht]
\small
\centering
\begin{tabular}{|l|l|c|c|c|}
\hline
 \multicolumn{2}{|c|}{} & \textbf{Low prior variance} & \textbf{Large prior variance} \\
 \hline
\multirow{2}{*}[-12.5ex]{\rotatebox[origin=c]{90}{\textbf{No SVD}}} & \rotatebox[origin=c]{90}{\textbf{Unfrozen Mean}} & \adjustbox{valign=c, margin=2mm}{\includegraphics[width=0.37\linewidth]{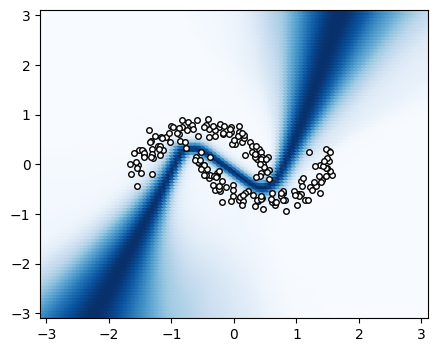}} & \adjustbox{valign=c, margin=2mm}{\includegraphics[width=0.37\linewidth]{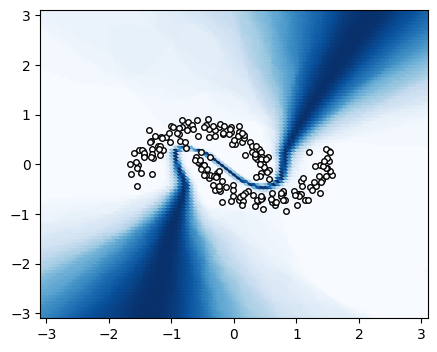}}  \\ \cline{2-4}
& \rotatebox[origin=c]{90}{\textbf{Frozen Mean}} & \adjustbox{valign=c, margin=2mm}{\includegraphics[width=0.37\linewidth]{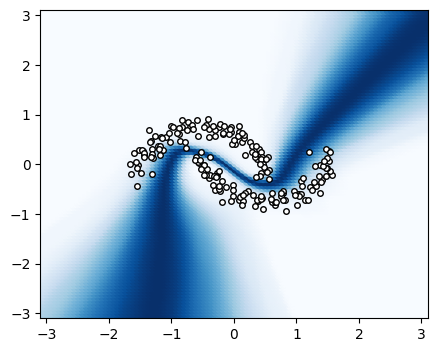}} & \adjustbox{valign=c, margin=2mm}{
\begin{tikzpicture}
    \node[inner sep=0pt] (image) at (0,0) {\includegraphics[width=0.37\linewidth]{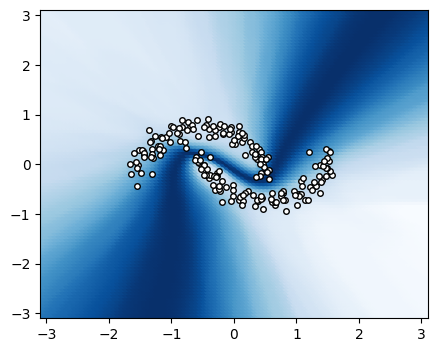}};
    \node[fill=white, draw=black, rounded corners, text width=4cm, align=center] at (0.2, 1.9) {MaxWEnt};
\end{tikzpicture}
} \\
\hline
\rotatebox[origin=c]{90}{\textbf{SVD}} & \rotatebox[origin=c]{90}{\textbf{Frozen Mean}} & \adjustbox{valign=c, margin=2mm}{\includegraphics[width=0.37\linewidth]{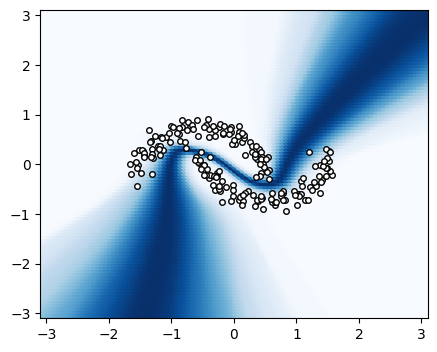}} & \adjustbox{valign=c, margin=2mm}{
\begin{tikzpicture}
    \node[inner sep=0pt] (image) at (0,0) {\includegraphics[width=0.37\linewidth]{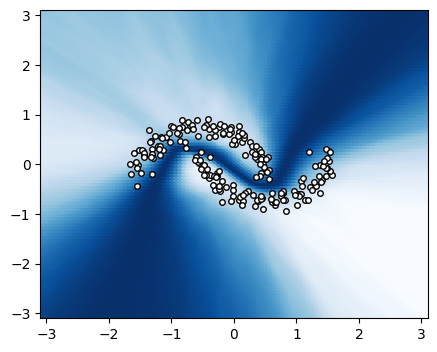}};
    \node[fill=white, draw=black, rounded corners, text width=4cm, align=center] at (0.2, 1.9) {MaxWEnt-SVD};
\end{tikzpicture}
} \\
\hline
\end{tabular}
\caption{Comparison of BNN settings on the synthetic classification dataset. White points represent training data, and shades of blue indicate uncertainty.}
\label{toy-abla-rez}
\end{table}

We consider the synthetic classification dataset described in Section \ref{synth-expe}. The results for the six setting combinations are reported in Figure \ref{toy-abla-rez}. The left and right columns respectively present the results obtained with low and large variance prior. We observe that for all settings enlarging the prior variance increases the contrast between in-distribution and out-of-distribution uncertainty. Additionally, the results highlight that the two settings ``frozen mean'' and SVD-parameterization, used for the MaxWEnt and MaxWEnt-SVD approaches, improve the epistemic uncertainty quantification.

\subsection{UCI datasets}

{\renewcommand{\arraystretch}{1.3}
\begin{table}[ht]
\centering
\small
\begin{tabular}{|l|l|c|c|c|c|c|c|}
\cline{3-8}
\multicolumn{2}{c|}{} & \multicolumn{4}{c|}{No SVD} & \multicolumn{2}{c|}{SVD} \\
\cline{3-8}
\multicolumn{2}{c|}{} & \multicolumn{2}{c|}{Unfrozen mean} & \multicolumn{2}{c|}{Frozen mean} & \multicolumn{2}{c|}{Frozen mean}  \\
\cline{3-8}
\multicolumn{2}{c|}{} & $\sigma^2 \ll 1$ & $\sigma^2 \gg 1$ & $\sigma^2 \ll 1$ & $\sigma^2 \gg 1$ & $\sigma^2 \ll 1$ & $\sigma^2 \gg 1$ \\
\hline
\multirow{2}{*}[-.5ex]{\rotatebox[origin=c]{90}{AUC}} & extrapol & 0.659 & 0.692 & 0.761 & 0.775 & 0.795 & \textbf{0.843} \\
& interpol & 0.450 & 0.556 & 0.524 & 0.559 & 0.656 & \textbf{0.783} \\
\hline
\multirow{2}{*}[-.5ex]{\rotatebox[origin=c]{90}{FPR}} & extrapol & 0.698 & 0.639 & 0.594 & 0.576 & 0.503 & \textbf{0.407} \\
& interpol & 0.950 & 0.897 & 0.895 & 0.852 & 0.765 & \textbf{0.535} \\
\hline
\multirow{2}{*}[-.5ex]{\rotatebox[origin=c]{90}{Val}} & extrapol & -0.318 & -0.100 & -0.638 & -0.623 & \textbf{-0.649} & -0.62 \\
& interpol & -0.169 & 0.633 & -0.359 & -0.342 & \textbf{-0.409} & -0.371 \\
\hline
\end{tabular}

\caption{Comparison of the BNN settings for the UCI experiments. The table presents the average OOD detection metrics (AUC, FPR) and Validation NLL (Val) across the 8 UCI datasets for interpolation (interpol) and extrapolation (extrapol) settings. Higher AUC and lower FPR are better. Columns compare BNN settings with and without SVD parameterization and ``frozen mean'', under small ($\sigma^2 \ll 1$) and large ($\sigma^2 \gg 1$) prior variances.}
\label{tab:uci:bnn}
\end{table}
}

Table \ref{tab:uci:bnn} presents the OOD detection performance for different BNN settings on the 8 UCI datasets. We report our observations as follows:
\begin{itemize}
    \item \textbf{Effectiveness of large variance prior.} For frozen and unfrozen mean, SVD and No SVD settings, the use of a large variance prior instead of a low variance significantly improves the OOD detection performances (e.g., for the interpolation experiments with the SVD parameterization the FPR drops from 0.765 to 0.535).
    \item \textbf{Impact of frozen mean.} Freezing the mean improves the OOD detection performances compared to the unfrozen mean setting. For instance, with large prior variance in the extrapolation experiments, the FPR decreases from 0.639 to 0.576 when using a frozen mean. A similar effect is observed in the synthetic experiment, where the uncertainty between the two classes is lower for the unfrozen mean setting. A possible explanation to this phenomenon is that using a frozen mean shifts the training focus to the scale parameters, which are directly related to weight entropy, hence promoting an increase in entropy.    
    \item \textbf{Impact of SVD parameterization.} Incorporating the SVD parameterization to the BNN significantly increase the OOD detection performances, even with low prior variance. Further improvements are obtained when enlarging the prior variance.
    \item \textbf{The validation NLL is smaller when using the low variance prior.} The use of a large prior consistently degrades the validation NLL. For example, in the Unfrozen mean case, NLL increases from -0.169 to 0.633 in the interpolation setting. This observation highlights that validation NLL is not a suitable metric for promoting weight entropy and improving OOD performances. Interestingly, the best validation NLL is obtained with the SVD parameterization and low prior variance, suggesting that traditional BNNs could benefit from SVD parameterization to improve in-distribution performance.
\end{itemize}
In conclusion, enlarging the prior variance, and then the weight entropy, effectively improves OOD detection performances in all settings. While the SVD parameterization with small prior variance already improves OOD detection performance, increasing the variance further yields a significant improvement, especially in the challenging interpolation setting.

\end{document}